\newtheoremstyle{mystyle}
  {}
  {}
  {\itshape}
  {}
  {\bfseries}
  {.}
  { }
  {\thmname{#1}\thmnumber{ #2}\thmnote{ (#3)}}
\theoremstyle{mystyle}
\newtheorem{theorem}{Theorem}
\newtheorem{remark}{Remark}
\newtheorem{prop}{Proposition}
\newtheorem{cor}{Corollary}
\newtheorem{lem}{Lemma}
\newtheorem{definition}{Definition}
\renewcommand{\epsilon}{{\varepsilon}}
\title{Double Descent Risk and Volume Saturation Effects:\\A Geometric Perspective}
\author{%
  Prasad Cheema \\
  The University of Sydney\\
  \texttt{prasad.cheema@sydney.edu.au}
  \and
  Mahito Sugiyama \\
  National Institute of Informatics\\
  JST, PRESTO\\
  \texttt{mahito@nii.ac.jp}
}
\date{}
\begin{document}

\maketitle
\begin{abstract}
The appearance of the double-descent risk phenomenon has received growing interest in the machine learning and statistics community, as it challenges well-understood notions behind the U-shaped train-test curves. Motivated through Rissanen's minimum description length (MDL), Balasubramanian's Occam's Razor, and Amari's information geometry, we investigate how the logarithm of the model volume: $\log V$, works to extend intuition behind the AIC and BIC model selection criteria. We find that for the particular model classes of isotropic linear regression and statistical lattices, the $\log V$ term may be decomposed into a sum of distinct components, each of which assist in their explanations of the appearance of this phenomenon. In particular they suggest why generalization error does not necessarily continue to grow with increasing model dimensionality.
\end{abstract}

\section{Introduction}
\begin{wrapfigure}{r}[0pt]{.35\textwidth}
  \centering
 \includegraphics[width=.35\textwidth]{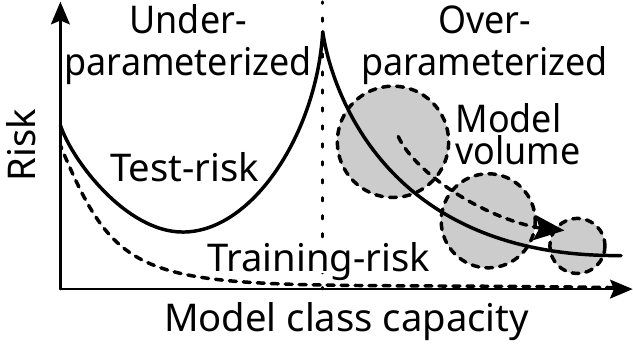}
 \caption{Double Descent Risk.}
 \label{fig:Intro_Belkin}
\end{wrapfigure}
Model selection is a problem which underpins the field of machine learning. Key to its formulation is the notion of learning an appropriate predictor, $h^{\star}: \mathbb{R}^D \rightarrow \mathbb{R}$ from an underlying model class, $\mathcal{H}$, based on $N$ input training examples $\{(\bm{x}_i,y_i)\}_{i=1}^N$, with each $(\bm{x}_i,y_i)\in \mathbb{R}^D\times \mathbb{R}$. Typically, the predictor, $h^{\star}$, is chosen so as to minimize some risk functional; that is, $h^{\star} = \arg\min_{h\in\mathcal{H}} R(h)$ with $R(h) = \mathbb{E}_{p(x,y)}[L(h(x),y)]$, where $L : \mathbb{R}\times\mathbb{R} \rightarrow \mathbb{R}$ is the risk functional, and $p(x,y)$ denotes the probability density function (pdf) over the data. Fundamentally, the aim of such an approach is to ensure that $h^{\star}$ provides good generalization capability, so that after training it minimizes the \textit{out-of-sample} test error \cite{bishop2006pattern}. This is historically estimated via the Akaike information criterion (AIC) \cite{akaike1973information}, the Bayesian information criterion (BIC) \cite{schwarz1978estimating}, or through cross validation \cite{bishop2006pattern}. AIC and BIC are derived based on asymptotic assumptions in the sample size $N$, and work similarly. Moreover, both criteria suggest that out-of-sample error increases as $\mathcal{O}(D)$\footnote{Technically AIC has a $2D$ model complexity term, and BIC has a $D\log N$ model complexity term. We will refer to the implied effects of both model complexities simply as the ``$\mathcal{O}(D)$ terms''.}, suggesting that an over-parameterized model should generalize poorly, which is an idea consistent with traditional empirical evidence, via the U-shaped \textit{train-test} curves \cite{bishop2006pattern}.

Recently, however, particular classes of highly parameterized models such as deep neural networks, and random forests have been shown to generalize extremely well, working in contrary to the implied $\mathcal{O}(D)$ model complexity effects. In fact, strong empirical evidence has been presented by Belkin et al.~\cite{belkin2019reconciling}, where it was shown that a \textit{double descent risk} phenomenon may be observed for a variety of models which transition into highly parameterized regimes. This phenomenon is shown in Figure \ref{fig:Intro_Belkin}, with many additional experiments made clear in \cite{nakkiran2019deep}. In an effort to explain such trends Belkin et al.~\cite{belkin2018understand} have tried to infer some similarities between ReLU networks and traditional kernel models, and Geiger et al. \cite{geiger2020scaling} have tried to connect the double descent cusp-like behaviour with diverging norms, through a neural tangent kernel framework. In addition, double descent risk has been explored in a variety  of simpler (and shallow) model classes \cite{ghorbani2019linearized,ba2020generalization,d2020double}, with various risk asymptotics established \cite{bartlett2019benign,hastie2019surprises}. Lastly and rather interestingly, it has been found that there exist certain parallels between double descent risk, and the notion of the \textit{jamming transition} which occurs in physical materials which undergo a phase transition~\cite{spigler2018jamming,geiger2019jamming}. 

In this paper, we attempt to \emph{understand the double descent phenomenon from a geometrically driven perspective}. This will be achieved via the notion of a \textit{model volume}, and shown to have interpretations which stem from information theory (coding theory, and signal analysis), as well as Occam's razor. We find that for a variety of simple statistical models, the model volume can decrease with increasing $D$, which ultimately serves to overpower the magnitude of the lower order model complexity term: $\mathcal{O}(D)$. This idea is also clarified in Figure \ref{fig:Intro_Belkin}.

\subsection{Model Selection and Occam's Razor}

In the late 90s and early 2000s, extensions to the base AIC and BIC formulations were developed by Rissanen~\cite{rissanen1997stochastic} and Balasubramanian~\cite{balasubramanian2005mdl}, which include additional model-specific terms. From the perspective of coding theory, Rissanen developed a notion of \textit{stochastic model complexity}, which builds upon Shannon's information criteria used for lossless encoding \cite{shannon1948mathematical}. Upon this notion, Rissanen formalized an extension of binary Shannon entropy to continuous function classes, via the discretization of the model manifold over approximately equivalent model classes. This approach establishes an intuition behind ``model distinguishability'', which is also echoed by Balasubramanian. In particular, under Risannen's construction information is encoded in \textit{nats} (as opposed to bits) and it is formally recognized as the \textit{Minimum Description Length} (MDL). This is shown in the following equation:
\begin{equation} \label{eqn:riss}
    -\log(p(\bm{x})) = \overbrace{-\log(\hat{\mathcal{L}})  + \frac{D}{2}\log \left(\frac{N}{2\pi e} \right)}^{\text{\tiny{AIC / BIC - like term}}}+ \overbrace{\log \int_{\Theta} \sqrt{\det\left(\mathcal{I}(\bm{\theta})\right)}d\bm{\theta}}^{\text{\tiny{Log - Model Volume}}} + {o}(1),
\end{equation}
where $\bm{x}=\{\bm{x}_i\}_{i=1}^N$ denotes a random vector of $N$ data samples,  $\hat{\mathcal{L}}\in\mathbb{R}$ is the likelihood function evaluated at its optimal parameter setting, with $\Theta$ being the space of possible parameter settings, and $\mathcal{I}(\bm{\theta})\in\mathbb{R}^{D\times D}$ denotes the Fisher information matrix (FIM), which is traditionally used as a lower bound on the variance of unbiased estimators, and in Jeffrey's prior~\cite{bishop2006pattern}. In a parallel fashion, Balasubramanian approached the problem of model selection, albeit from a more Bayesian perspective, which curiously yielded the same equation as in Rissanen's MDL (with the exception of one additional term, and ignoring constants)~\cite{balasubramanian1996geometric}. To achieve this he takes an alternate route based on specifying a Jeffrey's prior over the underlying parameter space, which is noted to act as an appropriate measure for the density of distinguishable distributions \cite{balasubramanian2005mdl}. Ultimately, in both Rissanen's and Balasubramanian's model selection criteria, there is a term which acts like the $\mathcal{O}(D)$ model complexity used in AIC and BIC, and an additional term: $\log  \int_{\Theta} \sqrt{\det\left(\mathcal{I}(\bm{\theta})\right)}d\bm{\theta} $, which they collectively referred to as a model distinguishability-like term. Moreover, since these methods are built upon the log-marginal: $-\log(p(\bm{x}))$, they also appeal to a Bayesian Occam's razor-like principle. Ultimately, this paper aims to explore this additional term: $\log \int_{\Theta} \sqrt{\det\left(\mathcal{I}(\bm{\theta})\right)}d\bm{\theta}$, through a geometric lens. Moreover, it will be made clear that this term also describes the underlying log-model volume, which we denote by $\log V$ \cite{amari2016information}. Ultimately, it will be shown that, $\log V$ can in fact \emph{decrease} with increasing $D$ for several simple model classes, thus serving to counter-act the behaviours of the $\mathcal{O}(D)$ model complexity term. By implication, this suggests that certain model classes have the power to generalize well when transitioning into over-parameterized regimes.

\subsection{Information Geometry}\label{sec:info_geo}

Information geometry concerns the application of differential geometry to statistical models. In particular, it considers a statistical manifold, $\mathcal{M}=\{p(x;\theta)\}$, over a $\theta$ co-ordinate system. A Riemannian metric, $\mathcal{G}$, can be placed on $\mathcal{M}$, where $\mathcal{G}:  \mathcal{T}_p(\mathcal{M}) \times \mathcal{T}_p(\mathcal{M}) \rightarrow \mathbb{R}_{\geq 0}$ for each point $p\in\mathcal{M}$, with $\mathcal{T}_p(\mathcal{M})$ defined as the local tangent space at point $p$ on the manifold. Principally, $\mathcal{G}$ is a generalization of the inner product on Euclidean spaces to Riemannian manifolds. In addition, Amari defines a dually coupled affine co-ordinate system on statistical manifolds. Dually coupled co-ordinates arise naturally from the \textit{dually flat} property which is intrinsic to many information manifolds. These co-ordinates are known as the $\theta$ (e-flat) and $\eta$ (m-flat) co-ordinates for the exponential family in particular, and are related through the Legendre transformation $\eta = \nabla \psi (\theta)$ and $\theta = \nabla \varphi (\eta)$ via two convex functions $\psi, \varphi: \mathbb{R}^D \rightarrow \mathbb{R}$~\cite{amari2007methods}. The $\theta$ and $\eta$ co-ordinates for exponential models correspond to the natural and expectation parameters, respectively. Furthermore, the FIM defines a natural Riemannian metric tensor: $\mathcal{G}_{ij} = \mathbb{E}[\partial_i \log(p(x;\theta)) \partial_j \log(p(x;\theta))] = \mathcal{I}_{ij}$~\cite{rao1992information,amari2007methods}.
Thus the motivation for using information geometry is clear as Rissanen's MDL, and Balasubramanian's Occam Razor, depend on the FIM, which is, geometrically speaking, the metric tensor. Consequently, the term $\log \int_{\Theta} \sqrt{\det\left(\mathcal{I}(\bm{\theta})\right)}d\bm{\theta}$ has a clear definition in differential geometry as being the log-\textit{volume} of the underlying information manifold; that is, the square root of the determinant of the Fisher information matrix is the manifold volume~\cite{amari2007methods,jeffreys1946invariant}. Unfortunately, the FIM is singular ($\text{rank}(\mathcal{I}(\theta)) < D$) for many modern statistical models, implying that classical assumptions such as the asymptotic normality of the maximum likelihood estimator, does not hold, and that the BIC is not necessarily equal to the Bayes marginal likelihood \cite{watanabe2009algebraic}. The implication of this will be explored for the studied models.


\section{Double Descent Risk and the Volume of Statistical Models}
In the following subsections we will explore how the notion of model volume suggests that in some classic statistical models, increasing $D$ can result in a decreasing, or even saturating volume term, which can act to counter-balance the poor generalization properties traditionally implied by $\mathcal{O}(D)$ model complexity. In particular, this will be demonstrated for the popular machine learning models of: (1) Isotropic linear regression, and (2) Statistical lattice models (of which Boltzmann machines form a subset). Moreover, a brief discussion on how this phenomena arises in the classic stochastic perceptron unit is provided in Appendix \ref{app:sec:perceptron_title}.

\subsection{Isotropic Linear Regression}




We define isotropic linear regression as the problem: $\bm{y} = \bm{X}\bm{\beta} + \bm{\varepsilon}$, for a given dataset $\{(\bm{x}_i,y_i)\}_{i=1}^N$, where each $(\bm{x}_i ,y_i)\in\mathbb{R}^D\times\mathbb{R}$, $\bm{\varepsilon}_i \sim \mathcal{N}(0,\sigma^2)$ and each $\bm{x}_{ij}\sim \mathcal{N}(0,1)$. Furthermore, we place a hard power constraint on $\bm{\beta}\in\mathbb{R}^D$ which is to be interpreted as $\|\bm{\beta}\|_2^2 \leq P$. A relationship between this model and the notion of channel capacity, $\mathcal{C}$, will be clarified later in which case we will then consider each $\bm{\beta}_i$ to be such that $\bm{\beta}_i \sim \mathcal{N}(0,P/D)$ based on a maximum entropy argument \cite{Cover06}. As a consequence, $\|\bm{\beta}\|_2$ will concentrate around $\sqrt{P}$ with high probability which serves to model the hard power constraint with increasingly high probability as $D$ increases. Based on this set-up, it will be shown that for the classic statistical regime of $D\leq N$, the model volume is a well-defined notion (that is, the FIM is non-singular), but in the regime under investigation ($D>N$) a singular geometry arises. This implies that the volume-form can become locally degenerate, meaning that the global notion of ``the model volume'' becomes difficult to investigate. Consider now the hard power constraint, from which point we may elucidate Remark~\ref{lem:iso_vol_lem} below, wherein $\mathbb{B}^D(\sqrt{P})$ refers to the volume of a $D$-ball of radius $\sqrt{P}$. 
\begin{remark}[Regression Log Volume]\label{lem:iso_vol_lem}
\begin{align}
\log V = \frac{1}{2}\log\circ \det\left(\bm{X}^{\intercal}\bm{X}\right) + \log\left(\mathbb{B}^D(\sqrt{P})/\sigma^D\right).
\end{align}
\end{remark}

Remark \ref{lem:iso_vol_lem} is a fairly well-established idea, and the reader is encouraged to look at Gr\"unwald \cite{grunwald2007minimum}, as well as Barron, Risannen, \& Yu~\cite{Barron98} for further details. For completeness, a brief sketch for the proof of Remark \ref{lem:iso_vol_lem} is provided in Appendix \ref{app:reg:iso_vol_lem}. Remark \ref{lem:iso_vol_lem} makes clear that the $\log V$ expression can separated into two distinct components, (i) $\frac{1}{2}\log \circ\det(\bm{X}^{\intercal}\bm{X})$, which is a data dependent term, and (ii) $ \log(\mathbb{B}^D(\sqrt{P})/\sigma^D)$, which is the volume of a $D$-ball scaled against noise, which defines a degree of model \textit{distinguishability} \cite{myung2000counting}. The distinguishability term manifests here due to the volume implied by the hard power constraint. In particular, the parameter vector $\bm{\beta}$ behaves according to a ball geometry, wherein it is known that $\lim_{D\rightarrow \infty}\mathbb{B}^D(R) = 0$.\footnote{The volume of a $D$-ball, or radius $R$ is: $\mathbb{B}^D(R)=\frac{\pi^{D/2}}{\Gamma(D/2+1)}R^D$, where $\Gamma(D)\coloneqq (D-1)!$} This carries with it, the implication that the addition of new features at increasingly high $D$ may work to counteract the $\mathcal{O}(D)$ model complexity contribution, traditionally implied by AIC / BIC. However, care must be taken when extending this intuition towards large $D$, in particular for the case of $D>N$. This is because $\text{rank}\left(\bm{X}^{\intercal}\bm{X}\right)=N$, even though $\bm{X}^{\intercal}\bm{X}\in\mathbb{R}^{D\times D}$, meaning $\dim \ker (\bm{X}^{\intercal}\bm{X}) = D-N > 0$. Thus we have an increasingly large null-space as $D$ continues to increase past $N$. By implication, $\det(\bm{X}^{\intercal}\bm{X})=0$ in this range, meaning that a na\"ive volume calculation is no longer possible, and Remark \ref{lem:iso_vol_lem} looses its applicability. To circumvent this degeneracy problem, we propose the following regularization. 
\begin{definition}[Regularized Log-Volume]
\begin{align}
    \log V_{\text{reg}}\coloneqq \frac{1}{2}\log\circ \det\left(\alpha\bm{I} + \bm{X}^{\intercal}\bm{X}\right) + \log\left(\mathbb{B}^D(\sqrt{P})/\sigma^D\right), \quad \text{with } \alpha=\frac{D}{\text{SNR}}.
\end{align}
\end{definition}
There are few compelling reasons why this regularization is proposed. Firstly, it is clear that $\log V_{\text{reg}}$ is well-defined for the case of $D>N$. Moreover, this modification parallels the manner in which the identity matrix appears in standard ridge regularization solutions\footnote{$(\bm{X}^{\intercal}\bm{X} + \lambda\bm{I})\bm{\beta}=\bm{X}^{\intercal}\bm{y}.$}. As such, this form of regularization carries with it a natural geometric interpretation of extending the span of the row space of the linear map (that is, the data matrix): $\bm{X}$, from $N$ to $N+D$. Clarifying this point mathematically:
\begin{align*}
    \begin{bmatrix}\bm{X} \\ \sqrt{\alpha}\bm{I} \end{bmatrix}^{\intercal}\begin{bmatrix}\bm{X} \\ \sqrt{\alpha}\bm{I} \end{bmatrix} = \bm{X}^{\intercal}\bm{X} + \alpha \bm{I}.
\end{align*}
In other words, $\bm{X}$ originally consists of $D$ vectors, which are each $N$-dimensional. Therefore, this modification has the effect of \textit{lifting} each column vector from dimension $N$, to dimension $N+D$, with associated magnitude $\sqrt{\alpha}$, in mutually orthogonal directions. Intuitively, we are ``filling'' the degenerate dimensions with a little bit of information, to remove the degeneracy problem. Due to this, it may seem intuitively that $\log V \leq \log V_{\text{reg}}$. In fact, this is a valid conclusion, and it is clarified in the following lemma. 
\begin{lem}\label{lem:falpha_logdet}
Consider $f(\alpha) = \log\circ\det\left(\alpha\bm{I} +\bm{X}^{\intercal}\bm{X}\right)$. Then, $f(0) \leq f(\alpha)$ for $\alpha \geq 0$.
\end{lem}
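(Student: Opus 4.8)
The plan is to reduce everything to the spectrum of $\bm{X}^{\intercal}\bm{X}$. Since $\bm{X}^{\intercal}\bm{X}$ is a Gram matrix, it is symmetric and positive semidefinite, so by the spectral theorem it admits an orthonormal eigenbasis with real eigenvalues $\lambda_1,\dots,\lambda_D \geq 0$. The key observation is that shifting by $\alpha\bm{I}$ does not change the eigenvectors and merely translates every eigenvalue: if $\bm{X}^{\intercal}\bm{X} = \bm{Q}\bm{\Lambda}\bm{Q}^{\intercal}$ with $\bm{\Lambda}=\mathrm{diag}(\lambda_1,\dots,\lambda_D)$ and $\bm{Q}$ orthogonal, then $\alpha\bm{I}+\bm{X}^{\intercal}\bm{X} = \bm{Q}(\alpha\bm{I}+\bm{\Lambda})\bm{Q}^{\intercal}$, whence $\det(\alpha\bm{I}+\bm{X}^{\intercal}\bm{X}) = \prod_{i=1}^{D}(\alpha+\lambda_i)$.

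Taking logarithms then gives the additively separable form $f(\alpha) = \sum_{i=1}^{D}\log(\alpha+\lambda_i)$. Each summand $\alpha \mapsto \log(\alpha+\lambda_i)$ is nondecreasing on $[0,\infty)$ because $\lambda_i \geq 0$ keeps the argument positive and $\log$ is monotone increasing. A sum of nondecreasing functions is nondecreasing, so $f$ is nondecreasing on $[0,\infty)$, and in particular $f(0)\leq f(\alpha)$ for all $\alpha\geq 0$, which is exactly the claim. Equivalently, and perhaps more transparently for the strict version, one can invoke Jacobi's formula to compute $f'(\alpha) = \mathrm{tr}\bigl((\alpha\bm{I}+\bm{X}^{\intercal}\bm{X})^{-1}\bigr)$; for $\alpha>0$ the matrix $\alpha\bm{I}+\bm{X}^{\intercal}\bm{X}$ is positive definite, so its inverse is positive definite and the trace is strictly positive, again yielding monotonicity. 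One could also phrase the whole argument as an instance of Loewner monotonicity of the determinant: since $\alpha\bm{I}\succeq 0$ we have $\bm{X}^{\intercal}\bm{X} \preceq \alpha\bm{I}+\bm{X}^{\intercal}\bm{X}$, and $\det$ is monotone with respect to the positive-semidefinite order.

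The only genuinely delicate point, and the one worth flagging, is the boundary behaviour at $\alpha=0$ in the rank-deficient regime $D>N$ that motivates the regularization in the first place. There $\mathrm{rank}(\bm{X}^{\intercal}\bm{X})=N<D$, so at least one eigenvalue vanishes and $\det(\bm{X}^{\intercal}\bm{X})=0$, making $f(0)=-\infty$. In this case the inequality $f(0)\leq f(\alpha)$ holds trivially for every $\alpha>0$ (for which $f(\alpha)$ is finite), so no continuity argument at the origin is required; the trace-based derivative argument, which needs $(\alpha\bm{I}+\bm{X}^{\intercal}\bm{X})^{-1}$ to exist, is simply applied on $(0,\infty)$. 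I expect the main obstacle to be purely bookkeeping around this $\log 0$ convention rather than anything structural, since the eigenvalue decomposition sidesteps the singularity entirely and recovers precisely the degeneracy the regularized volume is designed to repair.
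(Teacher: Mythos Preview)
Your proof is correct and follows essentially the same route as the paper: reduce $\det(\alpha\bm{I}+\bm{X}^{\intercal}\bm{X})$ to the product $\prod_i(\alpha+\lambda_i)$ via the spectrum of the Gram matrix, then read off monotonicity from the sum of logarithms. Your handling of the $f(0)=-\infty$ boundary case in the rank-deficient regime is an improvement over the paper's exposition, which tacitly assumes $\bm{P}^{1/2}$ and its factorization behave nicely.
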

\begin{proof}
For brevity, define $\bm{X}^{\intercal}\bm{X}$ as $\bm{P}$. Then proceed as follows:
\begin{align*}
    f(\alpha) &= \log\circ\det\left(\alpha\bm{I} +\bm{P}\right)
    = \log\circ\det\left(\bm{P}^{1/2}\left(\alpha\bm{P} +\bm{I}\right)\bm{P}^{1/2}\right)\\
    &= \log\circ\det \bm{P} + \log\prod_{i=1}^D (1+\alpha \lambda_i)
    =  \log\circ\det \bm{P} + \sum_{i=1}^D \log (1+\alpha \lambda_i),
\end{align*}
where $\lambda_i$ are the eigenvalues of $\bm{P}$. Since $\bm{P}$ is positive semi-definite, then each $\lambda_i \geq 0$. Thus, it is clear that $f(\alpha)$ is an increasing function (stemming from the sum of logarithms), and that $f(0)\leq f(\alpha)$, wherein equality arises whenever $\bm{P}$ itself is the zero matrix.
\end{proof}

Recall that the original motivation over MDL is to define a two-part coding scheme over some data, $\mathcal{D}$, via an appropriate code length function, $L:\mathcal{H}\rightarrow \mathbb{R}$ such that: \begin{align*}L_{\text{MDL}}(D) = \min_{H\in\mathcal{H}}\left(L(H) + L(D\mid H)\right).\end{align*} In this setting, the proposed modification of $f(0) \rightarrow f(\alpha)$ primarily affects the $L(H)$ term, and it does this by considering a regularized form of data covariance over $\bm{X}$ as: $\alpha \bm{I}_D + \bm{X}^{\intercal}\bm{X}$. However, the $L(D\mid H)$ term is left unmodified, as it still uses the original $\bm{X}$ matrix. Therefore, $L(H)$ alone will be lengthened by a factor of $\|f(\alpha) - f(0)\|_{\infty}=\|\sum_{i=1}^D \log (1+\alpha \lambda_i)\|_{\infty}$. In other words the proposed coding scheme will result in slightly longer codes as determined by the logarithm of the $\alpha$-scaled principal components of a covariance matrix over input data matrix $\bm{X}$. 
 
Lastly, a principle motivation in defining the $\log V_{\text{reg}}$ term in this way is because of its intimate link to the \textit{channel capacity}, $\mathcal{C}$. Briefly, the channel capacity is a notion which arises from communication theory. In particular, this line of research is often concerned with the same underlying system: $\bm{y} = \bm{X\beta} +\bm{\varepsilon}$, however, the interpretations of the two terms, $\bm{X}$ and $\bm{\beta}$, are somewhat different \cite{Tulino04}. In particular, $\bm{\beta}$ is interpreted as an input signal, and $\bm{X}$ is the signal mixing matrix. In other words, the \textit{known} quantity in this literature is $\bm{\beta}$, and the \textit{unknown} quantity $\bm{X}$ acts like a confounding factor, which is contrary with the traditional statistical goal of regression, whereby $\bm{X}$ is \textit{known} data, and  $\bm{\beta}$ are the \textit{unknown} coefficients. Moreover, the signal $\bm{\beta}$, is typically power constrained in expectation: $\mathbb{E}[\bm{\beta}^{\intercal}\bm{\beta}]\leq P$, which in statistics often arises due to regularization, or through consideration of a prior distribution over $\bm{\beta}$~\cite{tse2005fundamentals,bishop2006pattern}. It can be proven that $\mathcal{C}$ acts as an upper-bound on the rate of reliable information transfer for an isotropic communication channel, and that it is mathematically equivalent to the supremum of the mutual information: $\sup_{p(\bm{\beta})} \mathcal{I}(\bm{\beta};\bm{y})$, wherein $\bm{\beta}$ and $\bm{y}$ are to be considered as random variables \cite{clarke1994jeffreys}. For the problem of isotropic linear regression with real feature variables, $\mathcal{C}$ follows Theorem \ref{thm:chan_cap}. This is a well established result in the case of complex random matrices \cite{telatar1999capacity}. For the sake of completion however, we present a brief proof on its applicability for real-valued random matrices in Appendix~\ref{app:sec_chan_cap_proof}. Note that the given proof invokes classic information theory results, which contrasts the proof direction of Telatar \cite{telatar1999capacity}.
\begin{theorem}[Channel Capacity] \label{thm:chan_cap}
The channel capacity for the system $\bm{y} = \bm{X\beta} + \bm{\varepsilon}$ with $\bm{X}_{ij}\sim\mathcal{N}(0,1)$, $\bm{\varepsilon}\sim\mathcal{N}(\bm{0},\sigma^2 \bm{I})$, where $\text{SNR}=P/\sigma^2$ refers to the signal-to-noise ratio, and where $\bm{\beta}$ is a maximum entropy prior probability distribution, is given as,
\begin{equation}\label{eqn:chan_cap_thm1}
   \mathcal{C} =  \frac{1}{2}\mathbb{E}\left[\log\circ \det\left(I_{N} + \frac{\text{SNR}}{D}\bm{X} \bm{X}^{\intercal}\right)\right].
\end{equation}
\end{theorem}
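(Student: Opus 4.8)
The plan is to work from the variational characterization $\mathcal{C} = \sup_{p(\bm{\beta})} \mathcal{I}(\bm{\beta};\bm{y})$ quoted in the surrounding text, expand the mutual information into a difference of differential entropies, and treat $\bm{X}$ as a known-but-random channel matrix (the communication-theoretic reading explained above). The relevant object is then the conditional mutual information $\mathcal{I}(\bm{\beta};\bm{y}\mid\bm{X})$, and $\mathcal{C}$ is its expectation over the Gaussian ensemble of $\bm{X}$. First I would pin down the input law by the maximum-entropy principle: subject to the power constraint $\mathbb{E}[\bm{\beta}^{\intercal}\bm{\beta}]\leq P$, the differential entropy $h(\bm{\beta})$ is maximized by the isotropic Gaussian $\bm{\beta}\sim\mathcal{N}(\bm{0},(P/D)\bm{I}_D)$, which is exactly the ``maximum entropy prior'' named in the statement. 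This collapses the supremum to a single evaluation.

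Second, with $\bm{\beta}$ Gaussian and $\bm{X}$ fixed, both conditional laws of $\bm{y}$ are Gaussian: $\bm{y}\mid\bm{\beta},\bm{X}\sim\mathcal{N}(\bm{X}\bm{\beta},\sigma^2\bm{I}_N)$, and marginalizing the signal gives $\bm{y}\mid\bm{X}\sim\mathcal{N}(\bm{0},(P/D)\bm{X}\bm{X}^{\intercal}+\sigma^2\bm{I}_N)$. Applying the Gaussian differential-entropy formula (for covariance $\Sigma$ it reads $\tfrac12\log\bracket{(2\pi e)^N\det\Sigma}$) to each term and subtracting yields
\begin{align*}
\mathcal{I}(\bm{\beta};\bm{y}\mid\bm{X}) &= h(\bm{y}\mid\bm{X}) - h(\bm{y}\mid\bm{\beta},\bm{X})\\
&= \tfrac12\log\det\bracket{\tfrac{P}{D}\bm{X}\bm{X}^{\intercal}+\sigma^2\bm{I}_N} - \tfrac12\log\det\bracket{\sigma^2\bm{I}_N}.
\end{align*}
The $(2\pi e)^N$ factors cancel, and pulling $\sigma^2\bm{I}_N$ out of the first determinant contracts the difference into $\tfrac12\log\det\bracket{\bm{I}_N+\tfrac{\text{SNR}}{D}\bm{X}\bm{X}^{\intercal}}$, using $\text{SNR}=P/\sigma^2$. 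Taking the expectation over $\bm{X}$ then delivers Equation~\eqref{eqn:chan_cap_thm1}.

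The main obstacle is not the determinant algebra but justifying that the \emph{isotropic} Gaussian input is optimal. For a single fixed $\bm{X}$ the capacity-achieving covariance comes from water-filling over the eigenmodes of $\bm{X}^{\intercal}\bm{X}$ and is generally anisotropic; it is only because $\bm{X}$ has i.i.d.\ $\mathcal{N}(0,1)$ entries—hence a rotationally invariant law—that averaging over the ensemble restores symmetry. I would make this rigorous by a symmetrization argument: for any orthogonal $\bm{Q}$, $\bm{X}\bm{Q}$ has the same law as $\bm{X}$, so replacing a candidate input covariance $\bm{Q}_{\bm{\beta}}$ by $\bm{Q}\bm{Q}_{\bm{\beta}}\bm{Q}^{\intercal}$ leaves the expected mutual information unchanged; averaging over the orthogonal group and invoking concavity of $\log\det$ (Jensen) shows the isotropic choice $(P/D)\bm{I}_D$ dominates under the trace constraint $\operatorname{tr}(\bm{Q}_{\bm{\beta}})\leq P$. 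Finally I would note that Sylvester's identity $\det\bracket{\bm{I}_N+\tfrac{\text{SNR}}{D}\bm{X}\bm{X}^{\intercal}}=\det\bracket{\bm{I}_D+\tfrac{\text{SNR}}{D}\bm{X}^{\intercal}\bm{X}}$ ties the capacity directly to the regularized log-volume of the preceding Definition with $\alpha=D/\text{SNR}$, which is the connection the text is building toward.
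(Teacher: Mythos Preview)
Your proposal is correct and arrives at the same formula, but the route differs from the paper's in one mechanical respect. The paper does not decompose $\mathcal{I}(\bm{\beta};\bm{y}\mid\bm{X})$ as $h(\bm{y}\mid\bm{X})-h(\bm{y}\mid\bm{\beta},\bm{X})$; instead it invokes Pinsker's determinant identity for jointly Gaussian vectors,
\[
\mathcal{I}(\xi;\eta)=\tfrac12\log\frac{\det A_\xi\,\det A_\eta}{\det A_{(\xi,\eta)}},
\]
with $\xi=\bm{\beta}$, $\eta=\bm{y}$, and then reduces the block determinant of the joint covariance via a Schur complement. Your entropy-difference computation and the paper's Pinsker-based one are of course equivalent for Gaussians, so this is a stylistic rather than substantive divergence; your version is arguably more self-contained since it avoids quoting an external theorem. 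On the optimality of the isotropic input, you go further than the paper: it simply asserts the maximum-entropy choice and points to Telatar for the concavity justification, whereas you sketch the rotational-invariance symmetrization explicitly. Your closing remark on Sylvester's identity is exactly the Weinstein--Aronszajn step the paper uses in the subsequent theorem, so that anticipation is on target.
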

If one constrains the second moment over $\bm{\beta}$, and if each $\bm{\beta}_i$ are chosen independently, then, without loss of generality, the maximum entropy distribution should be Gaussian for the proposed linear regression problem setting. In particular, for the proposed setting a natural second moment constraint is taken to be as $\mathbb{E}[\bm{\beta}^{\intercal}\bm{\beta}]\leq P$, which then implies that the natural distribution choice over $\bm{\beta}$ (to maximize mutual information around this constraint) is for each $\bm{\beta}_i\sim\mathcal{N}(0,P/D)$. 

The importance of considering $\mathcal{C}$ in the context of $\log V_{\text{reg}}$ is two-fold. Firstly, there is an established, and intimate connection between $\mathcal{C}$ and the $D/2\log (N/2\pi e)+ \log V$ term in Equation \eqref{eqn:riss}, which is asymptotic in $N$ \cite{clarke1994jeffreys}. Brief clarification of this point is provided in Appendix \ref{app:sec_baron_clark}. Secondly, $\mathcal{C}$ is quantity that is highly amenable for upper and lower-bounding (see Appendix \ref{app:sec:chan_cap_lower_upper_bound}), which will be shown to be invaluable when developing intuitions for the $D>N$ regime. Based on this, we shall consider a \textit{mean-regularized} $\log V$ expression, which will be concisely written as $\mathbb{E}[\log V_{\text{reg}}]$ (after stating Theorem \ref{thm:mean_reg_vol}, wherein its full form is shown).
\begin{theorem}[Mean-Regularized Log-Volume]\label{thm:mean_reg_vol}
\begin{align}
    \mathbb{E}_{\bm{X}_{ij}\sim\mathcal{N}(0,1)}[\log V_{\text{reg}}] = \mathcal{C} + \log\left(\mathbb{B}^D(\sqrt{P})/\sigma^D\right)  - \frac{D}{2}\log \alpha
\end{align}
\end{theorem}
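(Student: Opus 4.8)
The plan is to combine linearity of expectation with a determinant identity that converts the degenerate $D\times D$ volume term into the non-degenerate $N\times N$ determinant appearing in Theorem~\ref{thm:chan_cap}. First I would observe that in the definition of $\log V_{\text{reg}}$ the ball term $\log\bracket{\mathbb{B}^D\bracket{\sqrt{P}}/\sigma^D}$ carries no dependence on $\bm{X}$, so it is deterministic and passes through $\E_{\bm{X}_{ij}\sim\mathcal{N}(0,1)}[\cdot]$ untouched. The whole problem then reduces to evaluating $\tfrac{1}{2}\E\bracket{\log\circ\det\bracket{\alpha\bm{I}_D + \bm{X}^{\intercal}\bm{X}}}$ and identifying it with $\mathcal{C}$ up to the $\tfrac{D}{2}\log\alpha$ term.

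Next I would pull the regularization scale out of the determinant. Because $\alpha\bm{I}_D + \bm{X}^{\intercal}\bm{X} = \alpha\bracket{\bm{I}_D + \tfrac{1}{\alpha}\bm{X}^{\intercal}\bm{X}}$ and the argument is $D\times D$, homogeneity of the determinant gives
\begin{align*}
\det\bracket{\alpha\bm{I}_D + \bm{X}^{\intercal}\bm{X}} = \alpha^{D}\,\det\bracket{\bm{I}_D + \tfrac{1}{\alpha}\bm{X}^{\intercal}\bm{X}},
\end{align*}
and it is this $\alpha^{D}$ prefactor that produces the $\tfrac{D}{2}\log\alpha$ contribution. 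The key structural step is then the Sylvester / Weinstein--Aronszajn determinant identity $\det\bracket{\bm{I}_D + \bm{A}\bm{B}} = \det\bracket{\bm{I}_N + \bm{B}\bm{A}}$, applied with $\bm{A} = \tfrac{1}{\alpha}\bm{X}^{\intercal}\in\mathbb{R}^{D\times N}$ and $\bm{B} = \bm{X}\in\mathbb{R}^{N\times D}$, which collapses the (rank-deficient, for $D>N$) matrix $\bm{I}_D + \tfrac{1}{\alpha}\bm{X}^{\intercal}\bm{X}$ down to the full-rank $N\times N$ object $\bm{I}_N + \tfrac{1}{\alpha}\bm{X}\bm{X}^{\intercal}$.

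Finally I would substitute $\alpha = D/\text{SNR}$, so that $\tfrac{1}{\alpha} = \text{SNR}/D$ and the reduced determinant becomes exactly $\det\bracket{\bm{I}_N + \tfrac{\text{SNR}}{D}\bm{X}\bm{X}^{\intercal}}$; taking $\tfrac{1}{2}\E[\cdot]$ of its logarithm reproduces $\mathcal{C}$ verbatim by Theorem~\ref{thm:chan_cap}, and reassembling with the deterministic ball term and the $\tfrac{D}{2}\log\alpha$ prefactor yields the claim. I do not anticipate a deep obstacle, since the argument is essentially algebraic once the right identity is invoked; the two care-points are (i) recognizing that the Sylvester identity is precisely the mechanism making an $N\times N$ capacity emerge from a $D\times D$ volume (the same mechanism that lets the volume saturate once $D>N$), and (ii) pinning down the placement and sign of the $\tfrac{D}{2}\log\alpha$ correction, which I would confirm by an independent eigenvalue/asymptotic sanity check exploiting that $\bm{X}\bm{X}^{\intercal}\approx D\,\bm{I}_N$ for large $D$.
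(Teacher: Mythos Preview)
Your proposal is correct and follows essentially the same route as the paper: factor $\alpha$ out of the $D\times D$ determinant to expose the $\tfrac{D}{2}\log\alpha$ term, apply the Weinstein--Aronszajn (Sylvester) identity to pass from $\bm{I}_D+\alpha^{-1}\bm{X}^{\intercal}\bm{X}$ to $\bm{I}_N+\alpha^{-1}\bm{X}\bm{X}^{\intercal}$, then substitute $\alpha^{-1}=\text{SNR}/D$ and take expectations to recover $\mathcal{C}$. Your explicit remark that the ball term is deterministic and your sanity check on the sign of $\tfrac{D}{2}\log\alpha$ are good hygiene but do not depart from the paper's argument.
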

\begin{proof}
Consider the following,
\begin{align*}
    \frac{1}{2}\log\circ \det\left(\alpha\bm{I}_D + \bm{X}^{\intercal}\bm{X}\right) &= \frac{1}{2}\log\circ \det\left(\alpha\left(\bm{I}_D + \alpha^{-1}\bm{X}^{\intercal}\bm{X}\right)\right) \\
    &= \frac{1}{2}\log\circ \left(\alpha^D \det\left(\bm{I}_D + \alpha^{-1}\bm{X}^{\intercal}\bm{X}\right)\right) \\
    &= \frac{1}{2}\log\circ \det\left(\bm{I}_D + \alpha^{-1}\bm{X}^{\intercal}\bm{X}\right) + \frac{D}{2}\log \alpha 
\end{align*}
Invoking the Weinstein–Aronszajn identity we write, \begin{align*}\log\circ \det\left(\bm{I}_D + \alpha^{-1}\bm{X}^{\intercal}\bm{X}\right) = \log\circ \det\left(\bm{I}_N + \alpha^{-1}\bm{X}\bm{X}^{\intercal}\right),\end{align*} with $\alpha^{-1} = \text{SNR}/D$, thereby completing the proof.
\end{proof}
\noindent In other words, if one averages over the randomness found in $\bm{X}$, then $\log V_{\text{reg}}$ will include the effects of $\mathcal{C}$. This is desirable, because $\mathcal{C}$ is a well-understood term. In fact, Xu \&
Raginsky \cite{xu2017information} have shown that model generalization error relates intimately to the notion of input-output
mutual information---the supremum of which is known to be $\mathcal{C}$. 

Before diving into the modern regime of $D>N$, let us consider how the $\mathbb{E}[\log V_{\text{reg}}]$ expression behaves in the classical regime of $D \leq N$. Ideally it should provide results which at least run parallel with classical understandings. This notion is expressed in Lemma \ref{lem:classical_understanding}.
\begin{lem}\label{lem:classical_understanding}
In the classical statistical regime such that $D \le N$,
\begin{align}
    \mathbb{E}[\log V_{\text{reg}}] \leq \frac{D}{2}\log N + \log\left(\mathbb{B}^D(\sqrt{P})/\sigma^D\right).
\end{align}
\end{lem}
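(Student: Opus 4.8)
The plan is to reduce the statement to a single scalar inequality on the expected data-dependent term. Since the ball term $\log(\mathbb{B}^D(\sqrt{P})/\sigma^D)$ is deterministic, it passes through the expectation, so
\[
\mathbb{E}[\log V_{\text{reg}}] = \mathbb{E}\left[\tfrac{1}{2}\log\circ\det(\alpha\bm{I}_D + \bm{X}^{\intercal}\bm{X})\right] + \log(\mathbb{B}^D(\sqrt{P})/\sigma^D),
\]
and it therefore suffices to prove $\mathbb{E}[\tfrac{1}{2}\log\circ\det(\alpha\bm{I}_D + \bm{X}^{\intercal}\bm{X})] \le \tfrac{D}{2}\log N$. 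Equivalently, the computation inside the proof of Theorem \ref{thm:mean_reg_vol} shows this quantity equals $\mathcal{C} + \tfrac{D}{2}\log\alpha$, so one could instead bound the capacity $\mathcal{C}$ directly; I will work with the determinant form as it sidesteps the sign bookkeeping.

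First I would exploit the concavity of $A \mapsto \log\circ\det(A)$ on the cone of symmetric positive-definite matrices. Jensen's inequality then gives
\[
\mathbb{E}\left[\tfrac{1}{2}\log\circ\det(\alpha\bm{I}_D + \bm{X}^{\intercal}\bm{X})\right] \le \tfrac{1}{2}\log\circ\det\left(\alpha\bm{I}_D + \mathbb{E}[\bm{X}^{\intercal}\bm{X}]\right).
\]
A direct entrywise computation with $\bm{X}_{ij}\sim\mathcal{N}(0,1)$ independent yields $\mathbb{E}[\bm{X}^{\intercal}\bm{X}] = N\bm{I}_D$: each diagonal entry is a sum of $N$ unit variances, while off-diagonal entries vanish by independence. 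The right-hand side then collapses to $\tfrac{1}{2}\log\circ\det((\alpha+N)\bm{I}_D) = \tfrac{D}{2}\log(N+\alpha)$.

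The hard part is reconciling $\tfrac{D}{2}\log(N+\alpha)$ with the claimed $\tfrac{D}{2}\log N$. With $\alpha = D/\text{SNR}$ the surplus is $\tfrac{D}{2}\log(1 + D/(N\,\text{SNR}))$, which in the classical regime $D\le N$ is of strictly lower order than the leading $\tfrac{D}{2}\log N$ and vanishes as the regularization $\alpha\to 0$; discarding it recovers exactly the BIC-type penalty asserted by the lemma. To obtain the inequality without appealing to this leading-order reduction, I would instead use the exact Wishart log-determinant identity $\mathbb{E}[\log\circ\det(\bm{X}^{\intercal}\bm{X})] = D\log 2 + \sum_{k=0}^{D-1}\psi\!\left(\tfrac{N-k}{2}\right)$ (valid because $D\le N$ keeps $\bm{X}^{\intercal}\bm{X}$ full rank) together with the elementary bound $\psi(y)\le\log y$, giving $\mathbb{E}[\log\circ\det(\bm{X}^{\intercal}\bm{X})]\le\sum_{k=0}^{D-1}\log(N-k)\le D\log N$. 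The residual obstacle is to show that the positive shift $\alpha\bm{I}_D$ does not push the expectation above $D\log N$, i.e.\ that the slack in the digamma bound absorbs the regularization; controlling this $\alpha$-term against the classical $\tfrac{D}{2}\log N$ penalty is the crux, while every other step is routine.
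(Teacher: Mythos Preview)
Your approach is essentially the paper's: apply Jensen to the concave map $A\mapsto\log\det A$, use $\mathbb{E}[\bm{X}^{\intercal}\bm{X}]=N\bm{I}_D$, and arrive at $\tfrac{D}{2}\log(N+\alpha)$. The only cosmetic difference is that the paper first rewrites $\mathbb{E}[\log V_{\text{reg}}]$ via Theorem~\ref{thm:mean_reg_vol} as $\mathcal{C}+\tfrac{D}{2}\log\alpha+\log(\mathbb{B}^D(\sqrt{P})/\sigma^D)$ and then bounds $\mathcal{C}\le\tfrac{D}{2}\log(\alpha^{-1}N+1)$ using the capacity upper bound of Appendix~\ref{app:sec:chan_cap_lower_upper_bound}; recombining gives exactly your $\tfrac{D}{2}\log(N+\alpha)$. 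Working directly with the determinant, as you do, is cleaner and avoids the detour.

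The ``hard part'' you flag---passing from $\tfrac{D}{2}\log(N+\alpha)$ to $\tfrac{D}{2}\log N$---is not in fact proved in the paper either. The paper's final step is written as an approximation, $\tfrac{D}{2}\log(N+\alpha)\stackrel{(i)}{\approx}\tfrac{D}{2}\log N$, under the explicit classical-regime assumption $N\gg\alpha=D/\mathrm{SNR}$. So the lemma is really a leading-order statement, and your first paragraph on the surplus being lower order already captures exactly the paper's argument. Your attempt to close the gap rigorously via the Wishart digamma identity is more than the paper does; the obstacle you identify (that the $\alpha$-shift must be absorbed by the digamma slack) is genuine and is simply not addressed---nor intended to be---in the original.
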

\begin{proof}
It is clear that the $f(\alpha)$ expression shown in Lemma \ref{lem:falpha_logdet} is a concave function, due to the summation of logarithms. Therefore $\mathcal{C}$ behaves as a concave function, meaning it is possible to invoke Jensen's inequality. As such one can use the upper-bound developed in Appendix \ref{app:sec:chan_cap_lower_upper_bound}, so that:
\begin{align*}
\mathbb{E}[\log V_{\text{reg}}] &\leq \frac{D}{2}\log(\alpha^{-1} N + 1) +\frac{D}{2}\log \alpha + \log\left(\mathbb{B}^D(\sqrt{P})/\sigma^D\right) \\
&= \frac{D}{2}\log (N + \alpha) \log\left(\mathbb{B}^D(\sqrt{P})/\sigma^D\right)\\
&\stackrel{(i)}\approx \frac{D}{2}\log N + \log\left(\mathbb{B}^D(\sqrt{P})/\sigma^D\right),
\end{align*}
where (i) assumes that $N\gg \alpha$. 
\end{proof}
In Lemma \ref{lem:classical_understanding}, the assumption of $N\gg D/\text{SNR}$ posits that there is a sufficiently large quantity of data, relative to $D$ (notice also that $\text{SNR}$ is a term strongly driven by $D$ since it is a function $\|\bm{\beta}\|_2^2$). Thus, we see that the $\mathbb{E}[\log V_{\text{reg}}]$ expression has two driving terms, the first of which behaves precisely as the classic $\mathcal{O}(D)$ term in AIC / BIC statistics, and the latter expression which encodes the effect of the hard power constraint---which can interpreted as a \textit{strict} regularization over the parameter space. Therefore the application of $\mathbb{E}[\log V_{\text{reg}}]$ in the classical regime provides an encouraging picture when looking to extend it for $D > N$. In this respect, we develop Lemma \ref{lem:modern_regime}, which works similarly by considering an upper-bound over $\mathcal{C}$ (based on the bounds of Appendix \ref{app:sec:chan_cap_lower_upper_bound}). 
\begin{lem}\label{lem:modern_regime}
In the modern statistical regime such that $D > N$,
\begin{align}
    \mathbb{E}[\log V_{\text{reg}}] \leq \frac{N}{2}\log (\text{SNR} + 1) + \log\left(\mathbb{B}^D(\sqrt{P})/\sigma^D\right) + \frac{D}{2}\log \alpha. \label{eqn:modern_regime}
\end{align}
\end{lem}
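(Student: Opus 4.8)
The plan is to mirror the proof of Lemma \ref{lem:classical_understanding}, starting from the mean-regularized expression of Theorem \ref{thm:mean_reg_vol} and applying an upper bound on the channel capacity $\mathcal{C}$, with the crucial adjustment that in the $D > N$ regime the natural object to bound is $\bm{X}\bm{X}^{\intercal}$ rather than $\bm{X}^{\intercal}\bm{X}$. First I would invoke Theorem \ref{thm:mean_reg_vol} to write $\mathbb{E}[\log V_{\text{reg}}]$ as $\mathcal{C}$ plus the additive terms $\log(\mathbb{B}^D(\sqrt{P})/\sigma^D)$ and $\frac{D}{2}\log\alpha$, so that bounding $\mathbb{E}[\log V_{\text{reg}}]$ reduces entirely to upper-bounding $\mathcal{C} = \frac{1}{2}\mathbb{E}[\log\det(\bm{I}_N + \alpha^{-1}\bm{X}\bm{X}^{\intercal})]$, recalling that $\alpha^{-1} = \text{SNR}/D$.

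Next I would expand the log-determinant as $\sum_{i=1}^N \log(1 + \alpha^{-1}\mu_i)$, where $\mu_1,\dots,\mu_N$ are the eigenvalues of the $N \times N$ Gram matrix $\bm{X}\bm{X}^{\intercal}$. This is precisely where the regime hypothesis enters: since $D > N$, the matrix $\bm{X}\bm{X}^{\intercal}$ is almost surely full rank, so all $N$ eigenvalues contribute and there is no degeneracy to track — which is what makes $\bm{X}\bm{X}^{\intercal}$ the correct object to analyze rather than the rank-$N$, size-$D$ matrix $\bm{X}^{\intercal}\bm{X}$. Because $t \mapsto \log(1 + \alpha^{-1}t)$ is concave, Jensen's inequality — as captured by the upper bound established in Appendix \ref{app:sec:chan_cap_lower_upper_bound} — gives $\mathcal{C} \le \frac{1}{2}\sum_{i=1}^N \log(1 + \alpha^{-1}\mathbb{E}[\mu_i])$, and concavity (equivalently, Schur-concavity of the sum, so that the equal-eigenvalue allocation is extremal) then permits replacing each $\mathbb{E}[\mu_i]$ by the common average $\frac{1}{N}\sum_i \mathbb{E}[\mu_i]$. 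The trace identity $\sum_i \mathbb{E}[\mu_i] = \mathbb{E}[\operatorname{tr}(\bm{X}\bm{X}^{\intercal})] = ND$, which follows since each $\bm{X}_{ij}\sim\mathcal{N}(0,1)$ contributes unit variance, fixes this average at $D$.

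Finally, substituting this average yields $\mathcal{C} \le \frac{N}{2}\log(1 + \alpha^{-1}D) = \frac{N}{2}\log(\text{SNR}+1)$, using $\alpha^{-1}D = (\text{SNR}/D)\,D = \text{SNR}$. Adding back the $\log(\mathbb{B}^D(\sqrt{P})/\sigma^D)$ and $\frac{D}{2}\log\alpha$ terms carried from Theorem \ref{thm:mean_reg_vol} then gives exactly the stated bound in Equation \eqref{eqn:modern_regime}.

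The only genuinely delicate point is the eigenvalue bookkeeping, and this is where I expect the argument to need the most care. In the classical Lemma \ref{lem:classical_understanding} the trace $ND$ is distributed over the $D$ eigenvalues of $\bm{X}^{\intercal}\bm{X}$, giving an average of $N$ and a prefactor of $D/2$; here the same trace is distributed over the $N$ eigenvalues of $\bm{X}\bm{X}^{\intercal}$, giving an average of $D$ and a prefactor of $N/2$. Getting this swap right — and rigorously justifying it through the full-rank property guaranteed by $D > N$, so that exactly $N$ (rather than fewer) eigenvalues enter the sum — is the crux of distinguishing the modern regime from the classical one.
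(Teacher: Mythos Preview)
Your proposal is correct and follows essentially the same route as the paper: invoke Theorem~\ref{thm:mean_reg_vol} to isolate $\mathcal{C}$, then bound $\mathcal{C}$ by Jensen's inequality exactly as in the Appendix~\ref{app:sec:chan_cap_lower_upper_bound} upper bound for $D>N$. The only cosmetic difference is that the paper applies Jensen once to the concave matrix map $M\mapsto\log\det(I_N+\alpha^{-1}M)$ and uses $\mathbb{E}[\bm{X}\bm{X}^{\intercal}]=D\,I_N$ directly, whereas you split this into an eigenvalue-wise Jensen step followed by an averaging step; both arrive at $\mathcal{C}\le \tfrac{N}{2}\log(\text{SNR}+1)$.
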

\begin{proof}
The proof follows as a direct application of the upper-bound developed in Appendix \ref{app:sec:chan_cap_lower_upper_bound}.
\end{proof}
Of particular interest is the observation that the upper-bound on $\mathcal{C}$ (first term of Equation \ref{eqn:modern_regime}) is no longer a function of $D$. In other words, the channel capacity \textit{saturates} with respect to $D$ when modeling over $D>N$. Although this is a classic result in information theory literature \cite{telatar1999capacity}, its inclusion in Lemma \ref{lem:modern_regime} to help ``circumvent'' the pathological singularities which manifest in the modern regime is novel. In fact, it will be shown shortly that this saturation helps establishing a nice geometric intuition in the $D>N$ regime. However, we first consider Theorem \ref{thm:final_full}, which includes the impact of the $\mathcal{O}(D)$ term on $\mathbb{E}[\log V_{\text{reg}}]$ (recall from Equation~\eqref{eqn:riss} that for a complete understanding on ``generalization behavior'' it is necessary to consider $\log V$ in tandem with $\mathcal{O}(D)$).
\begin{theorem} \label{thm:final_full}
In the modern statistical regime such that $D > N$,
\begin{align}\label{eq:ubound}
    \frac{D}{2}\log \frac{N}{2\pi e} + \mathbb{E}[\log V_{\text{reg}}] \leq \frac{D}{2} \log \frac{ND}{2\pi e} + \frac{N}{2}\log(\text{SNR} + 1) + \log \mathcal{B}^D (1).
\end{align}
\end{theorem}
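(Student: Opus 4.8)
The plan is to obtain \eqref{eq:ubound} as a direct algebraic consequence of Lemma~\ref{lem:modern_regime}, with no further analysis required. The only genuine content is recognising that the hard-power ball term secretly contributes a $+\frac{D}{2}\log\text{SNR}$, which will annihilate the $-\frac{D}{2}\log\text{SNR}$ hidden inside $\frac{D}{2}\log\alpha$ through $\alpha = D/\text{SNR}$.

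First I would add the deterministic quantity $\frac{D}{2}\log\frac{N}{2\pi e}$ to both sides of the bound in Lemma~\ref{lem:modern_regime}; since this term does not depend on the randomness in $\bm{X}$, the inequality direction is preserved, and the right-hand side becomes
\[
\frac{D}{2}\log\frac{N}{2\pi e} + \frac{N}{2}\log(\text{SNR}+1) + \log\bracket{\mathbb{B}^D(\sqrt{P})/\sigma^D} + \frac{D}{2}\log\alpha.
\]

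Next I would unpack the ball term using the volume formula $\mathbb{B}^D(R)=\tfrac{\pi^{D/2}}{\Gamma(D/2+1)}R^D$, which gives the scaling $\mathbb{B}^D(\sqrt{P}) = P^{D/2}\,\mathbb{B}^D(1)$ and hence
\[
\log\bracket{\mathbb{B}^D(\sqrt{P})/\sigma^D} = \log\mathbb{B}^D(1) + \tfrac{D}{2}\log P - D\log\sigma = \log\mathbb{B}^D(1) + \tfrac{D}{2}\log\text{SNR},
\]
where the last equality uses $\tfrac{D}{2}\log P - D\log\sigma = \tfrac{D}{2}\log(P/\sigma^2)$. Substituting $\alpha = D/\text{SNR}$ into $\frac{D}{2}\log\alpha$ and collecting the three $\frac{D}{2}$-scaled logarithms then yields
\[
\tfrac{D}{2}\Bigl(\log\tfrac{N}{2\pi e} + \log\text{SNR} + \log\tfrac{D}{\text{SNR}}\Bigr) = \tfrac{D}{2}\log\tfrac{ND}{2\pi e},
\]
leaving exactly the remaining terms $\frac{N}{2}\log(\text{SNR}+1) + \log\mathbb{B}^D(1)$ of \eqref{eq:ubound}.

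There is no real analytic obstacle; the task is entirely bookkeeping. The one place the argument could silently fail is the $\text{SNR}$ cancellation, so the step I would double-check is that the $+\frac{D}{2}\log\text{SNR}$ emerging from the ball radius precisely offsets the $-\frac{D}{2}\log\text{SNR}$ produced by $\alpha^{-1} = \text{SNR}/D$. Once this is confirmed the bound follows immediately, and it is worth observing that the leading $D$-term has migrated from the classical $\frac{D}{2}\log N$ of Lemma~\ref{lem:classical_understanding} to $\frac{D}{2}\log(ND)$, which is the fingerprint of the capacity saturation relied upon in Lemma~\ref{lem:modern_regime}.
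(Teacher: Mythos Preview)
Your proposal is correct and follows essentially the same route as the paper: both add $\frac{D}{2}\log\frac{N}{2\pi e}$ to Lemma~\ref{lem:modern_regime}, rewrite $\log\bracket{\mathbb{B}^D(\sqrt{P})/\sigma^D}$ as $\log\mathbb{B}^D(1)+\frac{D}{2}\log\text{SNR}$, and then use $\alpha=D/\text{SNR}$ to collapse the three $\frac{D}{2}$-scaled logarithms into $\frac{D}{2}\log\frac{ND}{2\pi e}$. The only difference is the order in which the logs are merged, which is immaterial.
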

\begin{proof}
\begin{align*}
 \frac{D}{2}\log \frac{N}{2\pi e} +  \mathbb{E}\left[\log V_{\text{reg}}\right] &\leq \frac{D}{2}\log \frac{N}{2\pi e} + \frac{N}{2}\log(\text{SNR} + 1) + \log \left(\mathcal{B}^D(\sqrt{P})/\sigma^D\right) + \frac{D}{2}\log \alpha \\
 &= \frac{D}{2}\log \frac{N}{2\pi e} + \frac{N}{2}\log(\text{SNR} + 1) + \log \left(\mathcal{B}^D(1)\cdot \sqrt{\text{SNR}^D}\right) + \frac{D}{2}\log \alpha \\
 &= \frac{D}{2}\log\frac{N\alpha}{2\pi e } + \frac{N}{2}\log(\text{SNR} + 1) + \frac{D}{2}\log \text{SNR} + \log\mathcal{B}^D(1) \\
 &= \frac{D}{2}\log\frac{ND}{2\pi e} +  \frac{N}{2}\log(\text{SNR} + 1) + \log\mathcal{B}^D(1).
\end{align*}
\end{proof}
Theorem \ref{thm:final_full} provides an indication on the different factors which ultimately contribute to the encoding (compressibility) of the isotropic, power-constrained regression hypothesis space under a modern, statistical viewpoint. The lower the value of this expression, the better the generalization properties of the chosen model, based on an Occam's Razor principle (smaller, simpler models should be able to generalize better). Interestingly, each term established in Theorem \ref{thm:final_full} (right-hand side of Equation~\eqref{eq:ubound}) proposes a distinct viewing angle to this problem. In particular:

\begin{itemize}
    \item The first term $\frac{D}{2}\log\frac{ND}{2\pi e}$: This is simply the standard AIC / BIC complexity term, except now it is weighted against an additional $\log D$ factor. That is, in the modern regime (for the proposed model), this term increases in the order of $\mathcal{O}(D\log D)$ instead of $\mathcal{O}(D)$.
    \item The second term $\frac{N}{2}\log(\text{SNR} + 1)$: This term refers to the fact that the channel capacity saturates. As will be clarified shortly, this term carries with it a strong geometric interpretation, which is driven by the notion of sphere packing phenomenon. If one considers $\bm{X}$ as a linear map $\bm{X}: \mathbb{R}^D \rightarrow \mathbb{R}^N$, then this term relates how the $D$-dimensional ball parameters help in \textit{filling} the lower $N$-dimensional data space with respect to the linear map $\bm{X}$.  
    \item The third term $\log\mathcal{B}^D(1)$: This term clarifies the impact of the proposed power constraint on the parameters, which can be considered to as strong form of ridge regularization on $\bm{\beta}$. In particular, this term would seem to suggest that there is a \textit{smoothing effect} over the parameter space in the $D>N$ regime. In other words, the power constraint (consequently, sufficiently strong ridge regularization) forces ``mass'' to be evenly distributed among the $D$ coefficients. If one considers the maximum entropy interpretation of $\bm{\beta}$, in particular each $\bm{\beta}_i\sim\mathcal{N}(0,P/D)$, evidently as $D$ increases, each sampled $\bm{\beta}_i$ will begin to concentrate closer around zero. Numerically, this manifests through the notion that  $\lim_{D\rightarrow \infty}\log\mathcal{B}^D(1)\rightarrow -\infty$, which serves to act as a counter-balance for the traditional $\mathcal{O}(D)$ complexity terms in AIC and BIC. 
\end{itemize}
%
\begin{figure}[t]
\centering
\begin{subfigure}{0.45\textwidth}
\centering
\includegraphics[width=\linewidth]{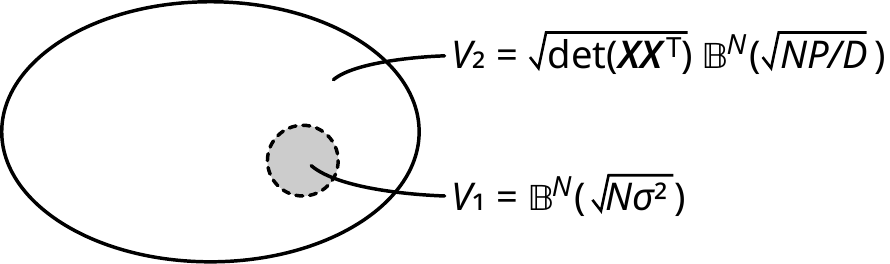} 
\caption{Volume ratios for ellipsoid sphere packing in linear regression.}
\label{fig:subim1}
\end{subfigure}\hspace*{10pt}
\begin{subfigure}{0.45\textwidth}
\centering
\includegraphics[width=\linewidth]{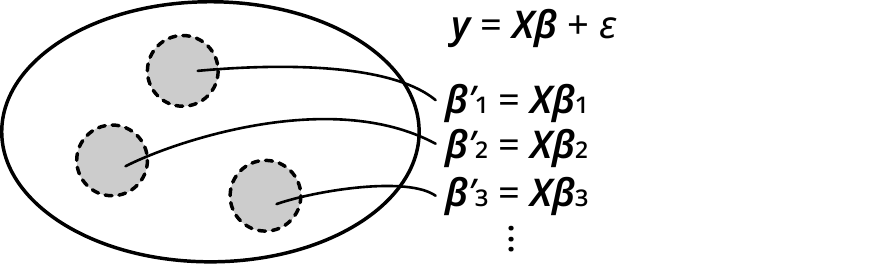}
\caption{New $\bm{\beta}$ inputs will map to different locations in the encasing $\bm{y}$ ellipsoid. These are $N$-balls of approximate radius $\sqrt{\mathbb{E}[\bm{\varepsilon}^{\intercal}\bm{\varepsilon}]}$.}
\label{fig:subim2}
\end{subfigure}
\caption{$N$-dimensional ellipsoid sphere packing. \label{fig:ellipse_pack}}
\label{fig:image2}
\end{figure}
As was mentioned in the above points, the $ \frac{N}{2}\log(\text{SNR} + 1)$ term is one which is intrinsically driven by geometric sphere packing argument. A common example of sphere packing as it arises in additive white Gaussian noise (AWGN)  \cite{tse2005fundamentals} is provided for the reader in Appendix \ref{app:sec:sphere_pack_AWGN}. Here however, we focus on showing how it manifests in $\mathcal{C}$, considering the model $\bm{y} = \bm{X\beta} + \bm{\varepsilon}$. Initially, we shall consider $\bm{X}$ to be fixed (that is, non-random). We base this calculation on two parts: (i) Considering what the model is trying to achieve without noise: $\bm{y}=\bm{X\beta}$, and (ii) The impact of noise. Considering (i), we will consider each $\bm{\beta}_i \sim \mathcal{N}(0,P/D)$, which implies that $\bm{y}=\bm{X}\bm{\beta}$ will have an ellipsoidal shape. That is, $\bm{\beta}$  will be scaled and rotated due to $\bm{X}$. Moreover, since $\bm{X}$ is a linear map, it is known that: $\text{Vol}(\bm{X \beta}) = \sqrt{\det\left(\bm{X X}^{\intercal}\right)}\text{Vol}(\bm{\beta})$. Finally, observe that since $\bm{X}: \mathbb{R}^D\rightarrow \mathbb{R}^N$, and that each $\bm{\beta}_i\perp\bm{\beta}_j$, we are ultimately dealing with a space wherein $\text{Vol}(\bm{\beta})=\mathbb{B}^N(\sqrt{NP/D})$. This results in  the value of $\text{Vol}(\bm{X \beta})\coloneqq V_2 = \sqrt{\det(\bm{X}\bm{X}^{\intercal})}\cdot\mathbb{B}^N(\sqrt{NP/D})$, in which most vectors transformed under $\bm{X\beta}$ will lie inside with high probability. Considering now point (ii), since $\text{dim}(\bm{\varepsilon})=N$, and $\bm{\varepsilon}\sim\mathcal{N}(\bm{0},\sigma^2I_N)$, the volume induced by the $N$-ball in regards to the presence of noise is, $V_1 = \mathbb{B}^N(\sqrt{N\sigma^2)}$. Ultimately a log-volume ratio can be calculated as: $ \log(V_2/V_1)= (1/2)\log \circ \det (\text{SNR}\cdot \bm{X}\bm{X}^{\intercal}/D)$. Now, if one considers each $\bm{X}_{ij}$ as being sampled iid from an underlying probability distribution, then with each newly sampled $\bm{X}$ a slightly different sphere packing problem is performed, as $\bm{X}$ manifests via the enlargement factor: $\sqrt{\det{\bm{XX}^{\intercal}}}$. Therefore, if one takes an expectation over $\sqrt{\det{\bm{XX}^{\intercal}}}$, then one is  effectively considering the \textit{average enlargement factor} over $\text{vol}(\bm{\beta})$. Thus, it is clear that one derives an upper-bound on $\mathcal{C}$ for the case of $\text{SNR} \gg D$. Note, also that a non-random $\bm{X}$ is indeed a valid (and classic) interpretation for $\mathcal{C}$ (see section 3.2 in~\cite{telatar1999capacity}).

Due to the sphere packing saturation in $\mathcal{C}$, it is clear that the main factors affecting the MDL code-length in the $D>N$ regime are that of (i) $\frac{D}{2}\log \frac{N}{2\pi e}$, and (ii) $\log\mathcal{B}^D(1)$. However, notice that $\mathcal{B}^D(1) \rightarrow 0$ as $D\rightarrow \infty$ at a factorial rate, and thus it would appear that generalization error should not necessarily \textit{explode} if one keeps increasing $D$---as per the usual classical conclusions implied by AIC and BIC. As a consequence, if generalization error does not increase catastrophically, one would anticipate the test error rates to either stabilize around some region (possibly asymptotically), or to steadily decrease towards zero. On this point, when working with regularized linear models the presence of an asymptotic behavior has been recently found and studied in the work of Hastie et al.~\cite{hastie2019surprises}. Moreover, in this work it is noted this asymptotic behaviour is a function of SNR, which is a term that appears very naturally in the proposed geometric formulation of this problem. In fact from Theorem \ref{thm:final_full}, a few conclusions can be drawn. These are: (i) As SNR decreases, the generalization capability should increase, (ii) As $D$ increases far beyond $N$, if regularization is strong (thereby enforcing the proposed power constraint), then one does not necessarily expect poor generalization ability (that is, we do not expect test error to continue to explode). The condition of strong regularization ensures that that the ball geometry term, $\log \mathcal{B}^D(1)$ is applicable. Finally, (iii) Increasing $N$ in the $D>N$ regime does not appear as a strong influence. In particular, $D$ increases its effect is logarithmic ($\log N$), and if $N$ and $\text{SNR}$ are fixed, then the saturating effects of $\mathcal{C}$ are fixed. Ultimately, it seems to mainly add an additional overhead to the designated code-length of the models in the hypothesis space. This last point is interesting to make note of, since there is a commonly held belief that ``more data is \textit{always} better''. 

Finally, although it would be tempting to apply these results directly into learning theory bounds on the generalization error, the reader is reminded that MDL theory is by principle designed to primarily address the problem of data compression (with the intuition that simpler models \textit{should} generalize better based on an Occam's Razor principle). Therefore the MDL code-length is not necessarily amenable for use in such bounds.\footnote{MDL code-length can be used in the case of assuming zero-training error, but this is not a situation which arises naturally here, as training error is clearly non-zero in $D>N$ \cite{seldin2009pac}.}

Empirical results for the out-of-sample test error in the case of isotropic linear regression model, and the impact of the magnitude of $\|\bm{\beta}\|_2^2$ is made clear in Figure~\ref{fig:double_desc_reg_curves}. To generate these results we used  {\footnotesize\textsf{sklearn.linear\_model.Ridge}}, as (i) The ridge regression hyper-parameter, $\alpha$, is used to emulate the effect of a power constraint, and (ii) We intend to keep the code as simple as possible to maximize reproducibility. For these experiments, we generated data according to $\bm{X}_{ij}\sim\mathcal{N}(0,1)$, and we assumed that there exists a \textit{true} underlying generating process, of some dimension, which we choose wlog as 150. We keep increasing $D$ (the number of feature variables of the proposed model fit), up to and beyond 150, ranging from $D=1$, up to $D=2500$. In this way, we transition from \textit{tall} $\bm{X}$ matrices to \textit{long} $\bm{X}$ matrices, and are able to demonstrate a very clear transition from the classical, to the modern train-test risk regimes. In addition, 10-fold cross validation is performed in order to produce the train-test curves, and we opt for three values of the ridge hyper-parameter: $\alpha = (10^{-2},10^0,10^2)$, chosen wlog, in order to demonstrate how the double descent behaviour can be slowly switched on-and-off, and considered $N=(300,600,900)$ datapoints. Moreover, the random seed is fixed between runs, and each $\bm{\beta}_i$ is sampled from $\mathcal{N}(0,P=0.25)$, with additive noise of $\bm{\varepsilon}\sim\mathcal{N}(0,1)$, so that we work with a fixed $\text{SNR}$ between each case. Lastly, the experiments were run on a 64 bit Windows OS, with Intel(R) Core(TM) i7-7500U CPU @ 2.70 GHz, since these experiments are designed to be straight-forward, and reproducible by everyone.
\begin{figure}[t]
  \centering
  \includegraphics[ width=\linewidth]{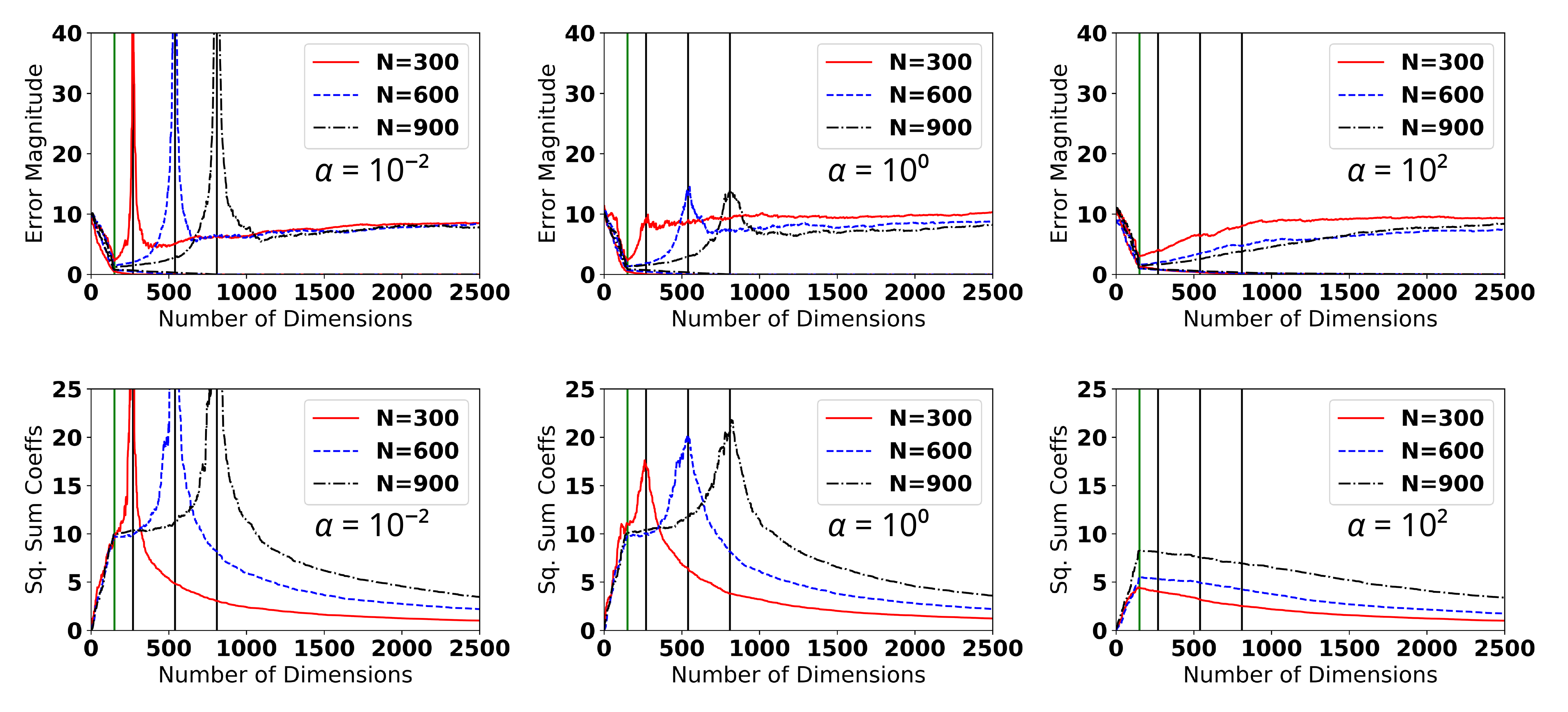}
  \caption{How double descent risk manifests for $N=(300,600,900)$ and $\alpha = (10^{-2},10^0,10^2)$. Risk curves (upper row) and $\|\bm{\beta}\|_2^2$ plots (bottom row).}
  \label{fig:double_desc_reg_curves}
\end{figure}

Firstly, in Figure \ref{fig:double_desc_reg_curves} we can see that the classical (U-shaped), and modern (double descent) regimes are visible. As noted by many researchers, the peak of the observed double descent phenomenon (the \textit{interpolation threshold}), occurs at $N=D$~\cite{belkin2019reconciling,belkin2019reconciling,hastie2019surprises}. When considering the train-test curves based on, $\alpha=10^{-2}$, it is evident that out-of-sample test error indeed peaks at $N=D$. However, for continually increasing $D$, all the test risk curves reach a point of saturation, regardless of the choice of $\alpha$. This was implied to occur due to the saturating effect of $\mathcal{C}$, when $D\rightarrow\infty$. Moreover, it is clear that for each run, the train-test curves asymptote to the same error magnitude, which results due to the fixed $\text{SNR}$ between each run. Such a behaviour is predicted to occur from Hastie et al. \cite{hastie2009elements}, wherein they claim that the asymptotic behaviour for risk curves (in linear problems), is highly dependent upon $\text{SNR}$. And indeed here, $\text{SNR}$ arises naturally in $\log V$, and when considering $\mathcal{C}$. In addition to this, the peaking behaviour of double descent risk coincides with the peaking locations of $\|\bm{\beta}\|_2^2$, and it scales strongly dependent upon the choice of $\alpha$. Such a behaviour was anticipated to occur via the considerations of model distinguishability, and sphere packing. In other words, the outer ellipsoid: $V_2 = \sqrt{\det(\bm{X}\bm{X}^{\intercal})}\cdot\mathbb{B}^N(\sqrt{NP/D})$, has a volume dependent on $P$, which is in turn reflective of the maximum magnitude allowed by $\|\bm{\beta}\|_2^2$. Evidently if $P$ \textit{explodes}, then, $V_2$ will explode in volume, relative to the noise spheres: $V_1$, and the relative sizes of these volumes is indicative of out-of-sample test error. In other words, when only weak regularisation is applied ($\alpha=10^{-2}$), it is not possible to properly constrain the magnitude of $\|\bm{\beta}\|_2^2$, especially at the interpolation threshold ($N=D$), and thus $V_2$ like-wise explodes in magnitude. However as we progressively strengthen the effect of regularisation: $\alpha = (10^0,10^2)$, we observe a gradual reduction, and even a complete elimination of the double descent risk pathology. Recently, Nakkiran et al. \cite{nakkiran2020optimal} have also investigated the impact of ridge regularization on the double descent peaking phenomenon, wherein they observe that an \textit{optimally tuned} regularizer can work to eliminate the presence of this peaking behaviour. However, the intuition behind this has not yet been elucidated from a geometric angle. 

In addition, the double descent peak is observed to shift to the right with increasing $N$. Information theoretically, increasing $N$ proliferates the total number of possible encodings which may be able to explain the observed data. This is an interpretation which is consistent with Rissanen's original derivation of MDL, in which he states: ``the number of distinguishable models grows with the length of the data, which seems reasonable. In view of this we define
the model complexity (as seen through the data)'' ~\cite{rissanen1996fisher}. Interestingly, this can imply that when a model's total distinguishability is insufficient (weak regularization, and or insufficient noise), it is possible for the model to generalize well on one quantity of data, $N_1$, and then upon re-training on some new data such that $N_2 > N_1$, to then generalize poorly, due to the ability of the double-descent cusp to shift towards the right. Similar ideas have been uttered recently by Nakkiran et al., in that: ``for a fixed architecture and training procedure, more data (can) actually hurt''~\cite{nakkiran2019deep}. 

Finally, we make clear that the concept of distinguishability (as it stems from $\log V$), has an intuition which is shared by many schools of thought on double descent risk, and generalization error. For example: (i) It bears strong intuitive similarities to the idea of a \textit{jamming transition}, which refers to the tight packing of particles in physics, when a material transitions from fluid to solid. This idea is used by Geiger et al.~\cite{geiger2019jamming,geiger2020scaling} to clarify several ideas on double descent risk. Moreover, (ii) Many statistical generalization theorems are based on the notion of ``sphere coverings'' over constrained function spaces, in which the sphere covering number, and sphere packing number, closely relate~\cite{vapnik2013nature}. 


\section{Statistical Lattice Models and Model Volumes}

Statistical lattice models are popular, traditional machine learning models which include \emph{Boltzmann machines} (or Ising models)~\cite{Ackley85}, log-linear models~\cite{Amari01}, and the matrix balancing problem~\cite{sugiyama2017tensor}. In this section, we will work the hierarchical encoding of a probability distribution via a \emph{lattice} structure \cite{sugiyama2016information}, which will be shown to naturally lead into the $\eta$ co-ordinates ($m$-flat) from information geometry (see \S\ref{sec:info_geo}), and analyze the learning of distributions over a lattice structured domain.

Formally, a \emph{partially ordered set} (\emph{poset}) is a tuple, $(\mathcal{P},\leq_{\mathcal{P}})$, where $\mathcal{P}$ is a set of elements, and $\leq_{\mathcal{P}}$ denotes an ordering structure, such that (1) $\forall p\footnote{Since elements from a poset are denoted by $p$, we denote a probability measure as $\mathbb{P}$ when referencing poset systems.} \in \mathcal{P},$  $p \leq_{\mathcal{P}} p$ (reflexivity), (2) If $p\leq_{\mathcal{P}} q$, and $q\leq_{\mathcal{P}} p$, then $p=q$ (antisymmetry), and (3) If $p\leq_{\mathcal{P}} q$, and $q \leq_{\mathcal{P}} r$, then $p \leq_{\mathcal{P}} r$. Note that not every element may be directly comparable to every other element in the set (which would be known as a \textit{total} ordering). In addition, a poset $(\mathcal{P}, \leq_{\mathcal{P}})$ is called a \emph{lattice} if every pair of elements $p, q \in \mathcal{P}$ has the least upper bound $p \vee q$ and the greatest lower bound $p \wedge q$~\cite{Davey02}.
We assume that $\mathcal{P}$ is finite. In working with posets it is common to consider the zeta function, $\zeta: \mathcal{P}\times \mathcal{P} \rightarrow \{0,1\}$ such that $\zeta(p,q) = \mathbf{1}_{p\leq q}$~\cite{Gierz03}. The lattice structure always gives us the $\theta$ and $\eta$ co-ordinates of a statistical manifold: $\log \mathbb{P}(p) = \sum_{q \in \mathcal{P}} \zeta(q, p) \theta_q = \sum_{q \le p} \theta_q$ and $\eta_p = \sum_{q \in \mathcal{P}} = \zeta(p, q) \mathbb{P}(q) = \sum_{q \ge p} \mathbb{P}(q)$~\cite{sugiyama2017tensor}.
For example, for Boltzmann machines with $d$ binary variables, the lattice space $\mathcal{P} = \{0, 1\}^n$, where $p = (p_1, \dots, p_n) \leq_{\mathcal{P}} q = (q_1, \dots, q_n)$ if $p_i \le q_i$ for all $i \in \{1, \dots, n\}$.
The size $D = |\mathcal{P}| = 2^n$ in this case. The metric tensor for the information manifold of the proposed lattice structure is shown in Theorem \ref{thm:sugiyama_metric}, which was previously derived by Sugiyama et al. \cite{sugiyama2017tensor}.
We assume that $\mathcal{P} = \{1, \dots, |\mathcal{P}|\}$ such that $1$ corresponds to the least element without loss of generality.
\begin{theorem}[Lattice Metric Tensor \cite{sugiyama2017tensor}] \label{thm:sugiyama_metric}$    \mathcal{G}_{ij} = \sum_{p\in\mathcal{P}}\zeta(i,p)\zeta(j,p)\mathbb{P}(p) - \eta_i\eta_j = \eta_{i \vee j} - \eta_i\eta_j$.
\end{theorem}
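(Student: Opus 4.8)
The plan is to recognize the lattice log-linear model as an exponential family in the natural ($\theta$) coordinates, compute the Fisher information directly from its definition $\mathcal{G}_{ij} = \mathbb{E}[\partial_i \log \mathbb{P} \cdot \partial_j \log \mathbb{P}]$, and then exploit the lattice join to collapse the resulting product of indicator functions.

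First I would make the normalization explicit. Since the least element $\bot$ (the element $1$ in the paper's convention) satisfies $\bot \le p$ for every $p\in\mathcal{P}$, the parameter $\theta_\bot$ appears in $\log\mathbb{P}(p) = \sum_{q\le p}\theta_q$ for every $p$ and therefore plays the role of the negative log-partition function. Writing $\log\mathbb{P}(p;\theta) = \sum_{\bot \neq q \le p}\theta_q - \psi(\theta)$ with $\psi(\theta) = \log\sum_p \exp(\sum_{\bot \neq q \le p}\theta_q)$ exhibits this as an exponential family whose sufficient statistics are $F_i(p) = \zeta(i,p)$ for $i\neq\bot$. The standard exponential-family identity then gives $\partial_i\psi = \sum_p \zeta(i,p)\mathbb{P}(p) = \eta_i$, which is exactly the stated $\eta$ coordinate, so that $\partial_i\log\mathbb{P}(p;\theta) = \zeta(i,p) - \eta_i$.

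Substituting into the Fisher-information definition yields the covariance of the sufficient statistics:
\[
\mathcal{G}_{ij} = \mathbb{E}\!\left[(\zeta(i,\cdot)-\eta_i)(\zeta(j,\cdot)-\eta_j)\right] = \sum_{p\in\mathcal{P}}\zeta(i,p)\zeta(j,p)\mathbb{P}(p) - \eta_i\eta_j,
\]
where the cross terms cancel because $\mathbb{E}[\zeta(i,\cdot)] = \eta_i$. This establishes the first equality. For the second equality I would invoke the defining property of the join: $\zeta(i,p)\zeta(j,p) = \mathbf{1}_{i\le p}\,\mathbf{1}_{j\le p}$ is the indicator that $p$ is a common upper bound of $i$ and $j$, and since $i\vee j$ is by definition the \emph{least} upper bound, $i\le p$ and $j\le p$ hold if and only if $i\vee j \le p$. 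Hence $\zeta(i,p)\zeta(j,p) = \zeta(i\vee j,p)$, and summing against $\mathbb{P}$ gives $\sum_p \zeta(i\vee j,p)\mathbb{P}(p) = \eta_{i\vee j}$ directly from the definition of the $\eta$ coordinate.

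The conceptual heart, and the only place the lattice (as opposed to merely poset) structure is used, is the indicator identity $\zeta(i,p)\zeta(j,p)=\zeta(i\vee j,p)$; it is a one-line observation but is precisely what lets the metric be written in closed form via $\eta_{i\vee j}$. The main technical care lies instead in the first half: correctly isolating the bottom element as the normalizer and confirming $\partial_i\psi = \eta_i$, so that the Fisher information reduces cleanly to the covariance of the $\zeta$ statistics. (As a consistency check, taking $i=\bot$ gives $\zeta(\bot,p)\equiv 1$, hence $\eta_\bot = 1$ and $\mathcal{G}_{\bot j} = \eta_j - \eta_j = 0$, reflecting that $\theta_\bot$ is not a free parameter.)
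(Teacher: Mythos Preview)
Your proof is correct and complete. The paper does not actually prove this theorem in full; it cites the result from Sugiyama et al.\ and only supplies a one-line justification for the second equality, namely that under the lattice assumption $\eta_{i\vee j}$ always exists so that $\sum_{p}\zeta(i,p)\zeta(j,p)\mathbb{P}(p)$ may be replaced by $\eta_{i\vee j}$. This is precisely your join-indicator identity $\zeta(i,p)\zeta(j,p)=\zeta(i\vee j,p)$, so on the one piece the paper does argue, your approach coincides with it exactly; your derivation of the first equality via the exponential-family covariance of the $\zeta$ sufficient statistics is the standard route and fills in what the paper leaves to the citation.
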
\vspace*{-5pt}
 In Theorem~\ref{thm:sugiyama_metric}, we can replace $\sum_{p\in\mathcal{P}}\zeta(i,p)\zeta(j,p)\mathbb{P}(p)$ with $\eta_{i \vee j}$ as we assume that $\mathcal{P}$ is a lattice and $\eta_{i \vee j}$ always exists, which says that we only consider those poset structures in which the $\eta$ co-ordinate values are shared (nested) between $\eta_i$ and $\eta_j$ for the off-digagonal terms in the metric tensor.
 In Theorem \ref{thm:sugiyama_metric} it is clear that this metric tensor is expressed in terms of the $\eta$ co-ordinates. The equivalent metric tensor in terms of the $\theta$ co-ordinates is available, but it is much more difficult to work with (requires the M\"obius function instead of the zeta function). Moreover, in this definition of the metric tensor, the first row and column are always zeros, resulting in again, a singular geometry. Luckily this time however, since $-\theta_1$ corresponds to the partition function, it is generally removed in practice,~\cite{sugiyama2018legendre} so that we effectively work with co-ordinates $\bm{\eta}' = (\eta_2, \dots, \eta_D)$, resulting in $\mathcal{G}'\succ 0$.
 
 Based on this set-up, it is possible to derive the upper and lower bounds for $\log V$. For lattice models, the $\eta$ co-ordinates lie compactly on a simplex within the unit $D$-hypercube, that is $\Omega = [0,1]^D$, which makes evaluations much simpler~\cite{sugiyama2016information}. In fact, it is possible to perform the re-parameterization: $\bm{\delta}=f(\bm{\eta})$, which allows $\bm{\delta}$ to be sampled from a Dirichlet distribution. This makes the $\eta$ co-ordinate more intuitive to work with, and provides us with a tractable way to evaluate the volume integral via sampling. We provide details of this re-parameterization in Appendix \ref{app:sec:reparam_poset}. The $\log V$ bounds which result are shown in Theorem \ref{thm:poset_vol_bound}, with the proof clarified in Appendix \ref{app:sec:poset_bounds_proof}.
\begin{theorem}[Lattice Log Volume Bounds] \label{thm:poset_vol_bound} $\log V$ is bound as in Equation~\eqref{eqn:upper_lower_poset}, where $\mathcal{G}=\mathcal{M}^{\intercal}\mathcal{M}$, $\bm{\delta}=f(\bm{\eta})$ is a re-paramterization, and $\Gamma(D)=(D-1)!$ is the standard Gamma function.
\begin{equation}\label{eqn:upper_lower_poset}
\footnotesize{  \overbrace{\mathbb{E}\left[\sum_{i=1}^D \log\left(\mathcal{M}_{ii}(\bm{\delta})\right)\right]}^{\text{\tiny{``Richness''}}} + \overbrace{\log\left(\frac{1}{\Gamma(D)}\right)}^{\text{\tiny{``Distinguishability ''}}} \leq \log V \leq \overbrace{\log\left(\mathbb{E}\left[ \sqrt{\prod_{i= 1}^D \mathcal{G}_{ii}(\bm{\delta})}\right]\right)}^{\text{\tiny{``Richness''}}} + \overbrace{\log\left(\frac{1}{\Gamma(D)}\right) }^{\text{\tiny{``Distinguishability ''}}} }.
\end{equation}
\end{theorem}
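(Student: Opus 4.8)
The plan is to start from the information-geometric definition of the model volume, $\log V = \log \int_{\Omega} \sqrt{\det \mathcal{G}(\bm{\eta})}\, d\bm{\eta}$, where $\Omega$ is the reduced $\eta$-coordinate support obtained after deleting the degenerate first coordinate (so that the operative metric is $\mathcal{G}' \succ 0$, as noted above Theorem~\ref{thm:poset_vol_bound}). The first move is to invoke the reparameterization $\bm{\delta} = f(\bm{\eta})$ of Appendix~\ref{app:sec:reparam_poset}, under which $\bm{\delta}$ is Dirichlet-distributed on a standard corner simplex $\{\bm{\delta}: \delta_i \ge 0,\ \sum_i \delta_i \le 1\}$. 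This rewrites the volume integral as the support volume times an expectation, $V = \mathrm{Vol}(\Omega)\cdot \mathbb{E}_{\bm{\delta}}[\sqrt{\det \mathcal{G}(\bm{\delta})}]$, with $\mathrm{Vol}(\Omega) = 1/\Gamma(D)$. Taking logarithms immediately isolates the common ``distinguishability'' constant $\log(1/\Gamma(D))$ present in both bounds, so the whole task reduces to sandwiching the single quantity $\log \mathbb{E}_{\bm{\delta}}[\sqrt{\det \mathcal{G}(\bm{\delta})}]$ by the two ``richness'' terms.

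For the upper bound I would apply Hadamard's inequality: since $\mathcal{G}(\bm{\delta})$ is positive definite, $\det \mathcal{G} \le \prod_i \mathcal{G}_{ii}$ pointwise, hence $\sqrt{\det \mathcal{G}} \le \sqrt{\prod_i \mathcal{G}_{ii}}$ for every $\bm{\delta}$. Monotonicity of the expectation followed by monotonicity of $\log$ then yields $\log \mathbb{E}[\sqrt{\det \mathcal{G}}] \le \log \mathbb{E}[\sqrt{\prod_i \mathcal{G}_{ii}}]$, which is precisely the richness term on the right-hand side of Equation~\eqref{eqn:upper_lower_poset}.

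For the lower bound I would exploit the factorization $\mathcal{G} = \mathcal{M}^{\intercal}\mathcal{M}$. Taking $\mathcal{M}$ to be the triangular Cholesky factor, $\sqrt{\det \mathcal{G}} = |\det \mathcal{M}| = \prod_i \mathcal{M}_{ii}$, the diagonal entries being strictly positive because $\mathcal{G}' \succ 0$. Thus $\log \mathbb{E}[\sqrt{\det \mathcal{G}}] = \log \mathbb{E}[\prod_i \mathcal{M}_{ii}]$, and since $\log$ is concave, Jensen's inequality gives $\log \mathbb{E}[\prod_i \mathcal{M}_{ii}] \ge \mathbb{E}[\log \prod_i \mathcal{M}_{ii}] = \mathbb{E}[\sum_i \log \mathcal{M}_{ii}]$, which is the richness term on the left-hand side. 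Reinstating the factored-out constant $\log(1/\Gamma(D))$ in each case chains the two inequalities into the stated result.

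The hard part will not be the matrix inequalities, which are standard, but the reparameterization step: one must verify that $f$ pushes the Lebesgue measure on $\Omega$ forward to the Dirichlet measure with the correct Jacobian and that the induced support volume is exactly $1/\Gamma(D)$ — which is exactly why that computation is relegated to Appendix~\ref{app:sec:reparam_poset}. The only other care required is bookkeeping around the removed first coordinate, ensuring $\mathcal{G}' \succ 0$ so that each $\mathcal{G}_{ii}>0$ and $\mathcal{M}_{ii}>0$ and every logarithm and square root appearing above is well defined.
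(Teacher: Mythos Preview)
Your proposal is correct and follows essentially the same route as the paper: reparameterize to $\bm{\delta}$ so that $\log V = \log \mathbb{E}[\sqrt{\det \mathcal{G}(\bm{\delta})}] - \log\Gamma(D)$, then bound the expectation above by Hadamard's inequality and below by Jensen plus the Cholesky identity $\sqrt{\det \mathcal{G}} = \prod_i \mathcal{M}_{ii}$. The only cosmetic difference is ordering: the paper applies Jensen first and then substitutes the Cholesky product, whereas you substitute first and then apply Jensen, which is immaterial.
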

\begin{proof}
Proof is given in Appendix~\ref{app:sec:poset_bounds_proof}.
\end{proof}
As Theorem \ref{thm:poset_vol_bound} makes clear, the $\log V$ term can be decomposed into two components which we define as: (i) \textit{Model richness}: which is driven by the elements found in the metric tensor, and (ii) \textit{Model distinguishability}: which refers to the volume of a \textit{probability simplex}. Intuitively, (i) For (higher-order) Boltzmann machines, multi-way interactions can be encoded in a desired lattice structure, and this in turn will drive the construction of the metric tensor via the $\eta$ co-ordinate system (Theorem \ref{thm:sugiyama_metric}). Thus, since the metric tensor depends strongly upon the chosen lattice, and since it is used in defining angles and geodesics over a manifold, it would appear that this expression encodes a notion of \emph{model richness} and or expressibility over the manifold. As for point (ii), since the $\eta$ co-ordinate system is constrained to lie on a simplex geometry (as made explicit by the poset structuring) the volume is: $1 / \Gamma(D) = 1 / (D-1)!$. Evidently, as dimensionality increases the simplex volume decreases. A nice combinatorial intuition of this is that for $D$ randomly sampled numbers $\{n_i\}_i^D$, the probability of obtaining a permutation which is precisely the total ordering of these $D$ points (i.e. $n_1<n_2<...<n_D$) is $1/D!$. Thus, even if the AIC $\mathcal{O}(D)$ model complexity term grows without bound, the simplex constraint over the $D$ parameters serves to act as a strong counter-balance. In fact, this counter-balancing imbues the MDL expression with a \textit{double descent}-like behaviour. As $D$ increases, the value of the MDL expression first increases, and then decreases. Since MDL relates strongly to the notion of model generalizbility the double descent phenomenon which arises here paints an intriguing picture for the double descent risk phenomenon that has empirically arisen in the deep learning field. A visual example of this on toy values is made clear in Figure \ref{fig:sub-DD_logV}. 
\begin{wrapfigure}{r}[0pt]{.45\textwidth}
  \centering
  \includegraphics[width=.8\linewidth]{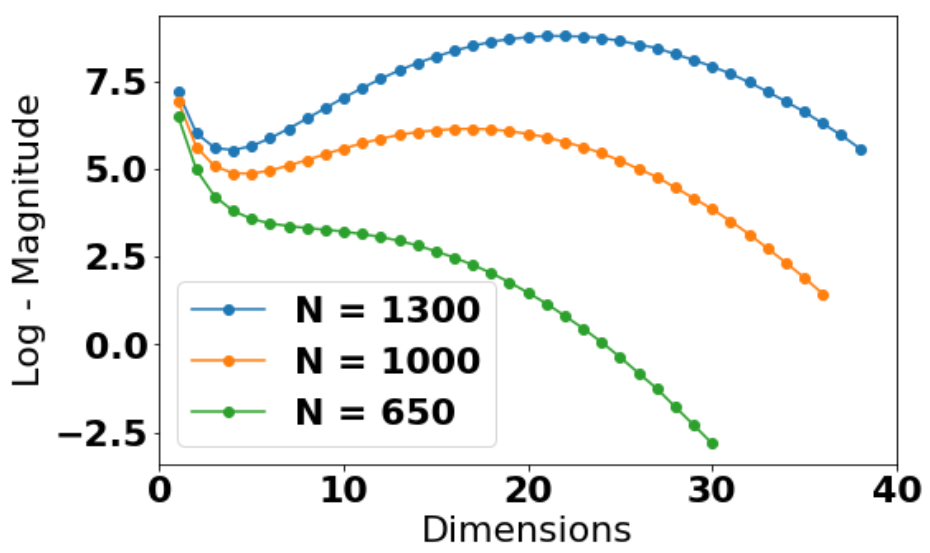}  
    \caption{Three different data cases on MDL evaluations.}
  \label{fig:sub-DD_logV}
\end{wrapfigure}
It is important to note that our analysis assumes access to a \textit{fully observable} lattice model (and thus is not immediately applicable to restricted Boltzmann machines without careful consideration). This is because in latent hierarchical settings the underlying geometry can become geometrically singular, which complicates the global model volume calculation with respect to the FIM~\cite{watanabe2009algebraic,sun2019lightlike}.

In regards to (ii) model distinguishability, it should be noted that the term \textit{distinguishability} is somewhat overloaded here, since in classic MDL literature it has been traditionally reserved for the entire expression: $\int_{\bm{\Theta}} \sqrt{\text{det}(\mathcal{I}(\bm{\theta}))}d\bm{\theta}$ \cite{balasubramanian2005mdl,rissanen1997stochastic}.
Our proposal is to elaborate it slightly by splitting this term into two terms, where one term works to clarify the importance of the model architecture, and the other term clarifies the constraints that exist in the underlying parameter space. Inspecting these terms in the limit of the over-parameterized regime results in Remark \ref{rem:volume_poset_limit_0} (see Appendix~\ref{app:sec:poset_vol_0} for its proof).
\begin{remark}[Limiting Lattice Volume]\label{rem:volume_poset_limit_0}$
    \lim_{D\rightarrow \infty } V = 0$.
\end{remark}\vspace*{-5pt}
Thus, the volume of the proposed statistical lattice models tends towards zero for large $D$, and this limit can be said to converge factorially due to the simplex volume.


\section{Conclusion}
Motivated through the works of Rissanen, and Balasubramanian, investigation of the $\log V$ term has shown that a geometrical perspective is invaluable when studying the behavior of certain models in the modern statistical regime. In particular, the $\log V$ term readily calls for a coding theory perspective when analyzing such models, which then makes clear of several saturation effects which can arise. It appears that certain models are characterized by a high degree of compressibility, from which it is possible to invoke an Occam's Razor-like principle in regards to explaining why out-of-sample test error is not necessarily catastrophic when dimensionality far exceeds the number of datapoints. 




\appendix
\section*{Appendix}

\setcounter{remark}{2}
\setcounter{theorem}{5}
\setcounter{equation}{5}
\setcounter{figure}{4}

\section{Isotropic Linear Regression}

\subsection{Fisher Information Matrix in Linear Regression}\label{app:reg:iso_vol_lem}

In order to prove Theorem \ref{lem:iso_vol_lem} we first make note of Remark \ref{rem:iso_FIM_lem}, which develops the FIM (metric tensor) for linear regression. 

\begin{remark}[Linear Regression FIM]\label{rem:iso_FIM_lem}
The FIM (Riemannian metric tensor) for the proposed linear regression model is $
     \mathcal{G}(\bm{\beta}) = \mathbb{E}\left[\bm{X}^{\intercal}\bm{X}\right]/\sigma^2.$
\end{remark}

\begin{proof}
The proposed regression can be represented probabilistically as :
\begin{alignat*}{2}
     &\quad& p(y_i|\bm{x}_i,\bm{\beta};\sigma^2) &= \frac{1}{\sqrt{2\pi}\sigma^2}\exp\left(-\frac{1}{2\sigma^2}(y_i - \bm{x}_i^{\intercal}\bm{\beta})^2\right) \\
 \Rightarrow&      &  \log p(y_i|\bm{x}_i,\bm{\beta};\sigma^2)  &= -\log(\sqrt{2\pi}\sigma^2) -\frac{1}{2\sigma^2}(y_i - \bm{x}_i^{\intercal}\bm{\beta})^2 \\
 \Rightarrow&      &  \nabla_{\beta}^2 \log p(y_i|\bm{x}_i,\bm{\beta};\sigma^2)  &= -\frac{\bm{x}_i\bm{x}_i^{\intercal}}{\sigma^2},
 \end{alignat*}
  where we have assumed variance $\sigma^2$ is known. Generalizing this to the data matrix $\bm{X}$, negating the above expression, and taking the expectation, we obtain the Fisher information matrix (Riemannian metric tensor) as required.
 \end{proof}
 
 The proof of Theorem \ref{lem:iso_vol_lem} thus completes as follows: 
 
 \begin{proof}
 Since $V=\int_{\mathcal{B}} \sqrt{\det\left(\mathcal{G}(\bm{\beta})\right)}d\bm{\beta}$, where, $\mathcal{G}(\bm{\beta})=\mathbb{E}\left[\bm{X}^{\intercal}\bm{X}\right]/\sigma^2$.
  \begin{align*}
      V &= \int_{\mathcal{B}} \sqrt{\det\left(\frac{\bm{X}^{\intercal}\bm{X}}{\sigma^2}\right)}d\bm{\beta}\\ 
      &= \sqrt{\det\left(\frac{\bm{X}^{\intercal}\bm{X}}{\sigma^2}\right)} \int_{\mathcal{B}} d\bm{\beta}\\
     &= \frac{\sqrt{\det\left(\bm{X}^{\intercal}\bm{X}\right)}}{\sigma^D}\idotsint_{\beta_1^2 + ... + \beta_D^2 \leq P} d\bm{\beta} \\
     &= \sqrt{\det\left(\bm{X}^{\intercal}\bm{X}\right)}\frac{\mathbb{B}^D(\sqrt{P})}{\sigma^D},
 \end{align*}
 Thus, arriving at $\log V = \frac{1}{2}\log\circ \det\left(\bm{X}^{\intercal}\bm{X}\right) +\log \frac{\mathbb{B}^D(\sqrt{P})}{\sigma^D}$, where $\mathbb{B}^D(\sqrt{P})$ is the volume of a $D$-ball of radius $\sqrt{P}$.
 \end{proof}

  We make clear that the condition used to evaluate the integral in Theorem \ref{lem:iso_vol_lem} is based on integrating over the domain: $\bm{\beta}^{\intercal}\bm{\beta}\leq P$, where $\bm{\beta}\in\mathbb{R}^D$. This is proposed as the hard power constraint. Due to the need to define the channel capacity, $\mathcal{C}$, later in this paper, $\bm{\beta}$ will eventually be considered to act as a random vector. This extension is necessary later when defining channel capacity, wherein it will be assumed that each $\bm{\beta}_i \sim \mathcal{N}(0,P/D)$, implying that $\bm{\beta}^{\intercal}\bm{\beta}\sim \frac{P}{D}\chi^2(D)\Rightarrow\mathbb{E}[\bm{\beta}^{\intercal}\bm{\beta}]=P$. The choice of distribution on each $\bm{\beta}_i$ is driven by a maximum entropy argument, based on the first and second statistical moments. Specifically, it is known that the maximal entropy distribution with a specified mean (which we take as zero), and a specified covariance, $\mathbb{E}[\bm{\beta}\bm{\beta}^{\intercal}]$, is Gaussian, and further this covariance naturally relates to the required power constraint through: $\mathbb{E}[\bm{\beta}^{\intercal}\bm{\beta}]=\text{tr}\left(\mathbb{E}[\bm{\beta}\bm{\beta}^{\intercal}]\right)$. Thus, based on a maximum entropy argument we opt for $\bm{\beta}_i \sim \mathcal{N}(0,P/D)$. Additionally, it is known that the norm of a $D$-dimensional random vector of sub-Gaussain components, distributed as $\bm{\beta}_i \sim \mathcal{N}(0,P/D)$ will take values close to $\sqrt{P}$ (the radius of the $D$-sphere in question) with high probability. This can be seen in the concentration inequality: $\mathbb{P}\left(| \|\bm{\beta}\|_2 - \sqrt{P} |\geq t\right)\leq 2\exp\left(-c t^2 / K^4\right)$, for all $t>0$, where $c$ is a constant, and $K=\max_i \|\bm{\beta}\|_{\psi_2}$, where $\|\cdot\|_{\psi_2}$ is the sub-Gaussian norm (see Equation (3.3) in \cite{vershynin2018high}). Thus the domain of the $D$-ball integral will contain most $\bm{\beta}$ vectors with a high probability, and thus the volume of the $D$-balls calculated hold with high probability.


\subsection{Channel Capacity Redundancy Theorems}\label{app:sec_baron_clark}
Here, we make clear that channel capacity (which manifests as the supremum of a mutual information), is related to the MDL generalization term via the min-max KL risk. These ideas are well-known, and stated by Clarke \& Barron in $\S 2$ and $\S 5.1$ of \cite{clarke1994jeffreys}, and by Rissanen in $\S 1$ and $\S 4$ of \cite{rissanen1996fisher}.   

\begin{theorem}[Redundancy-Capacity Theorem \cite{clarke1994jeffreys}] \label{thm:red_cap_dual} The maximum channel capacity equals to the min-max KL divergence, 
\begin{align}
    \overbrace{\sup_{p(\bm{\beta})} \text{KL}\left(p(y|\beta)p(\beta) \| p(\beta) p(y)\right)}^{\text{\tiny{Channel Capacity}}} = \inf_{p\in\mathcal{P}} \sup_{\beta\in\mathcal{B}} \left(\text{KL}\left(p(y|\beta) \| p(y) \right) \right),
\end{align}
\end{theorem}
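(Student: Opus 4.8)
The plan is to recognize the left-hand side as the mutual information $I(\beta;y)$ maximized over the input prior, and to exploit the standard variational (\emph{compensation}) identity that re-expresses this mutual information as an averaged Kullback--Leibler divergence against an arbitrary reference output law. Writing $m_\pi(y)=\int p(y\mid\beta)\,\pi(d\beta)$ for the Bayes mixture induced by a prior $\pi$ and letting $q$ denote a generic candidate output distribution ranging over $\mathcal{P}$, I would first establish, for \emph{any} $q$, the identity
\begin{align*}
I_\pi(\beta;y) = \int \text{KL}\!\left(p(y\mid\beta)\,\big\|\,q\right)\pi(d\beta) - \text{KL}\!\left(m_\pi\,\big\|\,q\right),
\end{align*}
which follows by adding and subtracting $\log q$ inside the definition of $I_\pi$ and recognizing $\int p(y\mid\beta)\,\pi(d\beta)=m_\pi$. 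Setting $q=m_\pi$ recovers the usual formula $I_\pi=\int \text{KL}(p(y\mid\beta)\|m_\pi)\,\pi(d\beta)$ and, simultaneously, shows that $m_\pi$ minimizes the averaged redundancy $R(\pi,q)\coloneqq \int \text{KL}(p(y\mid\beta)\|q)\,\pi(d\beta)$ over $q$.

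With this identity in hand, the \emph{easy} direction (capacity $\le$ minimax redundancy) is immediate: since $\text{KL}(m_\pi\|q)\ge 0$, the identity gives $I_\pi \le \int \text{KL}(p(y\mid\beta)\|q)\,\pi(d\beta) \le \sup_{\beta\in\mathcal{B}}\text{KL}(p(y\mid\beta)\|q)$ for every $q$. Taking the supremum over $\pi$ on the left and the infimum over $q\in\mathcal{P}$ on the right yields $\mathcal{C}=\sup_\pi I_\pi \le \inf_{q}\sup_{\beta}\text{KL}(p(y\mid\beta)\|q)$, which is the claimed $\le$.

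For the \emph{reverse} inequality I would frame the problem as a two-player game with payoff $R(\pi,q)$, which is affine (hence concave) in the prior $\pi$ and convex and lower semicontinuous in $q$ by joint convexity of KL in its second argument. The two orders of optimization then evaluate to $\sup_\pi \inf_q R(\pi,q)=\sup_\pi I_\pi=\mathcal{C}$ (using the minimizing-mixture fact above) and $\inf_q \sup_\pi R(\pi,q)=\inf_q \sup_{\beta}\text{KL}(p(y\mid\beta)\|q)$ (the supremum of a functional linear in $\pi$ is attained at a Dirac mass at some $\beta$). Equality of these two values is exactly an application of a minimax theorem, for which I would invoke Sion's minimax theorem after endowing the space of priors with the weak topology and verifying compactness of the prior simplex together with the convexity and semicontinuity noted above.

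The main obstacle is precisely this minimax interchange: Sion's theorem needs compactness of at least one strategy set and adequate semicontinuity of $R$, and over a general, possibly non-compact continuous parameter space these hypotheses are genuinely delicate (the Bayes-optimal prior may fail to exist, and the redundancy may be infinite). I would discharge this either by restricting to the compact parameter set furnished by the power constraint $\|\beta\|_2^2\le P$ that is assumed throughout the paper, or by a discretization-and-limit argument that approximates $\mathcal{B}$ by finite subsets, proves equality on each finite sub-game (where the elementary finite minimax theorem applies with no regularity concerns), and passes to the limit. Everything else reduces to the nonnegativity and joint convexity of KL divergence, which are routine.
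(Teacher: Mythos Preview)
Your proposal is a correct and standard outline of the redundancy--capacity duality (the compensation identity, the easy direction via nonnegativity of KL, and the hard direction via a Sion-type minimax interchange), and you correctly flag compactness of the prior set as the only genuine technical point. However, the paper does \emph{not} supply its own proof of this theorem: it is stated in Appendix~A.2 purely as a quotation from Clarke and Barron~\cite{clarke1994jeffreys}, with the remark that ``these ideas are well-known.'' So there is nothing to compare against; your argument is simply the classical one that the cited reference contains, and it goes well beyond what the paper itself does.
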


\begin{theorem}[Redundancy-Generalization Theorem \cite{clarke1994jeffreys} \label{thm:red_gen_barron}]
In the infinite limit of $N$, the min-max KL risk approaches the MDL generalization estimator:
\begin{equation}
    \lim_{N\rightarrow \infty} \left[\inf_{p\in\mathcal{P}} \sup_{\beta\in\mathcal{B}} \left( \text{KL}\left(p(y|\beta) \| p(y) \right) \right) - \overbrace{\frac{D}{2}\log\left(\frac{N}{2\pi }\right) - \log\left(\int_{\mathcal{B}}\sqrt{\det\left(\mathcal{I}(\beta)\right)}d\beta \right)}^{\text{\tiny{MDL Generalization}}} \right]=0
\end{equation}
\end{theorem}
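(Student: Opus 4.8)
The plan is to convert the minimax problem into a maximization via the exact duality of Theorem~\ref{thm:red_cap_dual}, and then read off the large-$N$ behaviour of the resulting channel capacity. Writing $Y^N=(y_1,\dots,y_N)$ for the length-$N$ data and $m_\pi(y^N)=\int_{\mathcal{B}} p(y^N\mid\beta')\,\pi(\beta')\,d\beta'$ for the Bayes mixture induced by a prior $\pi$, Theorem~\ref{thm:red_cap_dual} identifies the min-max KL risk with the capacity $C_N=\sup_{\pi} I(\beta;Y^N)$, where $I(\beta;Y^N)=\int_{\mathcal{B}}\pi(\beta)\,\text{KL}\!\bracket{p(y^N\mid\beta)\,\|\,m_\pi(y^N)}\,d\beta$. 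It therefore suffices to show $C_N=\frac{D}{2}\log\frac{N}{2\pi e}+\log\int_{\mathcal{B}}\sqrt{\det\mathcal{I}(\beta)}\,d\beta+o(1)$, i.e.\ exactly the MDL generalization term of Equation~\eqref{eqn:riss}.

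First I would establish a pointwise (in $\beta$) asymptotic expansion of the per-parameter redundancy $\text{KL}(p(y^N\mid\beta)\,\|\,m_\pi)$ for a fixed smooth prior $\pi$. The engine is a Laplace approximation of $m_\pi$: under the classical regularity conditions (consistency and asymptotic normality of the MLE $\hat\beta$, nonsingular FIM, and enough smoothness of $\log p$ in $\beta$) the mass of $\int p(y^N\mid\beta')\pi(\beta')\,d\beta'$ concentrates in an $O(N^{-1/2})$ neighbourhood of $\hat\beta$, where the log-likelihood is well approximated by its quadratic Taylor expansion with curvature given by the observed information $\to\mathcal{I}(\hat\beta)$. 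Evaluating the resulting Gaussian integral gives $\log\frac{p(y^N\mid\beta)}{m_\pi(y^N)}=\frac{D}{2}\log\frac{N}{2\pi}+\tfrac12\log\det\mathcal{I}(\beta)-\log\pi(\beta)+\log\frac{p(y^N\mid\beta)}{p(y^N\mid\hat\beta)}+o(1)$. Taking $\E_{p(\cdot\mid\beta)}$ to form the KL, and using that the log-likelihood-ratio fluctuation $\log\{p(y^N\mid\hat\beta)/p(y^N\mid\beta)\}$ is asymptotically $\tfrac12\chi^2_D$ with mean $D/2$ (this $-D/2$ term is what supplies the factor $e$, converting $\frac{N}{2\pi}$ into the expected-redundancy normalization $\frac{N}{2\pi e}$, as opposed to the worst-case regret which retains $\frac{N}{2\pi}$), yields $\text{KL}(p(y^N\mid\beta)\,\|\,m_\pi)=\frac{D}{2}\log\frac{N}{2\pi e}+\tfrac12\log\det\mathcal{I}(\beta)-\log\pi(\beta)+o(1)$.

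I would then integrate this expansion against $\pi$ and optimise over the prior. Averaging gives $I(\beta;Y^N)=\frac{D}{2}\log\frac{N}{2\pi e}+\int_{\mathcal{B}}\pi(\beta)\bracket{\tfrac12\log\det\mathcal{I}(\beta)-\log\pi(\beta)}d\beta+o(1)$, and the bracketed functional can be rewritten as $\log\int_{\mathcal{B}}\sqrt{\det\mathcal{I}}-\text{KL}(\pi\,\|\,\pi_{\mathrm J})$, where $\pi_{\mathrm J}(\beta)=\sqrt{\det\mathcal{I}(\beta)}\,/\int_{\mathcal{B}}\sqrt{\det\mathcal{I}}$ is Jeffreys' prior. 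Since KL is nonnegative and vanishes only at $\pi=\pi_{\mathrm J}$, the supremum over priors is attained asymptotically by Jeffreys' prior with value $\log\int_{\mathcal{B}}\sqrt{\det\mathcal{I}}$, so $C_N=\frac{D}{2}\log\frac{N}{2\pi e}+\log\int_{\mathcal{B}}\sqrt{\det\mathcal{I}(\beta)}\,d\beta+o(1)$; combined with the duality this proves the claim. This also yields the two matching bounds directly: a lower bound by inserting $\pi_{\mathrm J}$, and an upper bound because no prior beats Jeffreys' in the averaged functional.

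The hard part will be upgrading the pointwise expansion to one that is uniform in $\beta$ and justifying the interchange of $\lim_{N}$ with $\sup_\pi$. The Laplace step hides remainder terms that must be controlled uniformly over $\mathcal{B}$ (and near its boundary), and the upper bound must preclude pathological priors that concentrate mass where the Gaussian approximation degrades; this uniform control is precisely the technical core supplied by Clarke \& Barron. I would stress that nonsingularity of $\mathcal{I}(\beta)$ is indispensable here, since it underlies the asymptotic normality / Bernstein--von Mises step, so the result lives in the classical $D\le N$ regime and fails in the singular, over-parameterised regime flagged earlier in the paper, where both the Laplace approximation and the $\tfrac12\chi^2_D$ behaviour break down.
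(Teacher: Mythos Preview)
The paper does not actually prove this theorem: it is stated in Appendix~A.2 purely as a citation of Clarke \& Barron~\cite{clarke1994jeffreys} (and Rissanen~\cite{rissanen1996fisher}), with no accompanying proof environment. Your sketch is therefore not competing against an in-paper argument but against the original Clarke--Barron proof, and it is a faithful outline of that argument: duality to the capacity, Laplace expansion of the Bayes mixture around the MLE, averaging against the prior, and identifying Jeffreys' prior as the asymptotic maximiser via the $-\text{KL}(\pi\,\|\,\pi_{\mathrm J})$ rewriting. Your caveats about uniform control of the Laplace remainder and nonsingularity of $\mathcal{I}(\beta)$ are exactly the technical content the paper defers to the cited reference.

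One point worth flagging: your derivation produces the constant $\frac{D}{2}\log\frac{N}{2\pi e}$ (expected redundancy, with the $-D/2$ from $\E[\log\{p(y^N\mid\hat\beta)/p(y^N\mid\beta)\}]$ supplying the $e$), whereas the theorem as stated in the paper has $\frac{D}{2}\log\frac{N}{2\pi}$. This discrepancy is internal to the paper---Equation~\eqref{eqn:riss} uses $2\pi e$ while Theorem~\ref{thm:red_gen_barron} uses $2\pi$---and reflects the well-known distinction between expected redundancy and worst-case regret normalisations. Your version ($2\pi e$) is the one consistent with the min-max \emph{expected} KL risk that the theorem actually claims to characterise.
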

 
 Theorem \ref{thm:red_cap_dual} relates to Theorem \ref{thm:red_gen_barron} through the appearance of the $\inf_{p\in\mathcal{P}} \sup_{\beta\in\mathcal{B}} \left(\text{KL}\left(p(y|\beta) \| p(y) \right) \right)$ term (the min-max KL risk). Specifically, these theorems make clear of an asymptotic relationship between the channel capacity, and the out-of-sample test error.


 \subsection{Proof of Theorem \ref{thm:chan_cap}} \label{app:sec_chan_cap_proof}
 
In this subsection we provide a proof for $\mathcal{C}$ as it appears in Theorem \ref{thm:chan_cap}. We note that there exists a proof for a similar looking system, involving, $X_{ij},\varepsilon_i\sim \mathcal{CN}(0,1)$ being circularly complex in Telatar's seminal paper~\cite{telatar1999capacity}. However, since properties over complex random matrix spaces do not necessarily lend themselves to real random matrices, we must work to provide a quick alternate proof for the case of $X_{ij}\sim \mathcal{N}(0,1)$, and with more general noise term, $\varepsilon\sim\mathcal{N}(0,\sigma^2)$ - which are the required assumptions for this particular paper, and for the problem of isotropic linear regression in general. To prove Theorem \ref{thm:chan_cap}, we call upon Theorem 9.2.1 in Pinsker \cite{pinsker1964information}, which relates the mutual information between real Gaussian random vectors in a convenient form via the logarithm of the determinant. This is expressed as Theorem \ref{thm:Pinsker} below. 
\begin{theorem}[Pinsker's Mutual Information \cite{pinsker1964information}]\label{thm:Pinsker}
Let $\xi = (\xi_1, ..., \xi_n)$,  $\eta = (\eta_1, .., \eta_m)=(\xi_{n+1}, ...,\xi_{n+m}), $ and $(\xi,\eta) = (\xi_1, ..., \xi_{m+n})$ be mean zero Gaussian random vectors taking values in some corresponding measurable Cartesian product spaces, $X = X_1 \times ... \times X_n$, $Y = Y_1 \times ... \times Y_m = X_{n+1} \times ... \times X_{m+n}$, and where $Z = X\times Y$. Then,
\begin{align}
    \mathcal{I}(\xi;\eta) = \frac{1}{2}\log\left(\frac{\det(A_{\xi})\cdot \det(A_{\eta})}{\det(A_{(\xi,\eta)})}\right),
\end{align}
where $A_{\xi} = \mathbb{E}[\xi\xi^{\intercal}]$, and $A_{\eta} = \mathbb{E}[\eta\eta^{\intercal}]$.
\end{theorem}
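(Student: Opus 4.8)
The plan is to reduce the Gaussian mutual information to a sum of differential entropies and then evaluate each via the closed-form entropy of a multivariate normal. First I would invoke the standard identity
\begin{align*}
\mathcal{I}(\xi;\eta) = h(\xi) + h(\eta) - h(\xi,\eta),
\end{align*}
where $h(\cdot)$ denotes differential entropy (distinct from the predictor $h^{\star}$ of the main text). This identity follows immediately from writing $\mathcal{I}(\xi;\eta)$ as the Kullback--Leibler divergence of the joint density against the product of marginals, $\int p(\xi,\eta)\log\frac{p(\xi,\eta)}{p(\xi)p(\eta)}$, and splitting the logarithm into the $-\log p(\xi,\eta)$, $\log p(\xi)$, and $\log p(\eta)$ contributions.

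The only computational ingredient is the entropy of a mean-zero Gaussian. For $\zeta\in\mathbb{R}^k$ with $\zeta\sim\mathcal{N}(\bm{0},A_\zeta)$ I would establish
\begin{align*}
h(\zeta) = \frac{k}{2}\log(2\pi e) + \frac{1}{2}\log\circ\det(A_\zeta),
\end{align*}
which reduces to a single Gaussian integral. Substituting the log-density $-\log p(\zeta) = \frac{k}{2}\log(2\pi) + \frac{1}{2}\log\det A_\zeta + \frac{1}{2}\zeta^\intercal A_\zeta^{-1}\zeta$ and using the trace identity $\mathbb{E}[\zeta^\intercal A_\zeta^{-1}\zeta] = \text{tr}(A_\zeta^{-1}\mathbb{E}[\zeta\zeta^\intercal]) = \text{tr}(I_k) = k$ collapses the quadratic term to $k/2$, and $\frac{k}{2}\log(2\pi)+\frac{k}{2} = \frac{k}{2}\log(2\pi e)$.

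Applying this entropy formula to $\xi$ (dimension $n$), to $\eta$ (dimension $m$), and to the stacked vector $(\xi,\eta)$ (dimension $n+m$, with covariance $A_{(\xi,\eta)} = \mathbb{E}[(\xi,\eta)(\xi,\eta)^\intercal]$) and substituting into the decomposition, the dimension prefactors satisfy $\tfrac{n}{2}+\tfrac{m}{2}-\tfrac{n+m}{2}=0$, so every $\log(2\pi e)$ contribution cancels. What remains is exactly $\frac{1}{2}\log\det A_\xi + \frac{1}{2}\log\det A_\eta - \frac{1}{2}\log\det A_{(\xi,\eta)}$, which equals the claimed log-ratio after combining logarithms.

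The main point to watch, rather than a genuine obstacle, is nondegeneracy: the three differential entropies and their determinants are only well-defined when the covariances are nonsingular. I would note that joint Gaussianity of $(\xi,\eta)$ forces $A_\xi$ and $A_\eta$ to be the leading diagonal blocks of $A_{(\xi,\eta)}$, so assuming $A_{(\xi,\eta)}\succ 0$ is the natural nondegeneracy hypothesis and automatically makes the marginal covariances positive definite. With this in place the argument is pure bookkeeping, and all the content sits in the Gaussian entropy formula --- the key is simply to present that integral cleanly.
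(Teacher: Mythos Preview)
Your argument is correct and is the standard route to this identity: decompose mutual information as $h(\xi)+h(\eta)-h(\xi,\eta)$, plug in the closed-form Gaussian differential entropy, and let the $\log(2\pi e)$ prefactors cancel. The nondegeneracy caveat you flag is exactly the right side condition, and your handling of it (requiring $A_{(\xi,\eta)}\succ 0$, which forces the marginal blocks to be nonsingular) is the appropriate one.

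There is, however, nothing to compare against: the paper does not supply its own proof of this theorem. It is quoted verbatim as Theorem~9.2.1 from Pinsker's monograph and then used as a black box inside the proof of the channel-capacity result (Theorem~\ref{thm:chan_cap}). So your write-up is not an alternative to the paper's proof but rather a self-contained derivation of a result the paper imports by citation.
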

We now move onto proving Theorem \ref{thm:chan_cap}, and begin by relating the notation used by Pinsker, to that used within this paper.  
\begin{proof}
Consider $\bm{\beta}=\xi$, and $\bm{y} = \eta$, which implies that $A_\xi = \mathbb{E}[\bm{\beta} \bm{\beta}^{\intercal}]$ and $A_{\eta}= \mathbb{E}[\bm{yy}^{\intercal}]$. Moreover we initially consider a fixed (that is, non random) value of $\bm{X} = X$ to perform calculations. Through maximum entropy arguments, we consider $\mathbb{E}[\bm{\beta} \bm{\beta}^{\intercal}] = \frac{P}{D}I_{D}$, which is based on the power constraint of $\mathbb{E}[\bm{\beta}^{\intercal}\bm{\beta}]\leq P$, coupled with the fact that each $\bm{\beta}_i\sim\mathcal{N}(0,P/D \cdot I)$. For details on the choice of this prior in regards to maximum entropy, see the end of Appendix \ref{app:reg:iso_vol_lem}, and in addition see $\S$ 4.1 of Telatar as to why this is a natural choice for: $\sup\mathcal{I}(\bm{y};\bm{\beta})$, in regards to concavity of $\log\circ\det$ \cite{telatar1999capacity}. Now consider: $A_{\eta} = \mathbb{E}[\bm{yy}^{\intercal}] = \frac{P}{D} X X^{\intercal} + \sigma^2 I_{N}$, via linearity of expectation. In order to calculate $A_{(\xi,\eta)}$ we consider block matrices as follows:
\begin{align*}
  A_{(\xi,\eta)} &= \mathbb{E}\left[\begin{bmatrix} \bm{\beta} \\ \bm{y} \end{bmatrix}\begin{bmatrix}\bm{\beta}^{\intercal} \hspace{2mm} \bm{y}^{\intercal} \end{bmatrix}\right]\\
  &= \mathbb{E}\left[\begin{bmatrix} \bm{\beta} \bm{\beta}^{\intercal} & \bm{\beta} \bm{y}^{\intercal} \quad\\ \bm{y}\bm{\beta}^{\intercal} & \bm{y} \bm{y}^{\intercal} \end{bmatrix}\right] \\
  &= \begin{bmatrix} A_{\xi} & \mathbb{E}\left[\bm{\beta} \bm{y}^{\intercal}\right] \\ \mathbb{E}\left[\bm{y}\bm{\beta}^{\intercal}\right] & A_{\eta} \end{bmatrix},
\end{align*}
It can be shown that, $\mathbb{E}[\bm{\beta} \bm{y}^{\intercal}] = \mathbb{E}[\bm{\beta} \bm{\beta}^{\intercal} X^{\intercal} + \bm{\beta} \bm{\varepsilon}^{\intercal}] = \frac{P}{D}X^{\intercal}$, and note that the top right, and bottom left elements are transposes of one another. Thus by expanding the determinant of the above block matrix system, we obtain:
\begin{align*}
    A_{(\xi,\eta)} &= \det\left(A_{\xi}\right)\det\left( A_{\eta} - \frac{P}{d}X X^{\intercal} \right) \\ 
    &= \det\left(A_{\xi}\right)\det\left(\sigma^2 I_N \right)
\end{align*}
\vspace{-0.2cm}
\begin{align}
    \Rightarrow \mathcal{I}(\xi;\eta) &= \frac{1}{2}\log\left(\frac{\det(A_{\xi})\det(\sigma^2 + \frac{P}{D}X X^{\intercal})}{\det(A_{\xi})\det(\sigma^2 I_n)}\right)\nonumber\\
     &= \frac{1}{2} \log \circ \det\left(I_N + \frac{\text{SNR}}{D}X X^{\intercal}\right)\label{eqn:Channel_cap_proof1}
\end{align}
We once again emphasise that  $\mathcal{I}(\bm{y};\bm{\beta})=\mathcal{I}(\xi;\eta)$ and that the determinants have originated from Theorem \ref{thm:Pinsker}. Moreover, the expression thus far has been calculated for a fixed $\bm{X}=X$; that is, more accurately we have an expression for  $\mathcal{I}(\xi;\eta)=\mathcal{I}(\bm{y}|\bm{X}=X;\bm{\beta})$, with the conditioning implicitly assumed, whereas we ultimately desire a form for $\mathcal{I}((\bm{y},\bm{X});\bm{\beta})$. Considering now random $\bm{X}$:
\begin{align}
    \mathcal{I}((\bm{y},\bm{X});\bm{\beta}) &= \mathbb{E}\left[\log\left(\frac{p(\bm{y},\bm{X},\bm{\beta})}{p(\bm{y},\bm{X})p(\bm{\beta})}\right) \right]\nonumber\\
    &=\mathbb{E}_{p(\bm{X})}\left[\mathbb{E}\left[\log\left(\frac{p(\bm{y},\bm{\beta}\mid \bm{X} = X)}{p(\bm{y}\mid\bm{X}=X)p(\bm{\beta})}\right) \right]\right]\nonumber \\
    &= \mathbb{E}_{p(\bm{X})}\left[\mathcal{I}(\bm{y}|\bm{X}=X;\bm{\beta})\right],\label{eqn:channel_cap_proof2}
\end{align}
wherein we may obtain the required form for $\mathcal{C}$ by considering Equations~\eqref{eqn:Channel_cap_proof1} and~\eqref{eqn:channel_cap_proof2}. 
\end{proof}


\subsection{Theorem \ref{thm:chan_cap_lower_upper_bound} - Channel Capacity Bounds}\label{app:sec:chan_cap_lower_upper_bound}

In this section we derive upper and lower bounds for the channel capacity, $\mathcal{C}$. We note that upper and lower bounds are required separately, for the cases of $D\leq N$, and $D>N$. This results in four bounds in total.

\begin{theorem}[Channel Capacity Bounds] \label{thm:chan_cap_lower_upper_bound}
The channel capacity $\mathcal{C}$, where $\Psi: \mathbb{R} \rightarrow \mathbb{R}$ is the digamma function, is bounded as follows:
\begin{align*}
\begin{rcases}
\frac{D}{2}\log\left(\frac{2\text{SNR}}{D}\right) + \frac{1}{2}\sum_{i=1}^D \Psi\left(\frac{N-i+1}{2}\right)\\ \\
\frac{N}{2}\log\left(\frac{2\text{SNR}}{D}\right) + \frac{1}{2}\sum_{i=1}^N \Psi\left(\frac{D-i+1}{2}\right) 
\end{rcases}
\leq \mathcal{C} \leq
\begin{cases}
\frac{D}{2}\log\left(\frac{N}{D}\text{SNR} + 1\right) &\text{   for  } D\leq N,\\ \\
\frac{N}{2} \log\left(\text{SNR} + 1\right)  &\text{   for  } D> N.
\end{cases}
\end{align*}
\end{theorem}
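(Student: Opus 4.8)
The plan is to bound $\mathcal{C} = \frac{1}{2}\mathbb{E}\left[\log\circ\det\left(I_N + \frac{\text{SNR}}{D}\bm{X}\bm{X}^{\intercal}\right)\right]$ from Theorem~\ref{thm:chan_cap} by combining two elementary matrix facts with the Wishart structure of $\bm{X}\bm{X}^{\intercal}$. First I would record the equivalent $D$-dimensional form supplied by the Weinstein--Aronszajn identity, $\det(I_N + \frac{\text{SNR}}{D}\bm{X}\bm{X}^{\intercal}) = \det(I_D + \frac{\text{SNR}}{D}\bm{X}^{\intercal}\bm{X})$. Since $\bm{X}\bm{X}^{\intercal}$ has rank $\min(N,D)$, only $\min(N,D)$ eigenvalues are nonzero, so the natural representation to work with is the $N$-form when $D>N$ and the $D$-form when $D\le N$. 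Each of the four inequalities is then obtained by applying one of two techniques to the appropriate representation.

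For the two \emph{upper} bounds I would invoke concavity of $A\mapsto\log\circ\det(A)$ on the positive-definite cone together with Jensen's inequality, which lets one push the expectation inside the determinant. The only inputs needed are the moment identities $\mathbb{E}[\bm{X}\bm{X}^{\intercal}] = D\,I_N$ and $\mathbb{E}[\bm{X}^{\intercal}\bm{X}] = N\,I_D$, both immediate from $\mathbb{E}[X_{ik}X_{jk}] = \delta_{ij}$. Feeding these into Jensen yields $\mathcal{C}\le \frac{N}{2}\log(\text{SNR}+1)$ from the $N$-form and $\mathcal{C}\le \frac{D}{2}\log(\frac{N}{D}\text{SNR}+1)$ from the $D$-form, matching the $D>N$ and $D\le N$ cases respectively.

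For the two \emph{lower} bounds I would first discard the identity matrix using the monotonicity $\det(I+M)\ge\det(M)$, valid for any positive semi-definite $M$ because $1+\lambda_i\ge\lambda_i$ eigenvalue-wise; in the $N$-form this gives $\log\circ\det(I_N + \frac{\text{SNR}}{D}\bm{X}\bm{X}^{\intercal}) \ge N\log\frac{\text{SNR}}{D} + \log\circ\det(\bm{X}\bm{X}^{\intercal})$. The surviving expectation $\mathbb{E}[\log\circ\det(\bm{X}\bm{X}^{\intercal})]$ I would evaluate through the Bartlett decomposition of a real Wishart matrix: for $D\ge N$ one has $\det(\bm{X}\bm{X}^{\intercal})\stackrel{d}{=}\prod_{i=1}^N \chi^2_{D-i+1}$ with independent factors, and symmetrically $\det(\bm{X}^{\intercal}\bm{X})\stackrel{d}{=}\prod_{i=1}^D \chi^2_{N-i+1}$ when $N\ge D$. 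Combining this with the standard moment $\mathbb{E}[\log\chi^2_k] = \Psi(k/2)+\log 2$ turns the product of chi-squares into the two digamma sums, and absorbing the collected $\log 2$ terms into $\log\frac{\text{SNR}}{D}$ produces the $\log\frac{2\,\text{SNR}}{D}$ prefactors.

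The hard part will be the lower bound, specifically the Wishart determinant computation: one must justify the Bartlett/Cholesky factorization and the resulting product-of-independent-chi-squares law, and take care that the full-rank hypothesis actually holds — the $N$-form factorization requires $D\ge N$ while the $D$-form requires $N\ge D$. This rank constraint is precisely what pins each lower bound to its regime, mirroring the $\min(N,D)$ choice already made for the upper bounds. Everything else (Jensen's inequality, determinant monotonicity, and the digamma moment identity) is routine, so the proof reduces to assembling these four computations and verifying the prefactor bookkeeping.
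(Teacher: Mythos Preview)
Your proposal is correct and follows essentially the same route as the paper: Jensen's inequality via concavity of $\log\circ\det$ for the upper bounds, and dropping the identity followed by the Wishart expected log-determinant formula for the lower bounds. The only cosmetic differences are that the paper reaches $\det(I+M)\ge\det(M)$ through Minkowski's determinant inequality plus $\log(1+x)>\log x$ rather than your direct eigenvalue argument, and it quotes the digamma sum as a known Wishart identity rather than deriving it from the Bartlett decomposition.
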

\begin{proof}
 \textbf{Lower Bound ($D\leq N$):}\\ \\
$A,B\succ 0 \Rightarrow \det(A+B)^{1/n} \geq \det(A)^{1/n} + \det{B}^{1/n}$ (Minkowski's inequality). We take $n=1$, fix an $\bm{X}=X$, and proceed as follows:
\begin{align*}
    \det\left(I_D + \frac{\text{SNR}}{D}X^{\intercal}X\right) &\geq 1 + \det\left(\frac{\text{SNR}}{D}X^{\intercal}X\right) \\
    & = 1 + \left(\frac{\text{SNR}}{D}\right)^D\det\left(X^{\intercal}X\right).
\end{align*}
Consider that for any $a$, $b$ such that $a > b$ we have $\log_m (a) > \log_m(b)$, for $m > 1$. Thus,
\begin{align*}
    \log\circ\det\left(I_D + \frac{\text{SNR}}{D}X^{\intercal}X\right) &\geq \log\left( 1 + \det\left(\frac{\text{SNR}}{D}X^{\intercal}X\right)\right) \\
    &> \log\circ \det\left(\frac{\text{SNR}}{D}X^{\intercal}X\right)\\
    &= D\log\left(\frac{\text{SNR}}{D}\right)+ \log\circ\det\left(X^{\intercal}X\right).
\end{align*}
\begin{align*}
   \Rightarrow \frac{1}{2}\mathbb{E}\left[\log \circ \det\left(I_D + \frac{\text{SNR}}{D}\bm{X}^{\intercal}\bm{X}\right)\right] \geq \frac{D}{2}\log\left(\frac{\text{SNR}}{D}\right) + \frac{1}{2}\mathbb{E}[\log\circ\det(\bm{X}^{\intercal}\bm{X})].
\end{align*}
Now, since each $\bm{X}_{ij}\sim\mathcal{N}(0,1)$, it follows that $\bm{X}^{\intercal}\bm{X}\sim \mathcal{W}(N,\bm{\Lambda})$; that is, $\bm{X}^{\intercal}\bm{X}$ is Wishart distributed, where $N$ is the degrees of freedom, and $\bm{\Lambda}$ is the Wishart scale matrix, which in this case is the identity matrix. For the Wishart distribution, there is a known expansion for the expected log-determinant as follows:
\begin{align*}
    \mathbb{E}[\log\circ \det(\bm{X}^{\intercal}\bm{X})] = \sum_{i=1}^D \Psi \left(\frac{N-i+1}{2} \right) + D\log 2
\end{align*}
with $\Psi(\cdot)$ being the standard digamma function \cite{bishop2006pattern}. Thus we arrive at:
\begin{align*}
    \mathcal{C}\geq \frac{D}{2}\log\left(\frac{2\text{SNR}}{D}\right) + \frac{1}{2}\sum^D \Psi\left(\frac{N-i+1}{2}\right),
\end{align*}

where the lower bound for $D>N$ follows similarly. 

\textbf{Upper Bound $(D\leq N)$:} 

From Jensen's inequality,
\begin{align*}
\mathcal{C} &\leq \frac{1}{2}\log \circ \det\left(I_D+\frac{\text{SNR}}{D}\mathbb{E}[\bm{X}^{\intercal}\bm{X}])\right) \\
&= \frac{1}{2}\log \circ \det\left(\text{diag}_D\left(1+\frac{\text{SNR}}{D}\mathbb{E}[\chi^2(N)]\right)\right) \\
&=\frac{1}{2}\log \prod^D \left(1+ \frac{\text{SNR}\cdot N}{D}\right) \\
&= \frac{D}{2} \log\left(\frac{N}{D}\text{SNR} + 1\right).
\end{align*}
The proof for the upper bound in $D>N$ follows similarly, where we would consider a $\chi^2(D)$ term instead. 
\end{proof}

We can see how the bound behaves in Figure~\ref{fig:SNR_bounds_chan_cap}. In particular, it seems to suggest that the lower bound is much tighter than the upper bound for high SNR. However, it exhibits a small ``dip'' at the transition point of $D\rightarrow N$. This is primarily due to the $\frac{N}{2}\log\left(2\text{SNR}/D\right)$ term in the lower bound, which for smaller SNR becomes negative if $\text{SNR} < D/2$. Figure~\ref{fig:SNR_bounds_chan_cap} naturally leads one to question if something may be said about the limiting nature of the upper and lower bounds as $D\rightarrow \infty$. We thus establish Corollary~\ref{cor:limit_bound}.

\begin{figure}[t]
\centering
\begin{subfigure}{0.45\textwidth}
\centering
\includegraphics[width=\textwidth, height=5cm]{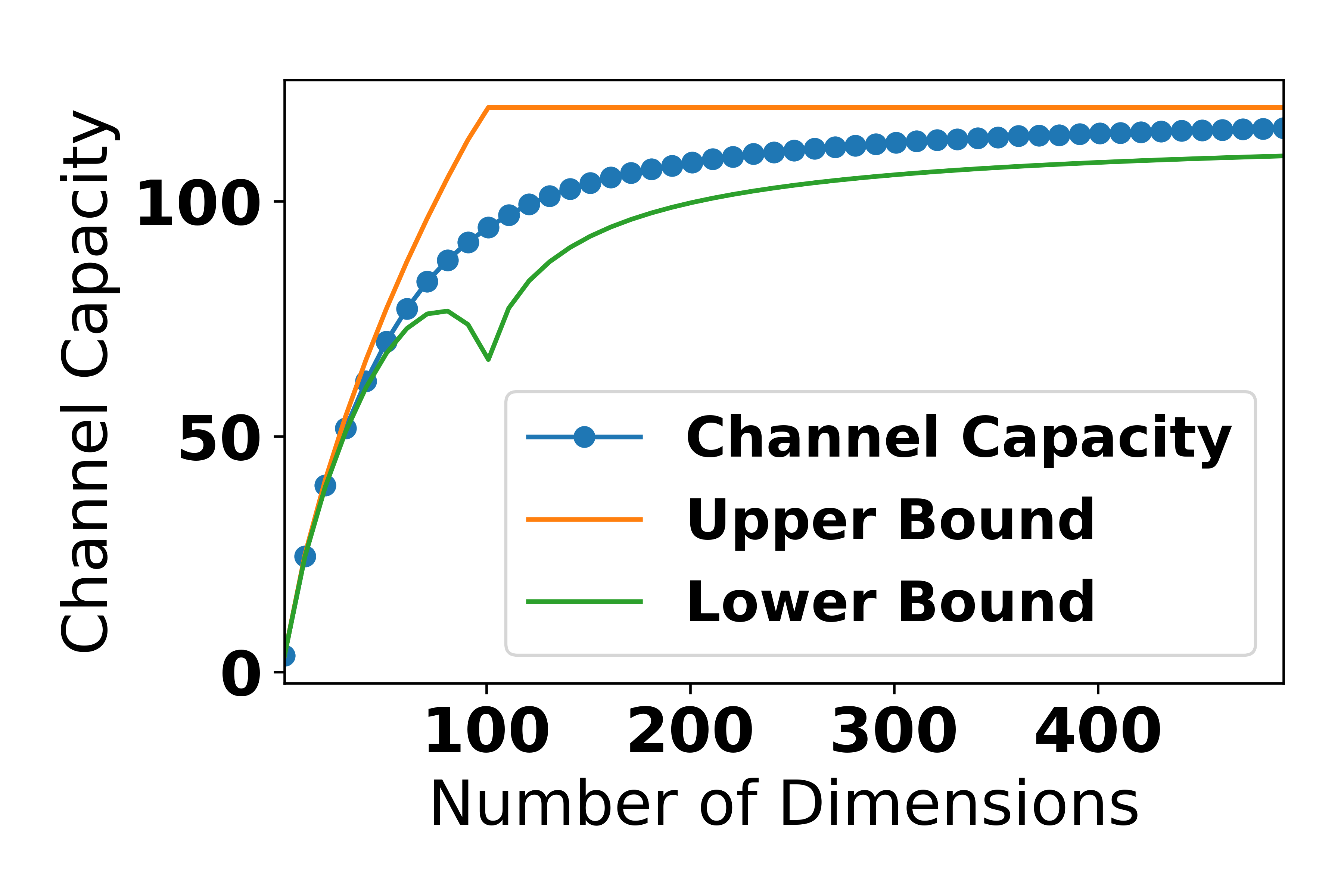} 
\caption{Channel capacity upper and lower bounds for isotropic linear regression if SNR = 10.}
\end{subfigure}%
\begin{subfigure}{0.45\textwidth}
\centering
\includegraphics[width=\textwidth, height=5cm]{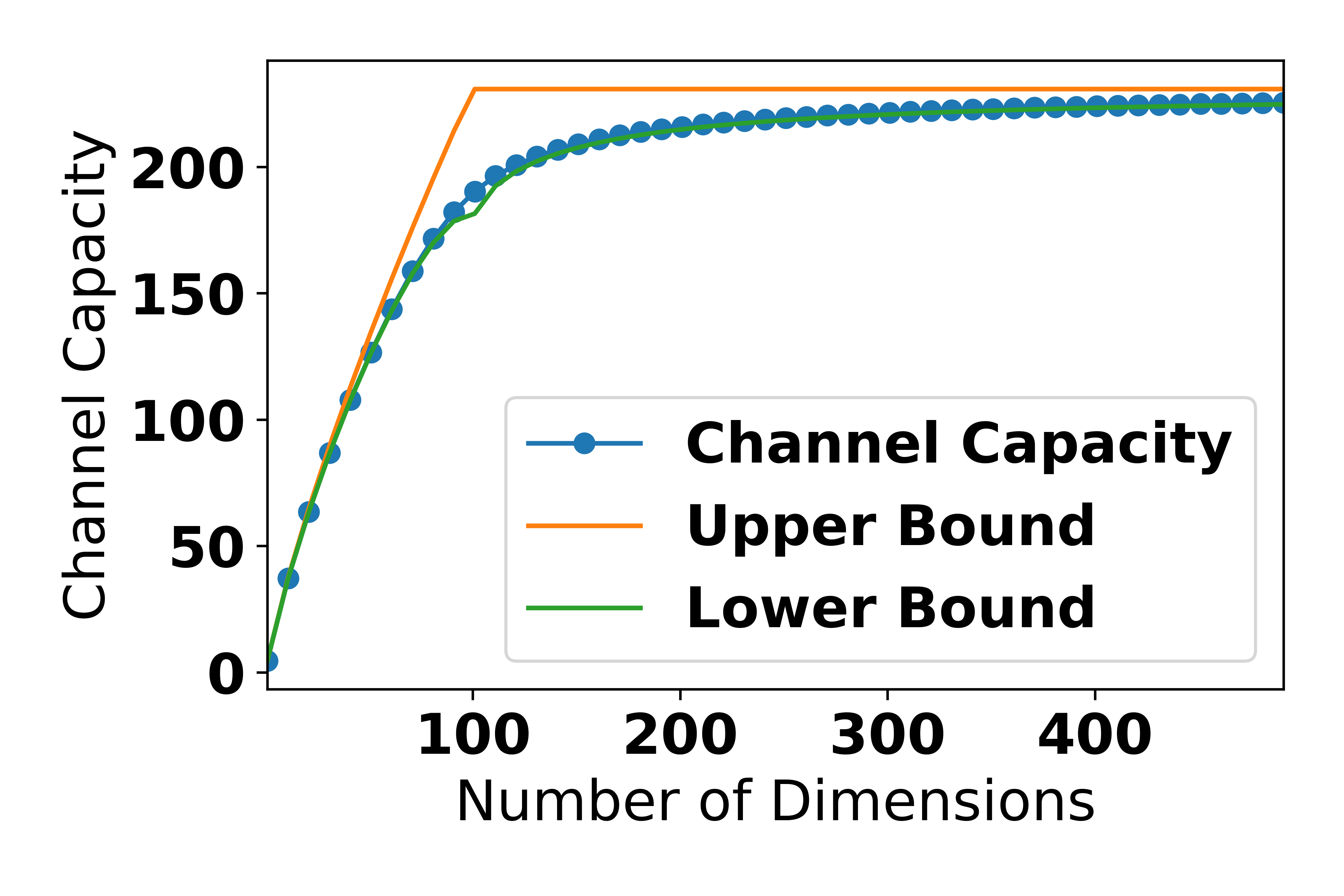}
\caption{Channel capacity upper and lower bounds for isotropic linear regression if SNR = 100.}
\end{subfigure}
\caption{Channel capacity bounds for different SNR. \label{fig:SNR_bounds_chan_cap}}
\end{figure}


\subsection{Corollary \ref{cor:limit_bound} - Convergence of Channel Capacity Limit}\label{app:sec:limit_bound}

In order to establish convergence we consider the following mild conditions: (1) A high SNR (SNR$\gg 1$), and (2) $ \Psi(x) \approx \log(x) - \frac{1}{2x}$, which is a common approximation to the digamma function, for large $x$. 

\begin{cor}[Channel Capacity Convergence] \label{cor:limit_bound}
For $D>N$, and under mild conditions we have, $
    \lim_{D\rightarrow\infty}\overline{\mathcal{C}} = \lim_{D\rightarrow\infty}\underline{\mathcal{C}}
    =\frac{N}{2}\log(\text{SNR}),
$ where $\overline{\mathcal{C}}$ and $\underline{\mathcal{C}}$ refer to the upper and lower capacity bounds which are established in Theorem~\ref{thm:chan_cap_lower_upper_bound}.

\end{cor}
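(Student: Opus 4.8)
The plan is to take the limit as $D\rightarrow\infty$ of the upper and lower bounds $\overline{\mathcal{C}}$ and $\underline{\mathcal{C}}$ established in Theorem~\ref{thm:chan_cap_lower_upper_bound} (for the $D>N$ case) and show that, under the two stated mild conditions, both limits collapse to the same value $\frac{N}{2}\log(\text{SNR})$. Since these bounds sandwich $\mathcal{C}$, this determines the limiting behaviour of the channel capacity itself. The argument is essentially a direct asymptotic computation rather than a structural proof.

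First I would handle the upper bound, which is the easier of the two. For $D>N$ the bound is the constant $\overline{\mathcal{C}} = \frac{N}{2}\log(\text{SNR}+1)$, which does not depend on $D$ at all. Under the high-SNR assumption ($\text{SNR}\gg 1$), we have $\log(\text{SNR}+1)\approx\log(\text{SNR})$, so $\lim_{D\rightarrow\infty}\overline{\mathcal{C}} = \frac{N}{2}\log(\text{SNR})$ immediately. This part is really just an observation that the upper bound has already saturated with respect to $D$, which is the key ``saturation'' phenomenon emphasised throughout the paper.

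Next I would treat the lower bound $\underline{\mathcal{C}} = \frac{N}{2}\log\!\bigl(\frac{2\,\text{SNR}}{D}\bigr) + \frac{1}{2}\sum_{i=1}^{N}\Psi\!\bigl(\frac{D-i+1}{2}\bigr)$. The plan is to substitute the digamma approximation $\Psi(x)\approx\log(x)-\frac{1}{2x}$ into each summand with $x=\frac{D-i+1}{2}$. For large $D$ and fixed $i\in\{1,\dots,N\}$, the term $\frac{1}{2x}=\frac{1}{D-i+1}\to 0$, so each $\Psi\!\bigl(\frac{D-i+1}{2}\bigr)\approx\log\!\bigl(\frac{D-i+1}{2}\bigr)\approx\log\!\bigl(\frac{D}{2}\bigr)$ since the shift $-i+1$ is negligible relative to $D$. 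Summing over the $N$ terms gives $\frac{1}{2}\sum_{i=1}^{N}\Psi\!\bigl(\frac{D-i+1}{2}\bigr)\approx\frac{N}{2}\log\!\bigl(\frac{D}{2}\bigr)$. Combining with the first piece yields $\underline{\mathcal{C}} \approx \frac{N}{2}\log\!\bigl(\frac{2\,\text{SNR}}{D}\bigr) + \frac{N}{2}\log\!\bigl(\frac{D}{2}\bigr) = \frac{N}{2}\log\!\bigl(\frac{2\,\text{SNR}}{D}\cdot\frac{D}{2}\bigr) = \frac{N}{2}\log(\text{SNR})$, where the crucial point is that the two $\log D$ contributions cancel exactly.

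The main obstacle is making the digamma step rigorous rather than heuristic: one must justify that replacing each $\frac{D-i+1}{2}$ by $\frac{D}{2}$ inside the logarithm contributes only a vanishing error, i.e.\ that $\sum_{i=1}^{N}\log\!\bigl(\frac{D-i+1}{D}\bigr)\to 0$ as $D\to\infty$ for fixed $N$, and similarly that the $\sum\frac{1}{D-i+1}$ correction terms vanish. Both follow because $N$ is held fixed while $D\to\infty$, so each of the finitely many summands tends to its limit individually and the sum of a fixed finite number of vanishing terms vanishes. Once the cancellation of the $\log D$ terms is exhibited, both bounds meet at $\frac{N}{2}\log(\text{SNR})$ and the squeeze completes the proof.
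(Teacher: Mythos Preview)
Your proposal is correct and follows essentially the same route as the paper: apply the digamma approximation $\Psi(x)\approx\log x-\tfrac{1}{2x}$ to each of the $N$ summands, observe the $\log D$ contribution cancels against the $\tfrac{N}{2}\log\bigl(\tfrac{2\,\text{SNR}}{D}\bigr)$ term, and invoke $\text{SNR}\gg 1$ for the upper bound. The paper's presentation differs only cosmetically in that it distributes $\tfrac{N}{2}\log\bigl(\tfrac{2\,\text{SNR}}{D}\bigr)$ into the sum and simplifies term-by-term as $\log\bigl(\text{SNR}(1-i/D+1/D)\bigr)\to\log(\text{SNR})$, whereas you aggregate the sum first and then combine; the underlying cancellation is identical.
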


\begin{proof}

\begin{align*}
   \underline{\mathcal{C}} &\approx \frac{N}{2} \log\left(\frac{2\text{SNR}}{D}\right) + \frac{1}{2}\sum_{i=1}^N \left[\log\left(\frac{D-i+1}{2}\right) - \frac{1}{2(D-i+1)}\right]\\
    &= \frac{1}{2}\sum_{i=1}^N \left[\log\left(\frac{2\text{SNR}(D-i+1)}{2D}\right) - \frac{1}{2(D-i+1)}\right]\\
     &= \frac{1}{2}\sum_{i=1}^N \left[\log\left(\frac{2\text{SNR}(1-i/D+1/D)}{2}\right) - \frac{1}{2(D-i+1)}\right] \\
   \Rightarrow \lim_{D\rightarrow \infty}  \underline{\mathcal{C}} &= \frac{1}{2}\sum_{i=1}^N \log(\text{SNR})\\
    &= \frac{N}{2}\log(\text{SNR})\\
    &= \lim_{D\rightarrow \infty}\overline{\mathcal{C}}, \quad \text{for SNR$\gg1$}
\end{align*}
\end{proof}





\subsection{Sphere Packing in AWGN} \label{app:sec:sphere_pack_AWGN}

For the additive white Gaussian noise (AWGN) case, we consider the system: $\bm{y} = \bm{x} + \bm{\varepsilon}$, where $\bm{x},\bm{y} \in \mathbb{R}^N$ and each $\bm{\varepsilon}_i\sim\mathcal{N}(0,\sigma^2)$. Under this system, each instantiation of $\bm{x}=x$ is approximately surrounded by a noise sphere of radius $\sqrt{\mathbb{E}[\bm{\varepsilon}^{\intercal}\bm{\varepsilon}]}=\sqrt{N\sigma^2}$. Moreover, we can define an approximate radius using $\bm{y}$ as: $\sqrt{\mathbb{E}[\bm{y}^{\intercal}\bm{y}]}=\sqrt{\mathbb{E}[\bm{x}^{\intercal}\bm{x}] + \mathbb{E}[\bm{\varepsilon}^{\intercal}\bm{\varepsilon}]} = \sqrt{NP + N\sigma^2}$, which describes the set of all possible \textit{codewords} decodeable when transmitting a signal $\bm{x}$. This is made geometrically clear in Figure \ref{fig:sphere_pack}. In this paper we extend this simple example to the ellipsoid case, which has an additional transformation factor of: $\sqrt{\det{\bm{XX}^{\intercal}}}$.

\begin{figure}[t]
\includegraphics[width=0.25\linewidth]{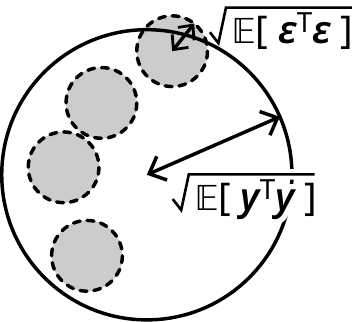}
\centering
\caption{Channel capacity is equivalent to sphere packing on a hypersphere for the case of AWGN. \label{fig:sphere_pack}}
\end{figure}

\section{Statistical Lattice Models}

\subsection{Reparameterzing the Dual Geometry of Lattices}\label{app:sec:reparam_poset}

Lattices are useful structures, as they allow one to efficiently encode information hierarchically. A geometry over lattice-type structures based on modelling higher order feature interactions via log probabilities has been derived in the work of Sugiyama et al.~\cite{sugiyama2016information}. 
The well-known log-linear model for binary variables in question is formulated as,
\begin{align*}
    \log \mathbb{P}(\bm{x}) = \sum_i \theta_{\{i\}} x_i + \sum_{i<j} \theta_{\{i, j\}}x_i x_j + \sum_{i<j<k} \theta_{\{i,j,k\}}x_i x_j x_k + \dots + \theta_{\{1,\dots,n\}}x_1 \dots x_n - \psi,
\end{align*}
where $\bm{x} \in \{0, 1\}^n$, each $\theta\in\mathbb{R}$ denotes the connection strength of a particular higher order interaction, each $x_i\in\{0,1\}$ denotes a binary valued variable which activates a particular connection strength, $\psi \in\mathbb{R}$ denotes the normalization constant for the probability model~\cite{Amari01,Agresti12}. Under this structure, $\mathbb{P}(\bm{x})$ is a member of the exponential family of distributions. If we define a particular instance of the partial ordering as $\bm{x} = (x_1, \dots, x_n) \le \bm{y} = (y_1, \dots, y_n)$, where $x_i \le y_i$ for all $i \in \{1, \dots, n\}$, and denote by $\Sigma(\bm{x})$ as the set of indices of ``1'' in $\bm{x}$, then when can instantiate a lattice, and can condense the representation of the above log-linear model as:
\begin{align*}
    \log \mathbb{P}(\bm{x}) = \sum_{\bm{s}} \delta(\bm{s}, \bm{x})\theta_{\Sigma(\bm{s})} = \sum_{\bm{s} \le \bm{x}} \theta_{\Sigma(\bm{s})}, \quad\text{where } \psi = -\theta_{\emptyset}.
\end{align*}
Hence the lattice is a natural representation of this hierarchical structure over the sample space of $\{0, 1\}^n$. Sugiyama et al.~\cite{sugiyama2016information} studied geometric structure of statistical lattice models and showed that distributions over not only $\{0, 1\}^n$ but any lattices belong to the exponential family. Note that posets are originally used in~\cite{sugiyama2016information}, which is a more general structure than lattices. Although we treat only lattices in this paper, most  interesting statistical models (such as Boltzmann machines) are lattices. We thus proceed in this direction as lattice structures entail a simple co-ordinate representation of the metric tensor as we have described in Theorem~\ref{thm:sugiyama_metric}.
An example of a lattice structure for $\{0, 1\}^3$ is illustrated in Figure~\ref{fig:lattice}.

\begin{figure}[t]
  \centering
  \includegraphics[width=.5\linewidth]{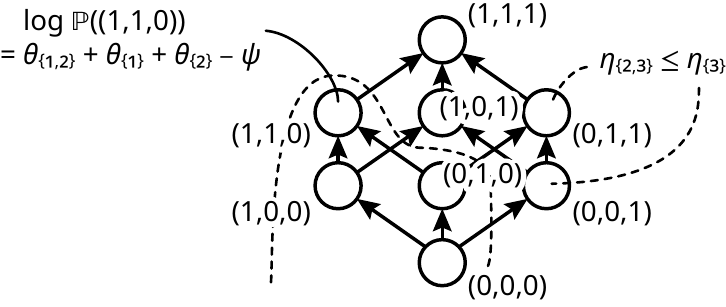}
  \caption{An example of a lattice structure for the binary domain $\{0, 1\}^3$. Each arrow denotes the partial order between elements in the lattice.}
  \label{fig:lattice}
\end{figure}

As Amari notes \cite{amari2007methods}, the exponential family of distributions induces a statistical manifold which possesses an interesting dualisitc structure. That is, two co-ordinate systems can be dually connected and allow one to generalize notions such as the Pythagoras theorem in Euclidean manifolds, to more general statistical manifolds. In particular, for the exponential family the first of these co-ordinate systems is given by $\bm{\theta} = (\theta_1 , ... , \theta_{D} )$ (as defined in the specified log-linear model of this subsection), and the second is given by $\bm{\eta} = (\eta_1, ... , \eta_{D} )$. Note that in the case of a binary log-linear model, we have $D = 2^n$ and 
\begin{align*}
    \eta_{\{i\}} &= \mathbb{E}[x_i] = \Pr(x_i = 1) \\
    \eta_{\{i,j\}} &= \mathbb{E}[x_i x_j] =  \Pr(x_i = 1, x_j = 1) \\
    \eta_{\{1, \dots, n\}} &= \mathbb{E}[x_1 ... x_n] =  \Pr(x_1 = 1, ... ,x_n = 1), 
\end{align*}
and $(\bm{\theta}, \bm{\eta})$ are explicitly dually connected via the Legendre transformation~\cite{amari2007methods,sugiyama2016information}. Note that since the $\eta$ co-ordinate system is defined probabilistically, and is thus constrained to be in $[0,1]^{D}$, which is convenient, and simplifies many calculations. Moreover, note that the $\eta$ co-ordinate system is built hierarchically, in the sense that it adheres to a partial ordering similar to the following \textit{example} structure:
\begin{align*}
    \eta_1 \geq \eta_2, \eta_3 \quad\eta_2 \geq \eta_4, \quad \eta_3 \geq \eta_5,  \quad \hdots
\end{align*}
This ordering is model specific and always uniquely determined from the lattice structure, and it is thus difficult to perform integrations over such arbitrary orderings. However, owing to the probabilistic nature of the $\eta$ co-ordinate system, it is possible to impose the following recursive re-parameterisation:
\begin{align*}
    \eta_1 &= \delta_1 \\
     \eta_2 &= \eta_1 +\delta_2 \\
      \eta_3 &= \eta_1 + \delta_3 \\
      \vdots \\
            \eta_D &= \eta_{D-1} + \delta_D 
\end{align*}
where each $\delta_i\in[0,1]$ for $1\leq i \leq D$, and $\sum_{i=1}^D \delta_i = 1$. Geometrically speaking, $\bm{\delta}=\{\delta_i\}_{i=1}^D$ represents points in a $D$-simplex. We can then proceed to formally encode the lattice ordering constraints via an additional zeta matrix, resulting in $\bm{\eta} = \mathcal{Z} \bm{\delta}$, where each $\mathcal{Z}_{ij}\in\{0,1\}$ is the value of zeta function $\zeta(q_i, q_j) = \mathbf{1}_{q_i \le q_j}$ for the corresponding elements $q_i$ and $q_j$ in the lattice. In other words, points from the $D$ simplex, $\bm{\delta}$, can be transformed into $\bm{\eta}$ co-ordinates for the poset manifold through a linear mapping. In order to re-express the volume integral via the $\bm{\delta}$ co-ordinates, it is necessary to calculate the determinant of the Jacobian transformation matrix between the co-ordinate systems. However this is trivially one, since $\mathcal{Z}$ is by construction upper triangular, resulting in $\det\left( \frac{\partial \bm{\eta}}{\partial \bm{\delta}} \right) = 1$. Therefore it is possible to calculate the log-volume integral as,
\begin{align}
     \log \left( \int_{\triangle_D} \sqrt{\det\left(\mathcal{G}(\bm{\delta})\right)\cdot \det\left( \frac{\partial \bm{\eta}}{\partial \bm{\delta}} \right)^2}d\bm{\delta} \right) =  \log \left( \int_{\triangle_D}  \sqrt{\det\left(\mathcal{G}(\bm{\delta})\right)}d\bm{\delta} \right),
\end{align}
where the coordinate transformation was performed using the square of the determinant, as the metric tensor is rank (0,2) - that is, it is a doubly covariant object, and we define the $D$-simplex as $\triangle_D$. In this form it is natural to re-express the volume integral via the expectation operator, where the expectation is taken with respect to a Dirichlet distribution over $\bm{\delta}$, as the Dirichlet distribution represents a pdf over the probability simplex. In other words we consider,
\begin{align}
     \log \left( \int_{\triangle_D} \sqrt{\det\left(\mathcal{G}(\bm{\delta})\right)}d\bm{\delta} \right) &= 
     \log \left( \int_{\triangle_D} \sqrt{\det\left(\mathcal{G}(\bm{\delta})\right)} \cdot \frac{w(\bm{\delta})}{w(\bm{\delta})} d\bm{\delta} \right) \nonumber \\
     &= \log\left( \mathbb{E}\left[\frac{\sqrt{\det\left(\mathcal{G}(\bm{\delta})\right)}}{w(\bm{\delta})}\right] \right), \label{eqn:E1}
\end{align}
where $
w(\bm{\delta}) 
= \frac {\prod _{i=1}^{D}\Gamma (\alpha _{i})}{\Gamma \left(\sum _{i=1}^{D}\alpha _{i}\right)}\prod _{i=1}^{D}x_{i}^{\alpha _{i}-1} \label{eqn:dir}
\coloneqq \text{Dir}(\bm{\delta} ; \bm{\alpha}) 
$, with ${\bm {\alpha }}=(\alpha _{1},\ldots ,\alpha _{D})$, and $\Gamma: \mathbb{R} \rightarrow \mathbb{R}$ being the standard Gamma function. Here, the choice of $\bm{\alpha}$ controls the manner in which sampling is performed over $\triangle_D$. We opt for a uniform exploration over the $D$-simplex, which equates to requiring that $\alpha_d = 1$ for all $d \in D$. Doing so means that we get, $w(\bm{\delta}) = \frac {\prod _{i=1}^{D}\Gamma (1)}{\Gamma \left(\sum _{i=1}^{D}1\right)} 
=\frac {1}{\Gamma(D)}$, where $\Gamma(D)=(D-1)!$. Ultimately, Equation~\eqref{eqn:E1} becomes,
\begin{equation}
  \log\left( \mathbb{E}\left[\frac{\sqrt{\det\left(\mathcal{G}(\bm{\delta})\right)}}{w(\bm{\delta})}\right] \right) = \log\left( \mathbb{E}\left[\sqrt{\det\left(\mathcal{G}(\bm{\delta})\right)}\right] \right) - \log\Gamma(D), 
\end{equation}
implying that bounding the volume can be equivalently achieved by bounding the behaviour of the term $\log\left( \mathbb{E}\left[\sqrt{\det(\mathcal{G}(\bm{\delta}))}\right] \right)$, and then appending the $\log\Gamma(D)$ term. The upper and lower bounds on this volume integral, are shown in Appendix \ref{app:sec:poset_bounds_proof}.

\subsection{Proof of Theorem \ref{thm:poset_vol_bound}}  \label{app:sec:poset_bounds_proof}

In this subsection we proceed to find lower and upper bounds for the $\log V$ in the case of the prescribed lattice geometry, by exploiting the re-parameterization of the $\eta$ co-ordinate system. 

\begin{proof}
\textbf{Volume Upper Bound:}

From Hadamarad's inequality: $|\text{det}(A)| \leq \prod_{i=1}^D A_{ii}$, for $A\in\mathbb{R}^{D\times D}$. Thus:
\begin{alignat*}{2}
  &\quad&  |\text{det}(\mathcal{G(\bm{\delta})})| &= \text{det}\left(\mathcal{G(\bm{\delta})}\right) \qquad (\mathcal{G}(\bm{\delta}) \succ 0)\\
 &\quad&   &\leq \prod_{i= 1}^D \mathcal{G}_{ii}(\bm{\delta}), \\
     \Rightarrow&& \sqrt{\text{det}(\mathcal{G}(\bm{\delta}))}                           &\leq \sqrt{\prod_{i= 1}^D \mathcal{G}_{ii}(\bm{\delta})} \\
 \iff&      &   \mathbb{E}\left[\sqrt{\text{det}(\mathcal{G}(\bm{\delta}))}\right]  &\leq \mathbb{E}\left[\sqrt{\prod_{i= 1}^D \mathcal{G}_{ii}(\bm{\delta})}\right]\\
 \iff&      &   \log\left(\mathbb{E}\left[\sqrt{\text{det}(\mathcal{G}(\bm{\delta}))}\right]\right)  &\leq \log\left(\mathbb{E}\left[\sqrt{\prod_{i= 1}^D \mathcal{G}_{ii}(\bm{\delta})}\right]\right)\\
  \iff&      &   \log\left(\mathbb{E}\left[\sqrt{\text{det}(\mathcal{G}(\bm{\delta}))}\right]\right) - \log\Gamma(D)  &\leq \log\left(\mathbb{E}\left[\sqrt{\prod_{i= 1}^D  \mathcal{G}_{ii}(\bm{\delta})}\right]\right) - \log\Gamma(D) \\
  \iff&      &   \log V  &\leq \log\left(\frac{\mathbb{E}\left[\sqrt{\prod_{i= 1}^D \mathcal{G}_{ii}(\bm{\delta})}\right]}{\Gamma(D)} \right),
\end{alignat*}
\\

\textbf{Volume Lower Bound:}
\begin{align}
    \log\left( \mathbb{E}\left[\sqrt{\det\left(\mathcal{G}(\bm{\delta})\right)}\right] \right) &\geq  \mathbb{E}\left[\log\left(\sqrt{\det\left(\mathcal{G}(\bm{\delta})\right)}\right)\right] \label{eqn:lower0} \\
    &= \frac{1}{2}\mathbb{E}\left[\log\circ\det\left(\mathcal{G}(\bm{\delta})\right)\right], \label{eqn:lower1}
\end{align}
where Inequality~\eqref{eqn:lower0} is Jensen's inequality. Moreover, $\mathcal{G}\succ 0\Rightarrow\exists\mathcal{M}$ s.t. $\mathcal{G} = \mathcal{M} \mathcal{M}^{\intercal}$, where $\mathcal{M}$ is a triangular matrix (Cholesky decomposition). Thus,
\begin{align}
    \det\left(\mathcal{G}\right) &= \det\left(\mathcal{M} \mathcal{M}^{\intercal}\right) \nonumber \\
                  &=  \det\left(\mathcal{M}\right)\cdot \det\left( \mathcal{M}^{\intercal}\right) \nonumber\\
               &=  \det\left(\mathcal{M}\right)\cdot \det\left( \mathcal{M}\right)\nonumber \\
              &=  \det\left(\mathcal{M}\right)^2 \nonumber\\
              &=  \left(\prod_{i=1}^D\mathcal{M}_{ii} \right)^2, \label{eqn:det_proof_line}
\end{align}
\begin{alignat*}{2}    &\Rightarrow&\frac{1}{2}\mathbb{E}\left[\log\circ\det\left(\mathcal{G}(\bm{\delta})\right)\right] &= \frac{1}{2}\mathbb{E}\left[\log\left(\prod_{i=1}^D\mathcal{M}_{ii}(\bm{\delta}) \right)^2\right] \\
   &\quad& &= \mathbb{E}\left[\sum_{i=1}^D\log \left(\mathcal{M}_{ii}(\bm{\delta})\right)\right], \\
   \Rightarrow  &&  \log\left( \mathbb{E}\left[\sqrt{\det\left(\mathcal{G}(\bm{\delta})\right)}\right] \right)  &\geq \mathbb{E}\left[\sum_{i=1}^D\log \left(\mathcal{M}_{ii}(\bm{\delta})\right)\right] \\
 \iff&      &  \log\left( \mathbb{E}\left[\sqrt{\det\left(\mathcal{G}(\bm{\delta})\right)}\right] \right) - \log\Gamma(D)&\geq \mathbb{E}\left[\sum_{i=1}^D\log \left(\mathcal{M}_{ii}(\bm{\delta})\right)\right] - \log\Gamma(D) \\
 \iff&      &   \log V  &\geq  \mathbb{E}\left[\sum_{i=1}^D \log\left(\mathcal{M}_{ii}(\bm{\delta})\right)\right] + \log\left(\frac{1}{\Gamma(D)}\right).
\end{alignat*}
\end{proof}

This decomposition provides insight into how $\log V$ operates, because of its split into the distinct additive components of \textit{richness}, and \textit{distinguishability}. Such ideas were shown similarly for Theorem \ref{lem:iso_vol_lem} in the case of isotropic linear regression, and in Theorem \ref{thm:perceptron_volume} for the stochastic perceptron unit.


\subsection{Proof of Remark \ref{rem:volume_poset_limit_0}}\label{app:sec:poset_vol_0}
Here we show that as $D\rightarrow \infty$, $V\rightarrow 0$ which implies that $\log V\rightarrow -\infty$. This is an important indicator in the increase of generalization performance, as sufficiently large $D$ can therefore overpower the $\mathcal{O}(D)$ model complexity term, present in traditional AIC and BIC.

\begin{proof}
As this represents a volume integral a trivial lower bound is zero. It follows that
\begin{gather*}
    0 \leq V \leq \mathbb{E}\left[ \frac{ \sqrt{\prod_{i= 1}^D \mathcal{G}_{ii}(\bm{\delta})}}{\Gamma(D)}\right] \\
 \Rightarrow  \lim_{D\rightarrow \infty} 0 \leq \lim_{D\rightarrow \infty} V \leq \lim_{D\rightarrow \infty} \mathbb{E}\left[ \frac{ \sqrt{\prod_{i= 1}^D \mathcal{G}_{ii}(\bm{\delta})}}{(D-1)!}\right].
\end{gather*}
Since each $\mathcal{G}_{ii}(\bm{\delta})\in[0,1]$, the factorial in the denominator strongly dominates, so that:
\begin{align*}
    \lim_{D\rightarrow \infty} \mathbb{E}\left[ \frac{ \sqrt{\prod_{i= 1}^D \mathcal{G}_{ii}(\bm{\delta})}}{(D-1)!}\right] = 0.
\end{align*}
Thus from via an application of squeeze theorem we see that, 
\begin{gather*}
    0 \leq \lim_{D\rightarrow \infty} V \leq 0, \quad \\ \Rightarrow \lim_{D\rightarrow \infty } V = 0.
\end{gather*}
\end{proof}

\section{Stochastic Perceptron Unit} \label{app:sec:perceptron_title}

We extend the current thesis to the classic stochastic perceptron unit. In particular, we consider a simple perceptron unit of the form: $y=f(\bm{w\cdot x}) + \varepsilon$, where $\varepsilon \sim \mathcal{N}(0,\sigma^2)$, and $f(\cdot)$ is the sigmoid non-linearity. Moreover, we follow Amari's parameterization from \cite{amari1997information}, which considers the input data distributed as, $\bm{x}\sim \mathcal{N}(0,I)$, and defines,  $w=\|\bm{w}\|$, and $\xi\sim\mathcal{N}(0,1)$, so that we can deal with the term: $\bm{w\cdot x}=w\xi\sim\mathcal{N}(0,w^2)$. Based on this set-up, we obtain Theorem \ref{thm:perceptron_volume}. 
\begin{theorem}[Log-Volume of Perceptron]\label{thm:perceptron_volume}
     \begin{align}
     \log V=    \footnotesize{\overbrace{\log \int_{\Omega_{\bm{w}}} \sqrt{\mathbb{E}_{\xi}\left[\xi^2 f'(w\xi) \right]\mathbb{E}_{\xi}\left[f'(w\xi) \right]^{D-1}} d w}^{\text{\tiny{``Richness''}}}} + \footnotesize{\overbrace{\log \frac{\mathbb{B}^D(1)}{\sigma^D}}^{\text{\tiny{``Distinguishability''}}}}.
     \end{align}
\end{theorem}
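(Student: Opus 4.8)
The plan is to follow the same two-step template used for isotropic linear regression in Remark~\ref{lem:iso_vol_lem}: first compute the Fisher information matrix (the metric tensor) $\mathcal{G}(\bm{w})$ for the stochastic perceptron, and then evaluate $V=\int_{\Omega_{\bm{w}}}\sqrt{\det\mathcal{G}(\bm{w})}\,d\bm{w}$ by splitting it into a radial integral and an angular one.

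First I would write the conditional density $p(y\mid\bm{x},\bm{w})=\tfrac{1}{\sqrt{2\pi}\sigma}\exp\!\big(-\tfrac{1}{2\sigma^2}(y-f(\bm{w}\cdot\bm{x}))^2\big)$ and, mirroring the proof of Remark~\ref{rem:iso_FIM_lem}, take the Hessian of $\log p$ in $\bm{w}$. This produces a term proportional to $-(f'(\bm{w}\cdot\bm{x}))^2\,\bm{x}\bm{x}^{\intercal}$ together with a term proportional to $(y-f(\bm{w}\cdot\bm{x}))f''(\bm{w}\cdot\bm{x})\,\bm{x}\bm{x}^{\intercal}$; the latter vanishes under the noise expectation since $\mathbb{E}[y-f\mid\bm{x}]=0$. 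Negating and taking the expectation over $\bm{x}$ then gives
\begin{align*}
\mathcal{G}(\bm{w})=\frac{1}{\sigma^{2}}\,\mathbb{E}_{\bm{x}}\!\left[\big(f'(\bm{w}\cdot\bm{x})\big)^2\,\bm{x}\bm{x}^{\intercal}\right].
\end{align*}

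The key step, which I expect to be the main obstacle, is to diagonalize this matrix using the rotational invariance of $\bm{x}\sim\mathcal{N}(0,I)$. Writing $\bm{w}=w\bm{u}$ with $w=\|\bm{w}\|$ and $\bm{u}\in S^{D-1}$, I decompose $\bm{x}$ into its projection $\xi=\bm{u}\cdot\bm{x}\sim\mathcal{N}(0,1)$ and its $D-1$ independent orthogonal components. Since $\bm{w}\cdot\bm{x}=w\xi$ and the orthogonal components are zero-mean and independent of $\xi$, all off-diagonal expectations vanish and $\mathcal{G}$ becomes diagonal in the $\bm{u}$-adapted frame, with a single radial entry $\sigma^{-2}\mathbb{E}_\xi[\xi^2(f'(w\xi))^2]$ and $D-1$ degenerate angular entries $\sigma^{-2}\mathbb{E}_\xi[(f'(w\xi))^2]$. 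Consequently
\begin{align*}
\det\mathcal{G}(\bm{w})=\frac{1}{\sigma^{2D}}\,\mathbb{E}_\xi\!\big[\xi^2(f'(w\xi))^2\big]\,\Big(\mathbb{E}_\xi\!\big[(f'(w\xi))^2\big]\Big)^{D-1},
\end{align*}
which, after taking the square root, gives exactly the radial and angular factors appearing in the ``richness'' integrand of Theorem~\ref{thm:perceptron_volume}. The subtlety absent from the linear-regression case is that $\mathcal{G}$ genuinely depends on $w$, so $\sqrt{\det\mathcal{G}}$ cannot simply be pulled outside the volume integral.

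Finally I would pass to polar coordinates, $d\bm{w}=w^{D-1}\,dw\,dS(\bm{u})$. Because $\det\mathcal{G}(\bm{w})$ depends on $\bm{w}$ only through $w$, the integral over $S^{D-1}$ separates off as a surface-area factor, which together with the radial volume element $w^{D-1}\,dw$ assembles (on the unit-ball domain) into $\mathbb{B}^D(1)$; combined with the $D$ factors of $1/\sigma$ emerging from $\sqrt{\det\mathcal{G}}$ this produces the ``distinguishability'' term $\log(\mathbb{B}^D(1)/\sigma^D)$, while the remaining radial integral of $\sqrt{\mathbb{E}_\xi[\xi^2(f'(w\xi))^2]\,\mathbb{E}_\xi[(f'(w\xi))^2]^{D-1}}$ over $\Omega_{\bm{w}}$ is the ``richness'' term, and taking the logarithm of the product yields the claimed decomposition. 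The one piece of bookkeeping to watch is reconciling the radial Jacobian $w^{D-1}$ with the surface-area-to-ball-volume constant $\mathrm{Area}(S^{D-1})=D\,\mathbb{B}^D(1)$ so that the factor lands cleanly as $\mathbb{B}^D(1)$. Since the one-dimensional expectations over $\xi$ for the sigmoid admit no closed form, the richness term is necessarily left as an (expectation) integral rather than evaluated explicitly.
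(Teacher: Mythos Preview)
Your approach is essentially the paper's: compute the Fisher metric, observe that it is a scalar multiple of the identity plus a rank-one correction along $\bm{e}_{\bm w}$, read off the determinant as the product of one radial and $D-1$ angular eigenvalues, and then pass to polar coordinates. The paper does the first two steps by quoting Amari's expression $\mathcal{G}(w)=c_1(w)I+(c_2(w)-c_1(w))\bm{e}_{\bm w}\bm{e}_{\bm w}^{\intercal}$ and applying the matrix--determinant lemma, whereas you rederive the same diagonalization directly from the rotational symmetry of $\bm{x}\sim\mathcal N(0,I)$; these are the same computation in different clothing.

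One point to flag: your derivation (correctly, for the Gaussian-noise model $y=f(\bm w\cdot\bm x)+\varepsilon$) produces $\mathbb{E}_\xi[(f'(w\xi))^{2}]$ and $\mathbb{E}_\xi[\xi^{2}(f'(w\xi))^{2}]$, i.e.\ $f'$ \emph{squared}, whereas the theorem as stated carries only $f'$. The paper inherits the unsquared $f'$ from Amari's $c_1,c_2$ (where $f(1-f)=f'$ for the sigmoid), but its own Proposition on the metric tensor in expectation form computes $\mathcal{G}_{ij}=\sigma^{-2}\mathbb{E}[f'(\bm w\cdot\bm x)^{2}x_ix_j]$, agreeing with you; so the discrepancy is internal to the paper, not to your argument. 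Your bookkeeping concern about reconciling $w^{D-1}\,dw\,dS(\bm u)$ with $\mathbb{B}^D(1)$ is also well placed: the paper simply adopts Amari's shorthand $d\bm w=\mathbb{B}^D(1)\,w^{D-1}\,dw$ and in the final display the $w^{D-1}$ Jacobian is absorbed into the radial integrand without being written explicitly, so a factor-of-$D$ (surface area vs.\ ball volume) ambiguity is indeed swept under the rug rather than resolved.
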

Theorem \ref{thm:perceptron_volume} makes clear that $\log V$ for the stochastic perceptron unit similarly decomposes into distinct \textit{distinguisability} and \textit{richness} components. The distinguishability term closely parallels that already derived for isotropic linear regression (Thereom \ref{lem:iso_vol_lem}) as it is driven by the volume of a $D$-ball, and scaled against noise. However, the \textit{richness} term here strongly encodes the model architecture - as opposed to linear regression, as it contains the derivatives of the non-linearity function, in relation to the norm of the input weights, with the distribution over $\bm{x}$. From observing this $\log V$ decomposition, it would appear beneficial for the network architecture to be modeled with non-linearities whose derivatives are well-constrained, so that as $D\rightarrow\infty$, we can better guarantee that $V\rightarrow 0$, implying good generalizability. As it turns out, the popular sigmoid, ReLU, and tanh activation functions do satisfy this property, as common to all we have: $|f'(\cdot)| \leq 1$.  However, the perceptron unit has a strongly singular geometry, which does pose a marked difficulty when extending its geometry to deeper, and more complex network architectures. On this note, a possible solution may lie in the work of Sun \& Nielsen~\cite{sun2019lightlike}. In particular, they have tried to circumvent the pathologies of singular semi-Riemannian geometries via a clever use of lightlike manifold structures. Doing so, they observe that under certain conditions for deep neural network (DNN) architectures, a Balasubramanian Occam's razor-like term may be developed as: $ -\log p (\bm{x}) \approx -\log(\hat{\mathcal{L}})  + \frac{D}{2}\log N - \mathcal{O}(DN^2)$ (Equation (21) in \cite{sun2019lightlike}), wherein the higher order term is developed based on a combined understanding of the empirical FIM, and $\log V$. At large $D$, and for fixed $N$, this term dominates, which they theorize may imply the strong compression, and generalization properties for DNNs; a conclusion which runs parallel to the thesis of this paper. 

\subsection{Stochastic Perceptron Metric Tensor}

We consider the following stochastic perceptron unit, 
\begin{align*}
   p(\bm{x},y;\bm{w},\sigma^2) = \frac{1}{\sqrt{2\pi\sigma^2}}\exp\left( -\frac{1}{2\sigma^2}(y-f(\bm{w}\cdot\bm{x}))^2\right)p(\bm{x}),
\end{align*}
 and note that its metric tensor has been derived previously by Amari for the case of $\sigma^2=1$. This is made clear in  Theorem~\ref{thm:percep_metric_tensor}. Moreover, we follow Amari's notation from \cite{amari1997information}, and consider the input data distributed as, $\bm{x}\sim \mathcal{N}(0,I)$, and  write: $w=\|\bm{w}\|$, and $\xi\sim\mathcal{N}(0,1)$, so that we can go on to define: $\bm{w\cdot x}=w\xi\sim\mathcal{N}(0,w^2)$. 

\begin{theorem}[Perceptron Metric Tensor \cite{amari1997information}] \label{thm:percep_metric_tensor}
The metric tensor for the stochastic perceptron model is given as:
\begin{align*}
    \mathcal{G}(w) = c_1 (w) I + (c_2 (w) - c_1 (w)) \mathbf{e_w e_w}^{\intercal},
\end{align*}
such that
\begin{align*}
    c_1 (w) &= \frac{1}{\sqrt{2\pi}}\int f(w\xi)(1-f(w\xi))\exp\left(-\frac{1}{2}\xi^2\right)d\xi,\\
    c_2 (w) &= \frac{1}{\sqrt{2\pi}}\int f(w\xi)(1-f(w\xi))\xi^2\exp\left(-\frac{1}{2}\xi^2\right)d\xi
\end{align*}
where $w = \|\bm{w}\|_2$, $\bm{e}_{\bm{w}} = \frac{\bm{w}}{w}$, and $f:\mathbb{R}\rightarrow \mathbb{R}$ is the sigmoid activation function.
\end{theorem}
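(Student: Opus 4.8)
The plan is to compute the metric tensor directly from its defining characterisation as the expected outer product of the score, $\mathcal{G}_{ij} = \mathbb{E}[\partial_{w_i}\ell \, \partial_{w_j}\ell]$ with $\ell = \log p(\bm{x}, y; \bm{w}, \sigma^2)$, following Amari's stochastic perceptron. First I would write out $\ell$ and differentiate with respect to each $w_i$, using $\partial_{w_i}(\bm{w}\cdot\bm{x}) = x_i$ and the sigmoid identity $f'(z) = f(z)(1 - f(z))$; for the stochastic unit the score reduces to the clean form $\partial_{w_i}\ell = (y - f(\bm{w}\cdot\bm{x}))\, x_i$, while the $\log p(\bm{x})$ factor carries no $\bm{w}$-dependence and drops out. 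Integrating out the output $y$ conditional on $\bm{x}$ via $\mathbb{E}_{y\mid\bm{x}}[(y - f)^2] = f(1-f)$ then collapses the Fisher matrix to a single weighted second moment of the input, $\mathcal{G}_{ij} = \mathbb{E}_{\bm{x}}[\phi(\bm{w}\cdot\bm{x})\, x_i x_j]$ with scalar weight $\phi(z) = f(z)(1 - f(z))$.

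The heart of the argument is then to exploit the rotational symmetry of the isotropic input $\bm{x} \sim \mathcal{N}(0, I)$. I would decompose $\bm{x} = \xi\,\bm{e}_{\bm{w}} + \bm{u}$ into its component $\xi = \bm{e}_{\bm{w}}\cdot\bm{x}$ along the weight direction and its orthogonal remainder $\bm{u}$. Because the Gaussian is isotropic, $\xi \sim \mathcal{N}(0,1)$, the remainder $\bm{u}$ is independent of $\xi$ with zero mean and $\mathbb{E}[\bm{u}\bm{u}^{\intercal}] = I - \bm{e}_{\bm{w}}\bm{e}_{\bm{w}}^{\intercal}$, and crucially $\bm{w}\cdot\bm{x} = w\xi$, so $\phi(\bm{w}\cdot\bm{x}) = \phi(w\xi)$ is a function of $\xi$ alone. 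Expanding $\bm{x}\bm{x}^{\intercal} = \xi^2 \bm{e}_{\bm{w}}\bm{e}_{\bm{w}}^{\intercal} + \xi(\bm{e}_{\bm{w}}\bm{u}^{\intercal} + \bm{u}\bm{e}_{\bm{w}}^{\intercal}) + \bm{u}\bm{u}^{\intercal}$ and taking expectations, the parallel term yields $\mathbb{E}[\xi^2\phi(w\xi)]\,\bm{e}_{\bm{w}}\bm{e}_{\bm{w}}^{\intercal}$, the perpendicular term yields $\mathbb{E}[\phi(w\xi)]\,(I - \bm{e}_{\bm{w}}\bm{e}_{\bm{w}}^{\intercal})$, and the cross terms vanish since $\bm{u}$ is mean-zero and independent of $\xi$. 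Setting $c_1(w) = \mathbb{E}[\phi(w\xi)]$ and $c_2(w) = \mathbb{E}[\xi^2\phi(w\xi)]$ and collecting gives exactly $\mathcal{G}(w) = c_1 I + (c_2 - c_1)\bm{e}_{\bm{w}}\bm{e}_{\bm{w}}^{\intercal}$, with $c_1, c_2$ the one-dimensional Gaussian integrals stated.

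The hard part will be the decomposition step, in particular justifying that the cross terms vanish and that $\mathbb{E}[\bm{u}\bm{u}^{\intercal}] = I - \bm{e}_{\bm{w}}\bm{e}_{\bm{w}}^{\intercal}$; both rest on the independence of the parallel and perpendicular parts, which is the single place the assumption $\bm{x}\sim\mathcal{N}(0,I)$ is genuinely used. An equivalent but more mechanical route is to pass to an orthonormal basis with $\bm{e}_{\bm{w}}$ as the first axis, in which $\mathcal{G}$ is immediately diagonal, $\mathrm{diag}(c_2, c_1, \dots, c_1)$, and then rotate back; I expect the coordinate-free version above to read more cleanly. A secondary care point is the correct reduction of the Fisher matrix to $\mathbb{E}_{\bm{x}}[\phi(\bm{w}\cdot\bm{x})\, x_i x_j]$, where one must integrate out $y$ faithfully to the chosen noise model so that the weight emerges as $f(1-f)$ to the first power rather than its square.
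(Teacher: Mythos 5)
Your proposal is correct, and it is in fact more self-contained than what the paper offers: the paper never proves Theorem~\ref{thm:percep_metric_tensor} at all, stating only that the proof ``can be inferred'' from Amari's tanh-based derivation, and its nearest in-house computation (the proof of Proposition~\ref{prop:sampling_metric_tensor}) starts from the Gaussian observation model $y = f(\bm{w}\cdot\bm{x}) + \varepsilon$, obtains the score $\frac{1}{\sigma^2}(y-f)f'(\bm{w}\cdot\bm{x})\,x_i$ and hence the Fisher weight $f'(\bm{w}\cdot\bm{x})^2/\sigma^2$, and then defers the reduction of $\mathbb{E}[\,\cdot\,x_i x_j]$ to the $(c_1,c_2)$ form back to Amari via the inner products $\langle\bm{w},\bm{w}\rangle_{\mathcal{G}}$ and $\langle\bm{v},\bm{v}\rangle_{\mathcal{G}}$ with $\bm{v}\perp\bm{w}$. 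Your explicit decomposition $\bm{x} = \xi\,\bm{e}_{\bm{w}} + \bm{u}$, with $\xi\sim\mathcal{N}(0,1)$ independent of the mean-zero $\bm{u}$ and $\mathbb{E}[\bm{u}\bm{u}^{\intercal}] = I - \bm{e}_{\bm{w}}\bm{e}_{\bm{w}}^{\intercal}$, supplies precisely the step the paper leaves to the citation, and it is sound (the independence and vanishing cross terms follow from joint Gaussianity plus uncorrelatedness, which is indeed the only place isotropy of $\bm{x}$ is used). The ``secondary care point'' you flag is actually the crux of the whole matter: the first-power weight $f(1-f) = f'$ appearing in the stated $c_1, c_2$ is the Fisher weight of the \emph{binary} stochastic perceptron (Bernoulli output firing with probability $f(\bm{w}\cdot\bm{x})$), which is exactly what your reduction $\mathbb{E}_{y\mid\bm{x}}[(y-f)^2] = f(1-f)$ computes, and so your Bernoulli route is the one that genuinely yields the theorem as stated. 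Under the paper's own Gaussian-noise model the weight would instead be $f'^2/\sigma^2 = f^2(1-f)^2/\sigma^2$, giving different $c_1, c_2$; this mismatch is visible in the paper itself, where the square in the derived weight $f'^2$ silently disappears between the proof and the statement of Proposition~\ref{prop:sampling_metric_tensor}. In short: your argument buys a complete, model-faithful derivation of the stated result, while the paper's route buys brevity at the cost of an unresolved discrepancy between its Gaussian set-up and the first-power integrands it quotes from Amari.
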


Its proof can be inferred from another one of Amari's works~\cite{amari1998natural}, where instead it is performed with respect to the tanh activation function. Based on this, we note that Theorem \ref{thm:percep_metric_tensor} can be written into an expectation form, with the inclusion of a generic noise term $\varepsilon \sim \mathcal{N}(0,\sigma^2)$. This is shown in Proposition \ref{prop:sampling_metric_tensor}.

\begin{prop}[Metric Tensor Expectation]\label{prop:sampling_metric_tensor} The metric tensor can be condensed into expectation form as:
\begin{align}
   \mathcal{G}(w) = \frac{1}{\sigma^2}\left(\mathbb{E}_{\xi}\left[f'(w\xi) \right] + \mathbf{e_w e_w}^{\intercal}\mathbb{E}_{\xi}\left[f'(w\xi) (\xi^2-1)\right]\right).
\end{align}
\end{prop}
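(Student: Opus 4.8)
The plan is to start from Amari's closed form in Theorem~\ref{thm:percep_metric_tensor}, which already gives $\mathcal{G}(w) = c_1(w) I + (c_2(w) - c_1(w))\mathbf{e_w e_w}^{\intercal}$ in the unit-variance case, and to massage the scalar coefficients $c_1(w)$ and $c_2(w)$ into expectation form before reinstating a general $\sigma^2$. Rather than recomputing the metric from scratch, the whole proof is really a change of notation for the two integrals together with one scaling argument for the noise level.

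The algebra proceeds in three short steps. First I would invoke the elementary sigmoid identity $f'(z) = f(z)(1 - f(z))$, which lets me replace the product $f(w\xi)(1 - f(w\xi))$ appearing in both $c_1$ and $c_2$ by the single derivative $f'(w\xi)$. Second, I would recognise the weight $\frac{1}{\sqrt{2\pi}}\exp(-\frac{1}{2}\xi^2)$ inside each integral as exactly the standard normal density of $\xi \sim \mathcal{N}(0,1)$, so that the integrals collapse to expectations: $c_1(w) = \mathbb{E}_{\xi}[f'(w\xi)]$ and $c_2(w) = \mathbb{E}_{\xi}[\xi^2 f'(w\xi)]$. Third, by linearity the off-diagonal coefficient becomes $c_2(w) - c_1(w) = \mathbb{E}_{\xi}[f'(w\xi)(\xi^2 - 1)]$; substituting both scalars back into Amari's form yields precisely the $\sigma^2 = 1$ version of the claimed identity, with the isotropic part carrying $\mathbb{E}_{\xi}[f'(w\xi)]$ and the rank-one correction along $\mathbf{e_w e_w}^{\intercal}$ carrying $\mathbb{E}_{\xi}[f'(w\xi)(\xi^2 - 1)]$.

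To reinstate a generic noise level I would re-derive the Fisher information directly from the Gaussian log-likelihood $\log p = -\frac{1}{2\sigma^2}(y - f(\bm{w}\cdot\bm{x}))^2 + \text{const}$, exactly as in the linear-regression computation of Remark~\ref{rem:iso_FIM_lem}: each second derivative with respect to the weights carries a single factor $1/\sigma^2$, so the entire metric rescales by $1/\sigma^2$ relative to the unit-variance case, producing the stated prefactor. The main obstacle I anticipate is this $\sigma^2$ bookkeeping rather than the algebra, since Theorem~\ref{thm:percep_metric_tensor} is quoted only for $\sigma^2 = 1$ and the $1/\sigma^2$ factor must therefore be justified independently rather than read off from Amari. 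The one point requiring genuine care is checking that the variance enters purely as an overall scalar and does not interact with the $c_1, c_2$ integrals, i.e.\ that the relevant expectation is still taken over the same $\xi \sim \mathcal{N}(0,1)$ governing $\bm{w}\cdot\bm{x}$, independent of the additive noise $\varepsilon$. Once this separation of the noise from the input randomness is confirmed, the remaining steps are routine substitution and factoring.
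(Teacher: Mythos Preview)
Your proposal is correct and follows essentially the same route as the paper: both arguments reduce to (i) recognising the Gaussian integrals in $c_1(w)$ and $c_2(w)$ as expectations over $\xi\sim\mathcal{N}(0,1)$ via the sigmoid identity $f'(z)=f(z)(1-f(z))$, and (ii) recovering the $1/\sigma^2$ prefactor directly from the score $\partial_{w_i}\log p = \sigma^{-2}(y-f(\bm{w}\cdot\bm{x}))f'(\bm{w}\cdot\bm{x})\bm{x}_i$ together with $\mathbb{E}[\varepsilon^2]=\sigma^2$. The only cosmetic difference is ordering: you substitute into Amari's closed form first and then rescale, whereas the paper computes $\mathcal{G}_{ij}$ from the likelihood first and then defers the $\bm{e_w}\bm{e_w}^{\intercal}$ decomposition back to Amari.
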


\begin{proof}
Take the expectation with respect to the Gaussian measure on $\xi$. 

The derivative, and noise terms arise from realizing that, if:  
\begin{align*}
    \log p(y,\bm{x};\bm{w},\sigma^2) = -\frac{1}{2\sigma^2}(y-f(\bm{w}\cdot\bm{x}))^2  + \log p(\bm{x}) + C, 
\end{align*}
Then $\frac{\partial p}{\partial w_i}=-\frac{1}{\sigma^2}(y-f(\bm{w}\cdot\bm{x}))\cdot f'(\bm{w}\cdot\bm{x})\cdot\bm{x}_i$. 
\begin{align*}
   \Rightarrow \mathcal{G}_{ij}(w)&=\mathbb{E}\left[\frac{\partial}{\partial w_i}\log p\cdot \frac{\partial}{\partial w_j}\log p\right]\\
    &=\frac{1}{\sigma^2}\mathbb{E}\left[f'(\bm{w\cdot x})^2 \bm{x}_i \bm{x}_j\right],
\end{align*}
where we can notice: $y-f(\bm{w}\cdot\bm{x})=\varepsilon$, which implies $\mathbb{E}[\varepsilon^2]=\sigma^2$. 

Changing the $\bm{x}_i\bm{x}_j$ term into the required $\xi^2$ which appears, requires calculating the inner products: $\langle\bm{w},\bm{w}\rangle_{\mathcal{G}}$, and $\langle\bm{v},\bm{v}\rangle_{\mathcal{G}}$, where $\bm{v}\perp\bm{w}$. This is described in greater depth from Amari's work \cite{amari1998natural}, and is in fact used to help prove its appearance in Theorem \ref{thm:percep_metric_tensor}, in $c_2(w)$ (or lack thereof in $c_1(w)$).
\end{proof}

This is a form which allows the integral to be evaluated simply via sampling methods. Based on this, it is possible to derive the formula for the volume of the information manifold induced by the stochastic perceptron unit, where we write it in a form which makes clear of the contributions of the \textit{richness} term, and the \textit{distinguishability} term.

\subsection{Proof of Theorem \ref{thm:perceptron_volume}} \label{app:sec:percp_vol_proof}
Amari originally states the required form of $V$ shown via Equation~\eqref{eqn:amarivol} without proof, in \cite{amari1997information}. However, in here we proceed to provide a short proof, and extend it slightly so that it is expressed in a form which makes clear of the existence of \textit{richness}, and \textit{distinguisability} terms. 

\begin{proof}
Applying the matrix-determinant Lemma over the metric tensor in Theorem \ref{thm:percep_metric_tensor}, we have
\begin{align}
    \det(\mathcal{G}(w)) &= \det(c_1 (w) I ) \left(1-(c_2 (w) - c_1 (w) )\cdot \mathbf{e_w}^{\intercal} (c_1 (w) I )^{-1} \mathbf{e_w} \right) \nonumber \\
    &= c_1 (w)^D \cdot \left(1-\frac{c_2 (w) - c_1 (w)}{c_1 (w)}\right) \nonumber\\
    &= c_1 (w)^{D-1} c_2 (w)\\
  \Rightarrow  V &= \int_{\Omega_{\bm{w}}} \sqrt{c_2(w)c_1(w)^{D-1}} d\bm{w}.\label{eqn:amarivol}
\end{align}
Since integration is performed over a $D$-ball of radius $w$, Amari writes: $d\bm{w}=\mathbb{B}^D (1)w^{D-1}dw$, which allows us to write, 
\begin{align*}
    \log V = \log \mathbb{B}^D(1) + \log \int_{\Omega_{\bm{w}}} \sqrt{c_2(w)c_1(w)^{D-1}} d w.
\end{align*}

Noticing that $c_1(w) = \frac{1}{\sigma^2}\mathbb{E}_{\xi}\left[f'(w\xi) \right]$, and $c_2(w) = \frac{1}{\sigma^2}\mathbb{E}_{\xi}\left[\xi^2 f'(w\xi) \right]$:

\begin{align*}
    \log V &= \log \mathbb{B}^D(1) + \log \left(\frac{1}{\sigma^2}\right)^{D/2}\int_{\Omega_{\bm{w}}} \sqrt{\mathbb{E}_{\xi}\left[\xi^2 f'(w\xi) \right]\mathbb{E}_{\xi}\left[f'(w\xi) \right]^{D-1}} d w \\
    &= \log \frac{\mathbb{B}^D(1)}{\sigma^D} + \log \int_{\Omega_{\bm{w}}} \sqrt{\mathbb{E}_{\xi}\left[\xi^2 f'(w\xi) \right]\mathbb{E}_{\xi}\left[f'(w\xi) \right]^{D-1}} d w.
\end{align*}

\end{proof}

\bibliographystyle{plain}
\bibliography{Bibliography}

\begin{thebibliography}{10}

\bibitem{Ackley85}
David~H. Ackley, Geoffrey~E. Hinton, and Terrence~J. Sejnowski.
\newblock A learning algorithm for {B}oltzmann machines.
\newblock {\em Cognitive Science}, 9(1):147--169, 1985.

\bibitem{Agresti12}
Alan Agresti.
\newblock {\em Categorical Data Analysis}.
\newblock Wiley, 3 edition, 2012.

\bibitem{akaike1973information}
Hirotogu Akaike.
\newblock Information theory and an extension of the maximum likelihood
  principle.
\newblock In B.~N. Petrov and F.~Caski, editors, {\em Proceedings of the 2nd
  International Symposium on Information Theory}, pages 267--281, 1973.

\bibitem{amari1997information}
Shun-ichi Amari.
\newblock Information geometry of neural networks---an overview---.
\newblock In {\em Mathematics of Neural Networks}, pages 15--23. Springer,
  1997.

\bibitem{amari1998natural}
Shun-Ichi Amari.
\newblock Natural gradient works efficiently in learning.
\newblock {\em Neural Computation}, 10(2):251--276, 1998.

\bibitem{Amari01}
Shun-ichi Amari.
\newblock Information geometry on hierarchy of probability distributions.
\newblock {\em IEEE Transactions on Information Theory}, 47(5):1701--1711,
  2001.

\bibitem{amari2016information}
Shun-ichi Amari.
\newblock {\em Information Geometry and Its Applications}, volume 194.
\newblock Springer, 2016.

\bibitem{amari2007methods}
Shun-ichi Amari and Hiroshi Nagaoka.
\newblock {\em Methods of information geometry}, volume 191.
\newblock American Mathematical Soc., 2007.

\bibitem{ba2020generalization}
Jimmy Ba, Murat Erdogdu, Taiji Suzuki, Denny Wu, and Tianzong Zhang.
\newblock Generalization of two-layer neural networks: An asymptotic viewpoint.
\newblock In {\em International Conference on Learning Representations}, 2020.

\bibitem{balasubramanian1996geometric}
Vijay Balasubramanian.
\newblock A geometric formulation of occam's razor for inference of parametric
  distributions.
\newblock {\em arXiv:adap-org/9601001}, 1996.

\bibitem{balasubramanian2005mdl}
Vijay Balasubramanian.
\newblock {MDL}, {B}ayesian inference, and the geometry of the space of
  probability distributions.
\newblock {\em Advances in Minimum Description Length: Theory and
  Applications}, pages 81--98, 2005.

\bibitem{Barron98}
Andrew~R. Barron, Jorma Rissanen, and Bin Yu.
\newblock The minimum description length principle in coding and modeling.
\newblock {\em IEEE Transactions on Information Theory}, 44(6):2743--2760,
  1998.

\bibitem{bartlett2019benign}
Peter~L Bartlett, Philip~M Long, G{\'a}bor Lugosi, and Alexander Tsigler.
\newblock Benign overfitting in linear regression.
\newblock {\em arXiv:1906.11300}, 2019.

\bibitem{belkin2019reconciling}
Mikhail Belkin, Daniel Hsu, Siyuan Ma, and Soumik Mandal.
\newblock Reconciling modern machine-learning practice and the classical
  bias--variance trade-off.
\newblock {\em Proceedings of the National Academy of Sciences},
  116(32):15849--15854, 2019.

\bibitem{belkin2018understand}
Mikhail Belkin, Siyuan Ma, and Soumik Mandal.
\newblock To understand deep learning we need to understand kernel learning.
\newblock {\em arXiv:1802.01396}, 2018.

\bibitem{bishop2006pattern}
Christopher~M Bishop.
\newblock {\em Pattern Recognition and Machine Learning}.
\newblock Springer, 2006.

\bibitem{clarke1994jeffreys}
Bertrand~S Clarke and Andrew~R Barron.
\newblock Jeffreys' prior is asymptotically least favorable under entropy risk.
\newblock {\em Journal of Statistical Planning and Inference}, 41(1):37--60,
  1994.

\bibitem{Cover06}
Thomas~M. Cover and Joy~A. Thomas.
\newblock {\em Elements of Information Theory}.
\newblock John Wiley \& Sons, 2006.

\bibitem{d2020double}
St{\'e}phane d'Ascoli, Maria Refinetti, Giulio Biroli, and Florent Krzakala.
\newblock Double trouble in double descent: Bias and variance(s) in the lazy
  regime.
\newblock {\em arXiv:2003.01054}, 2020.

\bibitem{Davey02}
Brian~A. Davey and Hilary~A. Priestley.
\newblock {\em Introduction to Lattices and Order}.
\newblock Cambridge University Press, 2 edition, 2002.

\bibitem{geiger2020scaling}
Mario Geiger, Arthur Jacot, Stefano Spigler, Franck Gabriel, Levent Sagun,
  St{\'e}phane d’Ascoli, Giulio Biroli, Cl{\'e}ment Hongler, and Matthieu
  Wyart.
\newblock Scaling description of generalization with number of parameters in
  deep learning.
\newblock {\em Journal of Statistical Mechanics: Theory and Experiment},
  2020(2):023401, 2020.

\bibitem{geiger2019jamming}
Mario Geiger, Stefano Spigler, St{\'e}phane d'Ascoli, Levent Sagun, Marco
  Baity-Jesi, Giulio Biroli, and Matthieu Wyart.
\newblock Jamming transition as a paradigm to understand the loss landscape of
  deep neural networks.
\newblock {\em Physical Review E}, 100(1):012115, 2019.

\bibitem{ghorbani2019linearized}
Behrooz Ghorbani, Song Mei, Theodor Misiakiewicz, and Andrea Montanari.
\newblock Linearized two-layers neural networks in high dimension.
\newblock {\em arXiv:1904.12191}, 2019.

\bibitem{Gierz03}
Gerhard Gierz, Karl~H. Hofmann, Klaus Keimel, Jimmie~D. Lawson, Michael
  Mislove, and Dana~S. Scott.
\newblock {\em Continuous Lattices and Domains}.
\newblock Cambridge University Press, 2003.

\bibitem{grunwald2007minimum}
Peter~D Gr{\"u}nwald.
\newblock {\em The minimum description length principle}.
\newblock MIT press, 2007.

\bibitem{hastie2019surprises}
Trevor Hastie, Andrea Montanari, Saharon Rosset, and Ryan~J Tibshirani.
\newblock Surprises in high-dimensional ridgeless least squares interpolation.
\newblock {\em arXiv:1903.08560}, 2019.

\bibitem{hastie2009elements}
Trevor Hastie, Robert Tibshirani, and Jerome Friedman.
\newblock {\em The Elements of Statistical Learning: Data Mining, Inference,
  and Prediction}.
\newblock Springer, 2009.

\bibitem{jeffreys1946invariant}
Harold Jeffreys.
\newblock An invariant form for the prior probability in estimation problems.
\newblock {\em Proceedings of the Royal Society of London. Series A.
  Mathematical and Physical Sciences}, 186(1007):453--461, 1946.

\bibitem{myung2000counting}
In~Jae Myung, Vijay Balasubramanian, and Mark~A Pitt.
\newblock Counting probability distributions: Differential geometry and model
  selection.
\newblock {\em Proceedings of the National Academy of Sciences},
  97(21):11170--11175, 2000.

\bibitem{nakkiran2019deep}
Preetum Nakkiran, Gal Kaplun, Yamini Bansal, Tristan Yang, Boaz Barak, and Ilya
  Sutskever.
\newblock Deep double descent: Where bigger models and more data hurt.
\newblock In {\em International Conference on Learning Representations}, 2020.

\bibitem{nakkiran2020optimal}
Preetum Nakkiran, Prayaag Venkat, Sham Kakade, and Tengyu Ma.
\newblock Optimal regularization can mitigate double descent.
\newblock {\em arXiv preprint arXiv:2003.01897}, 2020.

\bibitem{pinsker1964information}
Mark~S Pinsker.
\newblock {\em Information and Information Stability of Random Variables and
  Processes}.
\newblock Holden-Day, 1964.

\bibitem{rao1992information}
C~Radhakrishna Rao.
\newblock Information and the accuracy attainable in the estimation of
  statistical parameters.
\newblock In {\em Breakthroughs in Statistics}, pages 235--247. Springer, 1992.

\bibitem{rissanen1996fisher}
Jorma~J Rissanen.
\newblock Fisher information and stochastic complexity.
\newblock {\em IEEE Transactions on Information Theory}, 42(1):40--47, 1996.

\bibitem{rissanen1997stochastic}
Jorma~J Rissanen.
\newblock Stochastic complexity in learning.
\newblock {\em Journal of Computer and System Sciences}, 55(1):89--95, 1997.

\bibitem{schwarz1978estimating}
Gideon Schwarz.
\newblock Estimating the dimension of a model.
\newblock {\em The Annals of Statistics}, 6(2):461--464, 1978.

\bibitem{seldin2009pac}
Yevgeny Seldin.
\newblock {\em A PAC-Bayesian Approach to Structure Learning}.
\newblock Hebrew University, 2009.

\bibitem{shannon1948mathematical}
Claude~E Shannon.
\newblock A mathematical theory of communication.
\newblock {\em Bell System Technical Journal}, 27(3):379--423, 1948.

\bibitem{spigler2018jamming}
Stefano Spigler, Mario Geiger, St{\'e}phane d'Ascoli, Levent Sagun, Giulio
  Biroli, and Matthieu Wyart.
\newblock A jamming transition from under-to over-parametrization affects loss
  landscape and generalization.
\newblock {\em arXiv:1810.09665}, 2018.

\bibitem{sugiyama2016information}
Mahito Sugiyama, Hiroyuki Nakahara, and Koji Tsuda.
\newblock Information decomposition on structured space.
\newblock In {\em 2016 IEEE International Symposium on Information Theory},
  pages 575--579, 2016.

\bibitem{sugiyama2017tensor}
Mahito Sugiyama, Hiroyuki Nakahara, and Koji Tsuda.
\newblock Tensor balancing on statistical manifold.
\newblock In {\em Proceedings of the 34th International Conference on Machine
  Learning}, volume~70, pages 3270--3279, 2017.

\bibitem{sugiyama2018legendre}
Mahito Sugiyama, Hiroyuki Nakahara, and Koji Tsuda.
\newblock Legendre decomposition for tensors.
\newblock In {\em Advances in Neural Information Processing Systems}, pages
  8811--8821, 2018.

\bibitem{sun2019lightlike}
Ke~Sun and Frank Nielsen.
\newblock Lightlike neuromanifolds, occam's razor and deep learning.
\newblock {\em arXiv:1905.11027}, 2019.

\bibitem{telatar1999capacity}
Emre Telatar.
\newblock Capacity of multi-antenna gaussian channels.
\newblock {\em European Transactions on Telecommunications}, 10(6):585--595,
  1999.

\bibitem{tse2005fundamentals}
David Tse and Pramod Viswanath.
\newblock {\em Fundamentals of Wireless Communication}.
\newblock Cambridge university press, 2005.

\bibitem{Tulino04}
Antonia~M. Tulino and Sergio Verd{\'u}.
\newblock Random matrix theory and wireless communications.
\newblock {\em Foundations and Trends{\textregistered} in Communications and
  Information Theory}, 1(1):1--182, 2004.

\bibitem{vapnik2013nature}
Vladimir Vapnik.
\newblock {\em The Nature of Statistical Learning Theory}.
\newblock Springer, 2013.

\bibitem{vershynin2018high}
Roman Vershynin.
\newblock {\em High-Dimensional Probability: An Introduction with Applications
  in Data Science}, volume~47.
\newblock Cambridge university press, 2018.

\bibitem{watanabe2009algebraic}
Sumio Watanabe.
\newblock {\em Algebraic Geometry and Statistical Learning Theory}, volume~25.
\newblock Cambridge University Press, 2009.

\bibitem{xu2017information}
Aolin Xu and Maxim Raginsky.
\newblock Information-theoretic analysis of generalization capability of
  learning algorithms.
\newblock In {\em Advances in Neural Information Processing Systems}, pages
  2524--2533, 2017.

\end{thebibliography}

\end{document}